%% file: main.tex
\PassOptionsToPackage{dvipsnames,table}{xcolor}
\documentclass{openwam}

\usepackage[utf8]{inputenc}
\usepackage{array}
\usepackage{ragged2e}
\usepackage{tabularx}
\usepackage{wrapfig}
\usepackage{amsmath,amssymb}
\usepackage{pifont}
\usepackage{amsthm}
\usepackage[colorinlistoftodos]{todonotes}
\usepackage{enumitem}

\let\Letter\relax

\usepackage{marvosym}
\input{wing-style}
\newif\ifoverleaffast
\overleaffastfalse

\definecolor{orange}{HTML}{E8925A}
\definecolor{green}{HTML}{6CA973}
\definecolor{blue}{HTML}{6B9AC4}
\definecolor{gray}{HTML}{808080}
\definecolor{red}{HTML}{D9534F}
\definecolor{deep_purple}{HTML}{845EC2}
\definecolor{deep_red}{HTML}{CD8899}
\definecolor{gain_green}{HTML}{45A049}
\newcolumntype{Y}{>{\RaggedRight\arraybackslash}X}

\input{math_commands.tex}

\newtheorem{proposition}{Proposition}

\newtcolorbox{theorembox}{
  enhanced,
  breakable,
  colback=blue!20!cyan!10!white!50,
  colframe=gray!100!black,
  arc=1mm,
  boxrule=0.8pt,
  left=6pt,
  right=6pt,
  top=4pt,
  bottom=4pt,
  before skip=6pt,
  after skip=6pt
}

\newenvironment{takeaway}{
  \begin{theorembox}
}{
  \end{theorembox}
}

\title{\wingtitleicon \winggradient{WING}: \winggradient{W}orld Action Learning via \winggradient{IN}teraction-Centric Spectral Latent \winggradient{G}uidance}

\author[\dagger]{Zhiming~Liu}
\author[\dagger]{Yikun~Miao}
\author[\dagger]{Ying~Chen}
\author{Hongrui~Yin}
\author{Fangqi~Zhu}
\author{Xiaoyi~Pang}
\author{Quanxin~Shou}
\author{Zhengyang~Yan}
\author{Haodong~Wang}
\author[\wingcorrespondencemark]{Song~Guo}

\affiliation{The Hong Kong University of Science and Technology\\
  Hong Kong SAR, China}

\contribution[\dagger]{Equal contribution}
\correspondence{\email{songguo@cse.ust.hk}}
\homepage{\href{https://mikuz12.github.io/wing/}{\textcolor{wingprojectpink}{\textrm{https://mikuz12.github.io/wing/}}}}

\abstract{\input{text/0_abstract}}

\begin{document}

\maketitle
\input{text/1_introduction}

\input{text/related_work_main}
\input{text/method}

\vspace{1em}
\input{text/6_experiments}

\input{text/7_conclusion}

{\small
\bibliographystyle{arxiv-numbered}
\bibliography{ref}
}

\ifoverleaffast
\else
  \clearpage
  \appendix
  \input{text/appendix/theory}
  \input{text/appendix/related_work}

  \input{text/appendix/a_implementation_training_details}

  \input{text/appendix/b_evaluation_details}

  \input{text/appendix/c_additional_results}

\fi

\end{document}

%% file: wing-style.tex
\usepackage{fontawesome5}
\input{wing-gradient}

\DeclareRobustCommand{\wingtitleicon}{%
  \begingroup
  \pdfimageresolution=144\relax
  \raisebox{-0.20em}{\includegraphics[height=1.28em]{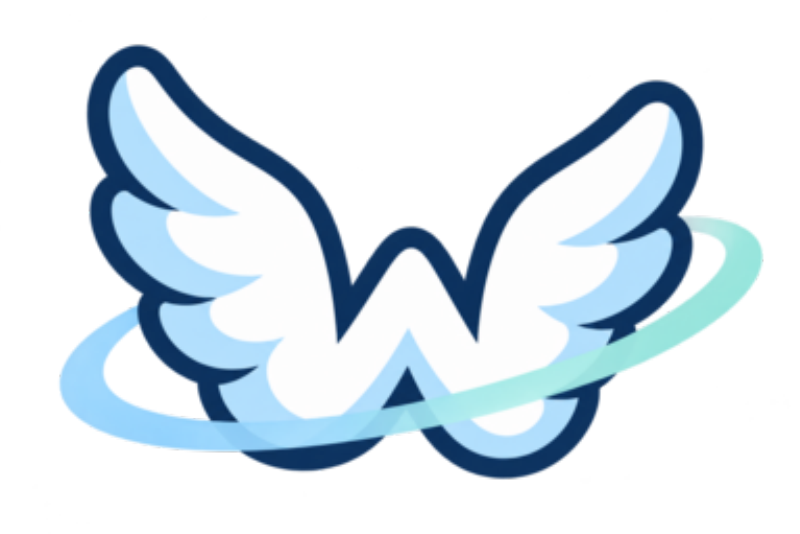}}%
  \endgroup\nobreak\hspace{0.28em}}

\colorlet{wingcite}{MidnightBlue}
\definecolor{wingrefred}{HTML}{FF0000}
\definecolor{wingprojectpink}{HTML}{D63384}
\definecolor{wamtitle}{HTML}{000000}
\definecolor{wambg}{HTML}{EAF2FF}

\DeclareRobustCommand{\wingmetadataicon}[1]{%
  \textcolor{black}{\makebox[1.25em][c]{#1}}}
\newcommand{\homepage}[1]{%
  \metadata[\wingmetadataicon{\faGlobe}~Project Page]{#1}}

\hypersetup{
  colorlinks=true,
  pdfborder={0 0 0},
  citecolor=wingcite,
  linkcolor=wingrefred,
  urlcolor=wingcite,
  pdftitle={WING: World Action Learning via Interaction-Centric Spectral Latent Guidance},
  pdfauthor={Zhiming Liu, Yikun Miao, Ying Chen, Hongrui Yin, Fangqi Zhu, Xiaoyi Pang, Quanxin Shou, Zhengyang Yan, Haodong Wang, Song Guo}
}
\setcitestyle{authoryear,round,semicolon,aysep={,},yysep={;}}

\DeclareRobustCommand{\wingcorrespondencemark}{\mbox{\scriptsize\Letter}}
\renewcommand\contribution[2][]{%
  \addtolist[#1]{#2}{\contributionlist}{\contributionformat}{\par}}
\renewcommand{\correspondence}[1]{%
  \contribution[\wingcorrespondencemark]{{\sffamily\bfseries Correspondence:} #1}}

\renewcommand\affiliation[2][]{%
  \addtolist[#1]{#2}{\affiliationlist}{\affiliationformat}{\par}}

\crefname{figure}{\textcolor{black}{Fig.}}{\textcolor{black}{Figs.}}
\Crefname{figure}{\textcolor{black}{Fig.}}{\textcolor{black}{Figs.}}
\crefname{table}{\textcolor{black}{Tab.}}{\textcolor{black}{Tabs.}}
\Crefname{table}{\textcolor{black}{Tab.}}{\textcolor{black}{Tabs.}}
\crefname{section}{\textcolor{black}{Sec.}}{\textcolor{black}{Secs.}}
\Crefname{section}{\textcolor{black}{Sec.}}{\textcolor{black}{Secs.}}
\crefname{subsection}{\textcolor{black}{Sec.}}{\textcolor{black}{Secs.}}
\Crefname{subsection}{\textcolor{black}{Sec.}}{\textcolor{black}{Secs.}}
\crefname{subsubsection}{\textcolor{black}{Sec.}}{\textcolor{black}{Secs.}}
\Crefname{subsubsection}{\textcolor{black}{Sec.}}{\textcolor{black}{Secs.}}

\newcommand{\wingheadingfont}{\sffamily\bfseries}
\titleformat*{\section}{\Large\wingheadingfont}
\titleformat*{\subsection}{\large\wingheadingfont}
\titleformat*{\subsubsection}{\normalsize\wingheadingfont}
\titleformat*{\paragraph}{\wingheadingfont}
\renewcommand{\title}[1]{\def\titlelist{{\huge\wingheadingfont\color{wamtitle}#1}}}

\patchcmd{\mymaketitle}
  {\tcbset{bottom=0.5cm}}
  {\tcbset{bottom=0.54cm}}
  {}{\PackageError{wing}{Could not tighten title-panel bottom padding}{Check openwam.cls.}}
\patchcmd{\mymaketitle}
  {\vskip 0.5cm}
  {\vskip 0.32cm}
  {}{\PackageError{wing}{Could not tighten project-page spacing}{Check openwam.cls.}}

\patchcmd{\mymaketitle}
  {\hypersetup{linkcolor=wamtitle,citecolor=wamtitle,urlcolor=wamtitle}}
  {\hypersetup{linkcolor=wingrefred,citecolor=wingcite,urlcolor=wingcite}}
  {}{\PackageError{wing}{Could not set title-panel link colors}{Check openwam.cls.}}

%% file: wing-gradient.tex
\newsavebox{\winggradienttext}
\newsavebox{\winggradientform}
\newdimen\winggradientpad
\DeclareRobustCommand{\winggradient}[1]{%
  \leavevmode\begingroup
  \sbox{\winggradienttext}{#1}%
  \pgfmathsetmacro{\owmw}{\wd\winggradienttext/1bp}%
  \pgfmathsetmacro{\owmh}{(\ht\winggradienttext+\dp\winggradienttext)/1bp}%
  \pgfmathsetmacro{\owmd}{\dp\winggradienttext/1bp}%
  \pgfmathsetmacro{\owml}{\owmw*sin(112)+\owmh*abs(cos(112))}%
  \pgfmathsetmacro{\owmxa}{\owmw/2-\owml*sin(112)/2}%
  \pgfmathsetmacro{\owmya}{\owmh/2-\owmd-\owml*cos(112)/2}%
  \pgfmathsetmacro{\owmxb}{\owmw/2+\owml*sin(112)/2}%
  \pgfmathsetmacro{\owmyb}{\owmh/2-\owmd+\owml*cos(112)/2}%
  \immediate\pdfobj{<< /ShadingType 2 /ColorSpace /DeviceRGB
    /Coords [\owmxa\space\owmya\space\owmxb\space\owmyb]
    /Function << /FunctionType 3 /Domain [0 1]
      /Functions [
        << /FunctionType 2 /Domain [0 1]
           /C0 [0.13333333 0.47843137 0.51372549]
           /C1 [0.23529412 0.40784314 0.70196078] /N 1 >>
        << /FunctionType 2 /Domain [0 1]
           /C0 [0.23529412 0.40784314 0.70196078]
           /C1 [0.47450980 0.38039216 0.67843137] /N 1 >>]
      /Bounds [0.48] /Encode [0 1 0 1] >> /Extend [true true] >>}%
  \edef\owmshading{\the\pdflastobj}%
  \winggradientpad=.2em\relax
  \setbox\winggradientform=\hbox{%
    \kern\winggradientpad
    \pdfliteral{q 7 Tr}%
    \copy\winggradienttext
    \kern-\wd\winggradienttext
    \pdfliteral{/OWMGradient sh Q}%
    \kern\wd\winggradienttext
    \kern\winggradientpad
  }%
  \ht\winggradientform=\dimexpr\ht\winggradienttext+\winggradientpad\relax%
  \dp\winggradientform=\dimexpr\dp\winggradienttext+\winggradientpad\relax%
  \immediate\pdfxform resources{/Shading << /OWMGradient \owmshading\space 0 R >>}\winggradientform
  \setbox\winggradientform=\hbox{\kern-\winggradientpad\pdfrefxform\pdflastxform\kern-\winggradientpad}%
  \wd\winggradientform=\wd\winggradienttext
  \ht\winggradientform=\ht\winggradienttext
  \dp\winggradientform=\dp\winggradienttext
  \box\winggradientform
  \endgroup
}

%% file: math_commands.tex
\usepackage{amsmath,amsfonts,bm}

\def\eqref#1{equation~\ref{#1}}

\def\1{\bm{1}}

\DeclareMathAlphabet{\mathsfit}{\encodingdefault}{\sfdefault}{m}{sl}
\SetMathAlphabet{\mathsfit}{bold}{\encodingdefault}{\sfdefault}{bx}{n}



%% file: text/0_abstract.tex

    Learning general-purpose robot policies requires large-scale real-world interaction data, yet collecting robot demonstrations through teleoperation remains expensive and difficult to scale. Egocentric videos provide a rich source of human interaction experience that shares task-relevant semantics with robotic manipulation, creating an opportunity to align human and robot actions in a shared latent action space for cross-embodiment knowledge transfer. However, existing latent action approaches typically infer actions through reconstruction between consecutive frames, which are not inherently interaction-centric and can be dominated by nuisance variations, such as ego-camera motion. Moreover, although human interactions and robot actions share interaction semantics, they often exhibit substantially different temporal dynamics, making direct transfer to robot policies challenging.
    To address these challenges, we propose \textbf{WING} (\textbf{W}orld Action Learning via \textbf{IN}teraction-Centric Spectral Latent \textbf{G}uidance), a framework for transferring interaction knowledge from egocentric videos to robot policies. First, we introduce an interaction--motion decoupling mechanism that separates interaction-relevant signals from task-irrelevant motion and selectively distills the interaction-centric components into latent actions. Second, motivated by the observation that cross-embodiment task semantics are primarily encoded in slowly varying temporal structures, we identify shared components between egocentric latent actions and robot behaviors in the spectral domain and use them as guidance for action generation at inference time.
    With these designs, WING achieves average success rates of 99.20\% on LIBERO, 93.80\% on RoboTwin 2.0, and 57.7\% on RoboCasa--GR1, while also demonstrating strong performance across four real-world tasks under diverse generalization settings. These results demonstrate that WING can effectively distill embodied interaction knowledge from large-scale egocentric videos and transfer it to robot control, providing a scalable pathway for acquiring physical interaction knowledge from human experience.

%% file: text/1_introduction.tex
\section{Introduction}
\label{sec:introduction}

Building generalist robot models that can reliably operate across diverse real-world environments requires large-scale and diverse training data~\citep{pi_0.5,openvla,lbva,dreamzero,halo,behaviorvla}. Directly collecting such data through robot teleoperation remains costly and difficult to scale. Human videos~\citep{humannet,sth2sth,holoassist,egoverse,ego4d}, in contrast, contain abundant embodied interactions across diverse tasks and environments, offering a potentially scalable source of experience for robot learning. \textit{Egocentric videos}~\citep{egodex,egoverse,egoscale,ego4d,holoassist,epic-kitchens} are particularly appealing in this context, as they capture rich human-object interactions from a first-person perspective that closely resembles the visual observations available to robots.

Transferring human experiences to robot control remains challenging because egocentric videos lack direct and generalizable action supervision. Explicit annotations such as hand poses are often tightly coupled to human morphology and kinematics, limiting their ability to provide transferable supervision for robot action learning. \emph{Latent action learning}~\citep{univla,lapa,cdlam,dreamdojo} offers a more flexible alternative by inferring action-like representations directly from visual transitions and encoding them in a latent space, without requiring explicit action labels. However, extracting \emph{transferable interaction knowledge} from visual transitions is fundamentally different from merely explaining how images change over time. Existing latent action models are typically trained to reconstruct future observations, encouraging them to capture any visually predictive variation rather than selectively modeling interaction-relevant dynamics. This leads to a central question: \textbf{\emph{How can latent action learning extract interaction-centric knowledge from large-scale egocentric videos for transferable robot learning?}}

We first examine the sources of nuisance factors in latent actions extracted from egocentric videos. We observe that a major source stems from \textbf{task-irrelevant observer motion}. Unlike robot manipulation settings where the head camera is typically mounted on the robot and remains stable, human-captured egocentric videos often exhibit substantial viewpoint changes caused by rapid head and body movements. These camera-induced variations can become entangled with interaction-related dynamics and even dominate the learned latent representations~\citep{motion,whylatentfail}. However, transferable information primarily lies in the underlying hand-object interactions rather than task-irrelevant motion. This motivates us to design a latent action model that can disentangle interaction dynamics from observer-induced variations~\citep{cdlam,activemimic}.

Moreover, we also need to account for differences in \textbf{execution dynamics} between human and robot interactions. Even when meaningful interaction structures are captured, human and robot behaviors can differ in how the same interaction is executed due to differences in embodiment, execution speed, and control granularity. Such discrepancies are particularly pronounced in fine-grained temporal patterns, which can vary substantially even when the underlying interaction semantics are shared. We therefore analyze the temporal structures of human egocentric interactions and robot actions in the frequency domain. Our analysis reveals substantially stronger correspondence in the low-frequency components of semantically matched human-robot interaction pairs. Motivated by this observation, we treat the low-frequency components of latent actions as transferable interaction information and use them as guidance for robot policy learning and action generation.

\begin{figure}[t]
    \centering
    \includegraphics[width=1\linewidth]{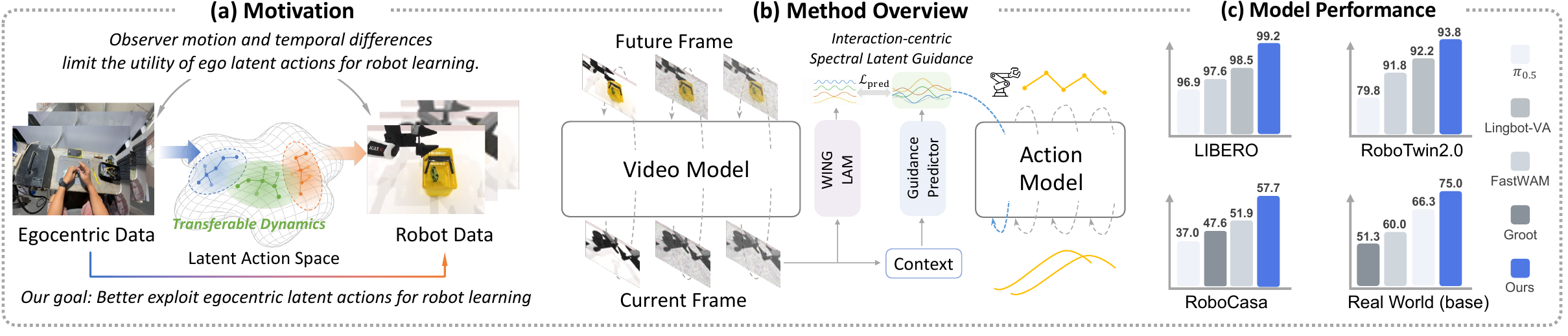}
    \caption{
(a) \textbf{Motivation:}
Egocentric videos provide rich interaction experience, but transferring this knowledge to robot policies remains challenging. We aim to extract transferable latent actions from egocentric data for robot learning.
(b) \textbf{Method:}
WING learns interaction-centric latent actions, extracts low-frequency temporal
structure in the frequency domain, and uses it as guidance for robot action
generation.
(c) \textbf{Performance:}
WING consistently achieves strong performance among the compared methods across simulation and real-world benchmarks.
}

    \label{fig:placeholder}
\end{figure}

Motivated by these observations, we introduce \textbf{WING} (\textbf{W}orld Action Learning via \textbf{IN}teraction-Centric Spectral Latent \textbf{G}uidance), a framework for extracting transferable interaction dynamics from large-scale egocentric videos and leveraging them for robot learning. WING first learns interaction-centric latent actions by suppressing sensitivity to observer motion while preserving interaction-relevant changes. Specifically, we decompose visual transitions into global observer-induced motion and local interaction-related motion, and selectively distill the latter into the latent action representation. WING then decomposes latent action trajectories in the frequency domain using the discrete cosine transform (DCT)~\citep{dct} and retains low-frequency components that exhibit more consistent structure across human and robot executions. At inference time, a lightweight predictor estimates this spectral guidance from the current observation and task context, which is then used to condition robot action generation.

We conduct extensive evaluations of both the representations learned by WING-LAM and the downstream performance of the complete WING framework. Our analysis shows that WING-LAM is substantially less sensitive to observer motion while preserving interaction-related structure, and that low-frequency latent dynamics exhibit stronger correspondence between semantically matched human and robot executions. We further evaluate WING across multiple simulated benchmarks and real-world manipulation tasks. In simulation, WING achieves average success rates of 99.2\% on LIBERO~\citep{libero} and 93.80\% on RoboTwin 2.0~\citep{robotwin2}, and reaches 57.7\% on RoboCasa--GR1~\citep{robocasa} after pre-training on egocentric video data, outperforming strong baselines including $\pi_{0.5}$~\citep{pi_0.5} and LingBot-VA~\citep{lbva}. In real-world experiments, we evaluate WING on four bimanual manipulation tasks under both standard and diverse generalization settings, where it maintains strong performance across variations. Ablation studies further confirm that both interaction-centric latent action learning and low-frequency spectral guidance contribute consistently to downstream policy performance.

Overall, our contributions are threefold:
\begin{itemize}[leftmargin=1.2em, itemsep=0.2em, topsep=0.2em]

\item We show that observer-induced variations can become entangled with interaction-relevant dynamics in egocentric latent actions, and further reveal that low-frequency latent dynamics exhibit stronger correspondence across human and robot interactions.

\item Motivated by these findings, we introduce \textbf{WING}, which combines interaction-centric latent action learning with spectral latent guidance to suppress observer-induced variations and exploit temporal structures that are more consistently shared across embodiments.

\item We validate WING through extensive representation analyses and closed-loop experiments across simulated and real-world manipulation tasks. WING consistently outperforms representative robot policies and alternative egocentric pretraining strategies.

\end{itemize}

%% file: text/related_work_main.tex
\section{Related Work}

Recent vision-language-action (VLA) and world-action models (WAMs) have demonstrated strong generalization by combining large-scale vision-language or video pretraining with robot demonstrations~\citep{rt2,openvla,pi_0,pi_0.5,gr00t,dreamzero,fastwam,lbva}. However, their performance still relies heavily on robot data, which remains expensive to collect and difficult to scale across diverse tasks and environments. To alleviate this limitation, recent studies have explored egocentric human videos as a scalable source of interaction experience. Prior works have explored this direction through representation learning, behavior transfer, and latent-action modeling~\citep{r3m,vip,egomimic,egoscale,lapa,univla,cdlam,motion}. However, ego-centric transitions entangle interaction dynamics with observer-induced camera motion, making it difficult to identify transferable manipulation structure. Existing approaches either model full visual transitions or reduce action-irrelevant variation, while other works exploit camera motion as an active-perception cue~\citep{cdlam,motion,activemimic}, without explicitly separating observer-specific motion from interaction dynamics.

WING addresses this gap by learning interaction-centric latent action guidance for ego-to-robot transfer. It disentangles observer motion from interaction dynamics and extracts low-frequency temporal structure that is more consistently shared between human and robot executions. A more comprehensive discussion of related work is provided in Appendix~\ref{sec:app_rw}.


%% file: text/method.tex
\section{Methods}
\setlength{\textfloatsep}{8pt}
\begin{figure}[t!]
    \centering
    \includegraphics[width=1\linewidth]{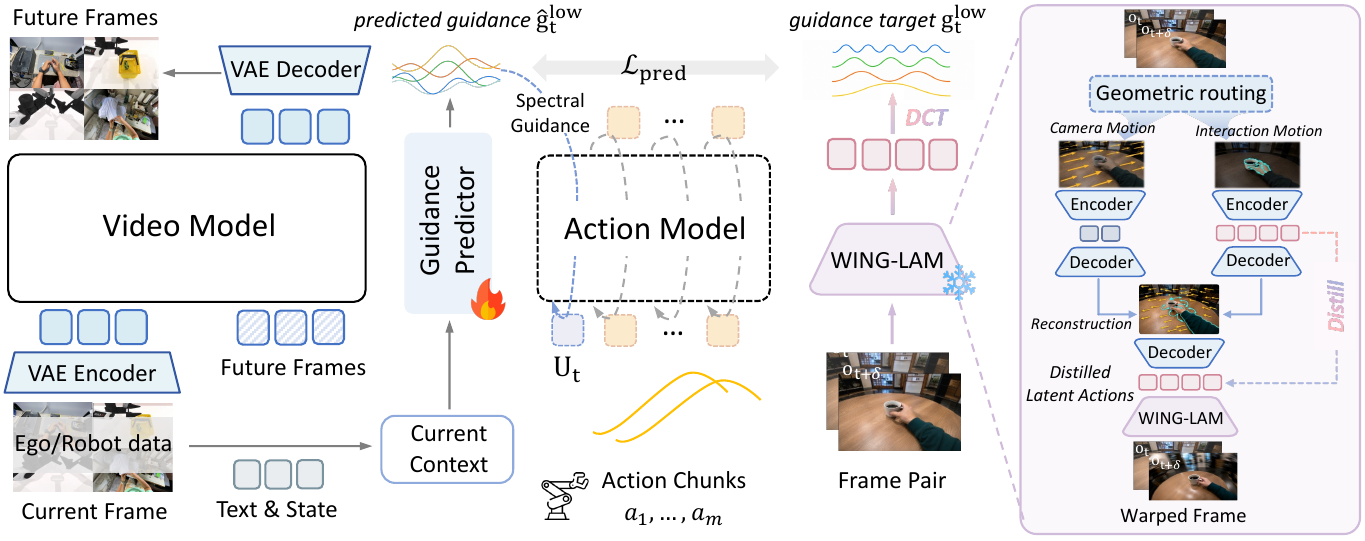}
    \caption{\textbf{Overview of WING.}
WING first learns interaction-centric latent actions that reduce observer-induced variation while preserving interaction dynamics. It then extracts low-frequency temporal structure with stronger correspondence across human and robot executions and predicts this structure from the current context as guidance for robot action generation.}
    \label{fig:main}
\end{figure}


In this section, we present the technical details of \textbf{WING}, including interaction-centric latent action learning, the spectral latent guidance module, and the overall training pipeline. Together, these components extract transferable interaction structure from large-scale egocentric videos and use it as structured latent guidance for robot action learning.

\subsection{Overview}
\label{sec:problem_formulation}

A robot policy $\pi_{\theta}(\mathbf{a}_{t:t+m} \mid \mathbf{l}, \mathbf{o}_{t}, \mathbf{s}_{t})$ maps a language instruction $\mathbf{l}$, visual observation $\mathbf{o}_{t}$, and robot proprioceptive state $\mathbf{s}_{t}$ to an executable action chunk $\mathbf{a}_{t:t+m}$. Our goal is to learn generalizable action knowledge from large-scale egocentric videos through latent action modeling and transfer it to downstream robot manipulation policies. As discussed above, this transfer faces two key challenges: (1) reconstruction-based latent action models entangle interaction-relevant dynamics with task-irrelevant motion, particularly ego-camera motion; and (2) discrepancies in temporal dynamics between human interactions and robot actions hinder direct cross-embodiment transfer.

To address these challenges, \textbf{WING} comprises two key components. First, \textbf{WING-LAM} is an interaction-centric latent action model that encodes egocentric visual transitions $(\mathbf{o}^{\mathrm{ego}}_{t}, \mathbf{o}^{\mathrm{ego}}_{t+\delta})$ into latent actions $\mathbf{z}_{t} = E_{\phi}^{\mathrm{WING\text{-}LAM} }(\mathbf{o}^{\mathrm{ego}}_{t}, \mathbf{o}^{\mathrm{ego}}_{t+\delta})$ while suppressing task-irrelevant motion, thereby encouraging the latent representation to capture interaction-relevant dynamics. Second, the \textbf{Spectral Latent Guidance Module} decomposes latent action trajectories $\mathbf{z}_{t:t+H}$ in the spectral domain and retains the first $K$ low-frequency components, $\mathbf{g}^{\mathrm{low}}_{t} = \left[\operatorname{DCT}(\mathbf{z}_{t:t+H})\right]_{0:K-1}$, as transferable latent guidance. By suppressing rapidly varying high-frequency components, this module isolates slowly varying temporal structures that are more consistently shared across human and robot interactions.

At inference time, a lightweight predictor $P_{\psi}$ estimates the spectral guidance $\widehat{\mathbf{g}}^{\mathrm{low}}_{t} = P_{\psi}(\mathbf{l}, \mathbf{o}_{t}, \mathbf{s}_{t})$, which conditions the world-action model $\pi_{\theta}(\mathbf{a}_{t:t+m} \mid \mathbf{l}, \mathbf{o}_{t}, \mathbf{s}_{t}, \widehat{\mathbf{g}}^{\mathrm{low}}_{t})$ for fine-grained robot action generation. In this way, WING transfers knowledge learned from egocentric human videos as a high-level action prior, while allowing the robot policy to generate embodiment-specific fine-grained controls, thereby bridging large-scale human interaction and downstream robot manipulation.

\input{text/method_lam}
\input{text/method_latent_guidance}

%% file: text/method_lam.tex
\vspace{-5pt}
\subsection{Learning Interaction-Centric Latent Actions}
\label{sec:oilam}

Egocentric videos contain both observer-induced motion
and hand-object interactions. Consequently, a latent action learned to reconstruct the full transition can encode both sources of change~\citep{whatlamlearn}. However, for robot guidance, this calls for the latent to retain interaction dynamics while reducing its
sensitivity to observer motion. \textbf{WING-LAM} therefore uses an explicit motion-routing design that isolates interaction dynamics from observer motion and distills the resulting interaction latent into an RGB-only student.

\noindent\textbf{Camera--Interaction Motion Decomposition.}
Observer motion and hand–object interaction exhibit different spatial motion patterns in egocentric videos: observer motion induces coherent image-wide changes, while interaction motion is more localized. We exploit this distinction to construct motion-aware supervision. Specifically, we decompose each frame transition into a global observer-motion component and a local interaction-motion component. Given two video frames
$\mathbf{o}_{t}$ and $\mathbf{o}_{t+\delta}$, an offline point tracker estimates point correspondences:
$\mathbf p_{t,i},\mathbf p_{t+\delta,i}\in\mathbb R^2$,
where $i$ identifies the same tracked point across frames.
Training-time region masks separate background tracks from hand--object interaction tracks. Using the background tracks, we estimate an invertible
global image warp $W_{t\rightarrow t+\delta}$ as the observer-motion target, capturing the dominant coherent transformation between the two frames.
After compensating this global motion, the remaining displacement of each point represents motion unexplained by the observer movement:
\begin{equation}
    \mathbf r_{t,i}
    =
    \underbrace{
        W_{t\rightarrow t+\delta}^{-1}
        (\mathbf p_{t+\delta,i})
    }_{\text{camera-compensated location}}
    - \quad
    \mathbf p_{t,i},
    \label{eq:oilam_residual}
\end{equation}
where $\mathbf r_{t,i}\in\mathbb R^2$ denotes the residual
displacement expressed in the first frame's coordinates, which serves as the interaction-motion target.

\noindent\textbf{Motion-Routed Latent Learning.}
These decomposed motion signals provide privileged supervision for learning a factorized latent representation. We then train a two-branch teacher to encode visual features
and tracked point coordinates, producing camera and interaction
latents,
$\mathbf z^T_{t,\mathrm{cam}}\in\mathbb R^{D_c}$ and
$\mathbf z^T_{t,\mathrm{int}}\in\mathbb R^{D_i}$,
where the superscript $T$ denotes the teacher and
$D_c,D_i$ are the respective latent dimensions. The camera branch predicts the global warp, while the interaction branch predicts residual displacements, and their decoded motions jointly reconstruct the tracked locations in the second frame. Although motion routing separates the two sources, observer-specific cues may still leak into the interaction latent; we therefore apply camera interventions that preserve interaction dynamics and enforce latent consistency.

\noindent\textbf{Interaction Distillation.}
Although the teacher learns interaction-centric representations using
privileged motion cues, these cues are unavailable during deployment.
We therefore distill the teacher's interaction latent into an RGB-only
student $S_\phi$ that directly extracts latent actions from frame pairs.
For each pair $(\mathbf{o}_{t}^{\mathrm{ego}},
\mathbf{o}_{t+\delta}^{\mathrm{ego}})$, we construct a camera-perturbed view by applying a camera-like global warp
to the second frame, yielding
$\tilde{\mathbf{o}}_{t+\delta}^{\mathrm{ego}}$. The two pairs share the same interaction target $\mathbf z^T_{t,\mathrm{int}}$ while differing only in observer-induced motion.
The student then encodes the original and augmented pairs as:
$\mathbf z_t^{\mathrm{nat}}=S_\phi(\mathbf{o}_{t}^\mathrm{ego},\mathbf{o}_{t+\delta}^\mathrm{ego})$
and
$\mathbf z_t^{\mathrm{aug}}=S_\phi(\mathbf{o}_{t}^\mathrm{ego},\tilde{\mathbf{o}}_{t+\delta}^\mathrm{ego})$,
where both outputs lie in $\mathbb R^{D_i}$, and
$\mathrm{nat}$ and $\mathrm{aug}$ denote the original
and augmented inputs, respectively. 
We optimize the student using:
\begin{equation}
    \mathcal L_{\mathrm{distill}}
    =
    \frac{1}{2D_i}
    \sum_{v\in\{\mathrm{nat},\mathrm{aug}\}}
    \left\|
        \mathbf z_t^v-\mathbf z^T_{t,\mathrm{int}}
    \right\|_2^2
    +
    \frac{\lambda_{\mathrm{view}}}{D_i}
    \left\|
        \mathbf z_t^{\mathrm{nat}}
        -
        \mathbf z_t^{\mathrm{aug}}
    \right\|_2^2,
    \label{eq:oilam_distill}
\end{equation}
where $v$ indexes the two inputs and
$\lambda_{\mathrm{view}}$ weights their prediction consistency.
The first term transfers the teacher's interaction
representation; the second encourages the student to
retain that representation under camera-like image warps. Details of implementation
are provided in Appendix~\ref{sec:app_oilam_training}.

Overall, the above design separates observer-induced variation from interaction dynamics, yielding interaction-centric latent actions that capture transferable manipulation structure and serve as the foundation for the subsequent spectral guidance module.

%% file: text/method_latent_guidance.tex
\subsection{Spectral Latent Action Guidance}
\label{sec:spectral_guidance}
Although WING-LAM extracts interaction-centric latent actions, semantically
matched human and robot executions can differ in temporal dynamics. We find that low-frequency latent
components exhibit stronger cross-embodiment correspondence than full
trajectories, and therefore retain these components as
spectral guidance for robot action generation.

\noindent\textbf{Spectral Latent Representation.}
Given visual observations sampled at a fixed interval $\Delta t$, WING-LAM encodes $H$ successive transitions into
$
\mathbf{Z}_t=[\mathbf{z}_t,\ldots,\mathbf{z}_{t+H-1}]^\top
\in\mathbb{R}^{H\times D_i},
$
where
$
\mathbf{z}_{t+h}=S_\phi(\mathbf{o}_{t+h},\mathbf{o}_{t+h+1}).
$
An orthonormal DCT is then applied along the temporal dimension, and the lowest $K$ components are retained as the guidance target
$\mathbf g_t^{\mathrm{low}}$.
This truncation affects only the latent guidance, not the robot actions.

\noindent\textbf{Predicting Low-Frequency Guidance.} Since $\mathbf g_t^{\mathrm{low}}$ requires future
observations, we train a guidance predictor $P_\psi$ to estimate it from
the current observation and instruction, together with
proprioception when available.
Writing these inputs as $\mathbf c_t$, we minimize
the mean squared prediction error:
\begin{equation}
    \widehat{\mathbf g}_t^{\mathrm{low}}
    =
    P_\psi(\mathbf c_t),
    \qquad
    \mathcal L_{\mathrm{pred}}
    =
    \frac{1}{KD_i}
    \left\|
        \widehat{\mathbf g}_t^{\mathrm{low}}
        -
        \mathbf g_t^{\mathrm{low}}
    \right\|_F^2,
    \label{eq:low_frequency_prediction}
\end{equation}
where $\|\cdot\|_F$ denotes the Frobenius norm.
During robot training, we jointly optimize this loss
and the action-generation objective. Future frames are only used to construct training
targets.

\noindent\textbf{Conditioning Robot Actions.}
We convert the predicted guidance
$\widehat{\mathbf g}_t^{\mathrm{low}}$ into
$\mathbf U_t\in\mathbb R^{K\times D_{\mathrm{hid}}}$,
where $D_{\mathrm{hid}}$ is the action model's hidden dimension.
Each frequency coefficient vector is projected
to this dimension, normalized with LayerNorm, and given
a learned embedding identifying its frequency mode.
The action model attends to these guidance tokens through
a gated residual update:
\begin{equation}
    \mathbf H_a^{(l)}
    \leftarrow
    \mathbf H_a^{(l)}
    +
    \gamma_l
    \operatorname{Attn}\!\left(
        Q=\operatorname{LayerNorm}(\mathbf H_a^{(l)}),
        K=\mathbf U_t,
        V=\mathbf U_t
    \right),
    \label{eq:latent_action_cross_attention}
\end{equation}
where $\mathbf H_a^{(l)}$ denotes the action hidden
sequence at layer $l$, and the learned scalar $\gamma_l$
controls the magnitude of latent guidance.

\subsection{Training Strategy}
\label{sec:wam_training}

We train the world action model (WAM) in two stages: pretraining on egocentric
videos, followed by robot policy learning from robot demonstrations.



\noindent\textbf{Egocentric Pretraining.}
Using our curated egocentric data, we adopt Wan2.2~\citep{wan} and train the video expert for future-frame generation and the guidance predictor $P_\psi$ for low-frequency spectral guidance prediction.
The two pathways learn visual and interaction dynamics through the generation objective and latent guidance prediction, respectively.
Full details of egocentric pretraining are provided in Appendix~\ref{sec:app_pretraining_details}.

\noindent\textbf{Robot Policy Learning.}
After pretraining, we transfer the pretrained visual backbone and guidance predictor to the robot policy and jointly train them with the action expert on robot demonstrations.
The guidance predictor is supervised on robot data, while the WING-LAM encoder remains frozen.
The predicted guidance conditions downstream robot action generation during both training and inference.
This stage adapts the interaction dynamics learned from egocentric videos to robot control, with robot demonstrations providing direct supervision for executable actions.

%% file: text/6_experiments.tex
\section{Experiments}
\label{sec:experiments}
In this section, we evaluate WING from both representation and policy perspectives. 
Specifically, we aim to answer the following four questions:
\begin{enumerate}

\item [\textbf{(Q1)}] What structures in egocentric latent actions reveal transferable interaction knowledge?

\item [\textbf{(Q2)}] Can WING effectively transfer latent action knowledge to improve robot performance?

\item [\textbf{(Q3)}] How do interaction-centric latent actions and spectral latent action guidance contribute?

\item [\textbf{(Q4)}] How effective is WING's pretraining strategy for downstream robot policy learning?
\end{enumerate}

\input{text/6_1_evaluation_setup}

\input{text/6_2_experiments_lam}

\input{text/6_2_robot_policy_results}
\input{text/6_3_ablation_studies}

%% file: text/6_1_evaluation_setup.tex
\subsection{Experimental Setup}

\label{sec:experimental_setup}
We first evaluate whether WING-LAM reduces observer-related variation while retaining interaction-relevant semantics, and then investigate which temporal structures in latent trajectories exhibit stronger cross-embodiment correspondence between human and robot executions. We compare WING-LAM with representative LAM baselines using motion readout, camera perturbations, and action classification, and analyze semantic similarity between matched and mismatched human--robot trajectories across frequency bands. Evaluation details are provided in Appendix~\ref{sec:app_observer_motion_diagnosis} and Appendix~\ref{sec:app_pair_construction}. We further evaluate WING on LIBERO~\citep{libero}, RoboTwin~2.0~\citep{robotwin2},
RoboCasa--GR1~\citep{robocasa}, and a real-world dual-arm platform.
We compare against representative vision-language-action, latent-action, and
world-action modeling methods, reporting success rate in simulation and both
task completion and progress in the real world.
Full evaluation protocols are provided in
Appendix~\ref{sec:app_evaluation_details}.

%% file: text/6_2_experiments_lam.tex
\newpage
\input{img/6_observer_debiasing_evaluation}
\subsection{\textbf{(Q1):} Characterizing Egocentric Latent Structure For Robot Learning.}
\input{tab/6_3_robocasa_summary}
\label{sec:observer_debiasing_evaluation}

We first examine whether WING-LAM suppresses observer-related variation while preserving interaction semantics. As shown in Figure~\ref{fig:lam}(a), both its teacher and student retain strong interaction-motion information but substantially less camera-motion information than DreamDojo~\citep{dreamdojo}. Notably, although CD-LAM~\citep{cdlam} is explicitly designed to reduce action-irrelevant variation, it still retains more camera-motion information under our evaluation protocol. This selectivity persists under direct perturbation: as shown in Figure~\ref{fig:lam}(b), the student exhibits the lowest latent sensitivity under increasing camera motion. Importantly, this robustness does not come at the expense of action semantics, as WING-LAM achieves the highest average action-classification accuracy on LARYBench~\citep{larybench} (Figure~\ref{fig:lam}(c)). We then ask which temporal components of these interaction-centric trajectories are most consistent across human and robot executions. As shown in Figure~\ref{fig:lam}(d), low-frequency components yield a substantially larger separation between semantically matched and mismatched pairs than either the full trajectory or the high-frequency components, indicating that shared interaction structure is concentrated more strongly in low-frequency dynamics and motivating their use as guidance for action generation.

%% file: img/6_observer_debiasing_evaluation.tex
\begin{figure}[htbp]
    \centering
    \includegraphics[width=1\linewidth]{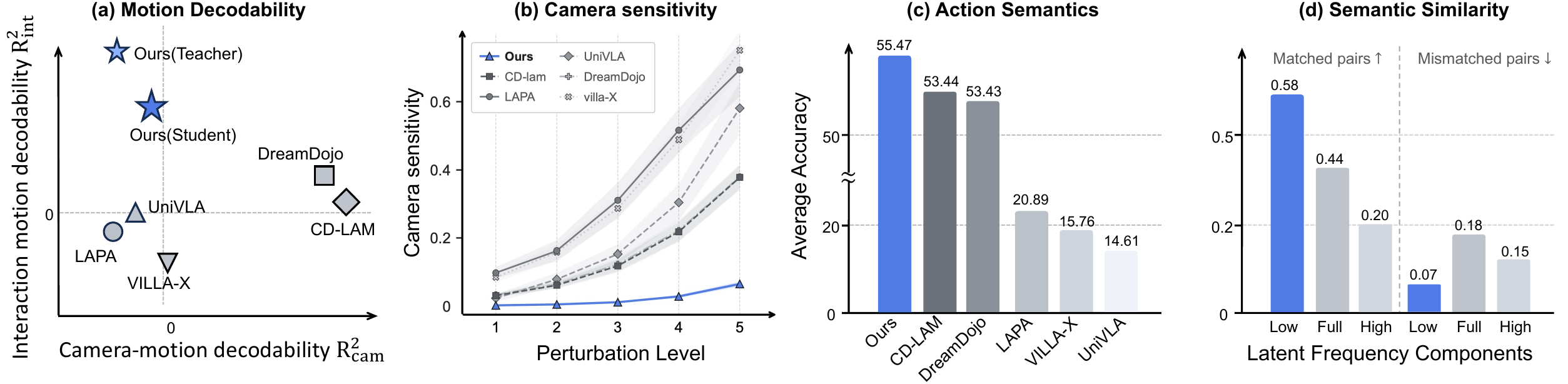}
    \vspace{-1.2em}
    \caption{
\textbf{Characterizing interaction structure in egocentric latent actions.}
\textbf{(a)} Camera-motion and interaction-motion decodability measured by
MLP-probe $R^2$.
\textbf{(b)} Normalized latent sensitivity under progressively stronger
camera-motion perturbations.
\textbf{(c)} Average action-category classification accuracy using frozen
latent representations across the LARYBench~\citep{larybench} egocentric datasets.
\textbf{(d)} Cross-embodiment semantic similarity of low-frequency,
high-frequency, and full latent-action trajectories for semantically matched
and mismatched human--robot pairs.
}
    \label{fig:lam}
\end{figure}

%% file: tab/6_3_robocasa_summary.tex
\begin{wraptable}[13]{r}{0.3\columnwidth}
\vspace{-1em}
\centering

\caption{\textbf{RoboCasa--GR1 Tabletop.}
Best in \textbf{bold}; second-best \underline{underlined}. 
$\dagger$ denotes our reproduced results.}
\label{tab:robocasa_gr1_summary}
\vspace{-1em}
\small
\setlength{\tabcolsep}{10pt}
\renewcommand{\arraystretch}{1}

\begin{tabular}{lc}
\toprule
Method & Avg. SR \\
\midrule
GR00T-N1.6 & 47.6 \\
StarVLA-OFT   & 48.8 \\
DiT4DiT    & 50.8 \\
LDA-1B     & \underline{55.4} \\
Being-H0.7    & 49.2  \\
FastWAM$^\dagger$ & 51.9 \\
\midrule
\rowcolor{gray!12}
Ours        & \textbf{57.7} \\
\bottomrule
\end{tabular}

\end{wraptable}

%% file: text/6_2_robot_policy_results.tex
\subsection{\textbf{(Q2):} Robot Policy Performance}
\label{sec:robot_policy_results}

\input{tab/6_3_libero_robotwin_main}
\vspace{1em}
\paragraph{Results on Simulation.}
\WFclear

Across single-arm, bimanual, and humanoid manipulation benchmarks,
WING achieves average success rates of $99.20\%$ on LIBERO
(Table~\ref{tab:libero_robotwin_main}(a)), $93.80\%$ on RoboTwin~2.0
(Table~\ref{tab:libero_robotwin_main}(b)), and $57.7\%$ on
RoboCasa--GR1 (Table~\ref{tab:robocasa_gr1_summary}), outperforming
the strongest baselines on all three. Notably, its advantage extends
to LIBERO's long-horizon suite and RoboTwin's randomized scenes,
suggesting that the gains are not confined to shorter tasks or clean
visual conditions. Together, these results demonstrate the effectiveness of WING for robot
manipulation across diverse embodiments and task settings. Full results for RoboTwin~2.0 and RoboCasa
are provided in Appendices~\ref{sec:app_robotwin_full_results}
and~\ref{sec:app_robocasa_full_results}, respectively.

\input{img/real_results}

\WFclear
\paragraph{Results on the real-world robot.}
As illustrated in Figure~\ref{fig:real_results}, WING achieves the
highest average success rate of 75.0\% under the standard setting.
Across both standard and generalization settings, WING consistently
outperforms FastWAM and GR00T-N1.7 in average success rate and
progress score. Under generalization settings, WING achieves
comparable average success to the strong baseline \(\pi_{0.5}\),
while attaining a slightly higher average progress score. These results demonstrate the effectiveness
of WING in real-world manipulation and its ability to maintain strong
performance under diverse generalization settings.


%% file: tab/6_3_libero_robotwin_main.tex
\begin{table}[t]
\centering
\caption{\textbf{Simulation results on LIBERO and RoboTwin~2.0.}
Success rate (\%). The best and second-best results in each column are shown in
\textbf{bold} and \underline{underline}, respectively.}
\label{tab:libero_robotwin_main}

\fontsize{6.8}{8.5}\selectfont
\renewcommand{\arraystretch}{1.05}

\begin{minipage}[t]{0.565\textwidth}
\centering
\textbf{(a) LIBERO}\par\vspace{0.3em}

\setlength{\tabcolsep}{2.5pt}

\begin{tabularx}{\linewidth}{
    @{}
    l
    *{5}{>{\centering\arraybackslash}X}
    @{}
}
\toprule[1.1pt]
Method & Spatial & Object & Goal & Long & Avg. \\
\midrule


\multicolumn{6}{c}{\textit{VLAs}} \\
\midrule

$\pi_0$~\citep{pi_0}
    & 98.00
    & 96.80
    & 94.40
    & 88.40
    & 94.40 \\

$\pi_{0.5}$~\citep{pi_0.5}
    & 98.80
    & 98.20
    & 98.00
    & 92.40
    & 96.90 \\

\midrule

\multicolumn{6}{c}{\textit{Latent-Action Based Methods}} \\
\midrule

LAPA~\citep{lapa}
    & 73.80
    & 74.60
    & 58.80
    & 55.40
    & 65.70 \\

UniVLA~\citep{univla}
    & 96.50
    & 96.80
    & 95.60
    & 92.00
    & 95.20 \\

LaWAM~\citep{lawam}
    & \textbf{99.40}
    & 99.60
    & \underline{98.40}
    & 97.00
    & \underline{98.60} \\

VLA-JEPA~\citep{vlajepa}
    & 96.20
    & 99.60
    & 97.20
    & 95.80
    & 97.20 \\

\midrule

\multicolumn{6}{c}{\textit{WAMs}} \\
\midrule

Motus~\citep{motus}
    & 96.80
    & \underline{99.80}
    & 96.60
    & 97.60
    & 97.70 \\

LingBot-VA~\citep{lbva}
    & 98.50
    & 99.60
    & 97.20
    & \underline{98.50}
    & 98.50 \\

Fast-WAM~\citep{fastwam}
    & 98.20
    & \textbf{100.00}
    & 97.00
    & 95.20
    & 97.60 \\

\midrule

\rowcolor{gray!12}
Ours
    & \underline{99.00}
    & \textbf{100.00}
    & \textbf{98.80}
    & \textbf{99.00}
    & \textbf{99.20} \\


\bottomrule[1.1pt]
\end{tabularx}

\end{minipage}
\hfill
\begin{minipage}[t]{0.415\textwidth}
\centering
\textbf{(b) RoboTwin~2.0}\par\vspace{0.3em}

\setlength{\tabcolsep}{3.2pt}

\begin{tabularx}{\linewidth}{
    @{}
    l
    *{3}{>{\centering\arraybackslash}X}
    @{}
}
\toprule [1.1pt]
Method & Clean & Rand. & Avg. \\
\midrule


\multicolumn{4}{c}{\textit{VLAs}} \\
\midrule

$\pi_0$~\citep{pi_0}
    & 65.92
    & 58.40
    & 62.16 \\

$\pi_{0.5}$~\citep{pi_0.5}
    & 82.74
    & 76.76
    & 79.75 \\

X-VLA~\citep{xvla}
    & 72.90
    & 72.80
    & 72.85 \\

LingBot-VLA~\citep{lbvla}
    & 88.56
    & 86.68
    & 87.62 \\

\midrule

\multicolumn{4}{c}{\textit{Latent-Action Based Methods}} \\
\midrule

LaWAM~\citep{lawam}
    & 92.64
    & 89.80
    & 91.22 \\

\midrule

\multicolumn{4}{c}{\textit{WAMs}} \\
\midrule

Motus~\citep{motus}
    & 88.66
    & 87.02
    & 87.84 \\

Fast-WAM~\citep{fastwam}
    & 91.88
    & \underline{91.78}
    & 91.83 \\

GigaWorld-Policy~\citep{giga}
    & 86.36
    & 85.04
    & 85.70 \\

LingBot-VA~\citep{lbva}
    & \underline{92.90}
    & 91.50
    & \underline{92.20} \\

\midrule

\rowcolor{gray!12}
Ours
    & \textbf{94.56}
    & \textbf{93.04}
    & \textbf{93.80} \\
    
\bottomrule [1.1pt]
\end{tabularx}

\end{minipage}
\vspace{2em}
\end{table}

%% file: img/real_results.tex
\begin{figure}
    \centering
    \includegraphics[width=1\linewidth]{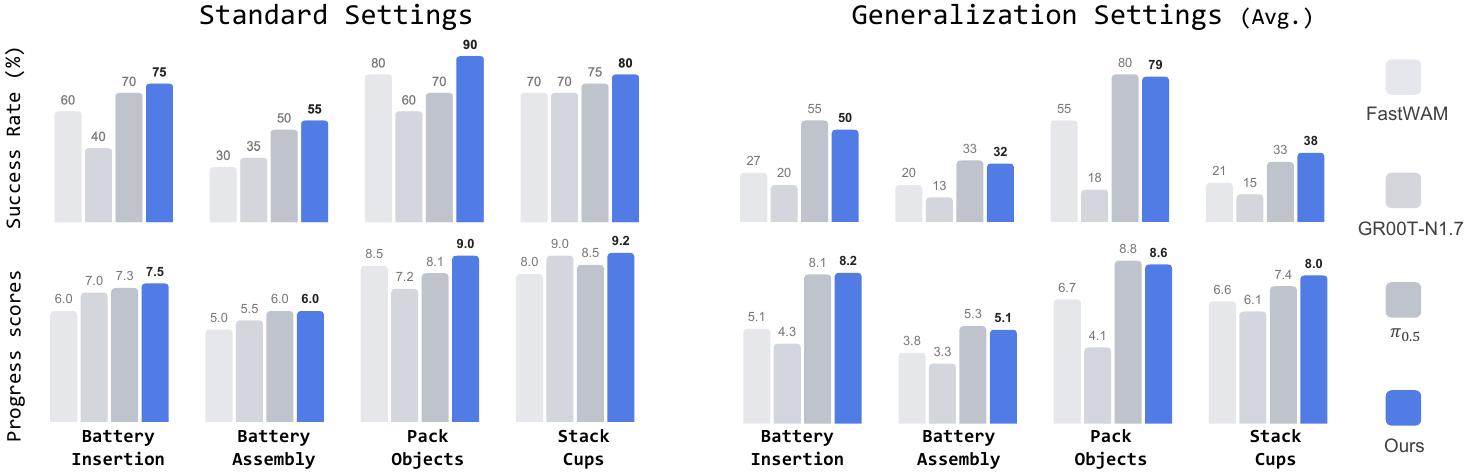}
    \vspace{-0.5em}
    \caption{\textbf{Real-world task evaluation results.} We evaluate different robot foundation models on both standard and generalization settings across four real-world manipulation tasks. Our method achieves consistently strong performance, achieving the strongest standard-setting performance and remaining competitive under generalization.}
    \label{fig:real_results}
    \vspace{1em}
\end{figure}

%% file: text/6_3_ablation_studies.tex
\newpage
\subsection{\textbf{(Q3):} Ablation Studies on WING-LAM and Spectral Guidance.}
\label{sec:ablation_studies}

\input{tab/6_5_controlled_ablation}

\paragraph{Ablation on observer debiasing and spectral guidance.}
Table~\ref{tab:module_ablation} evaluates the contribution of both
components under a controlled setting \textit{without ego-pretraining of
the WAM}. When WING-LAM is ablated, we replace it with
the pretrained DreamDojo LAM while keeping the spectral
guidance unchanged. When DCT module variant is ablated, we retain the
same latent encoder but bypass DCT and use the full time-domain latent
trajectory directly as guidance. Removing either component reduces average performance
across both LIBERO and real-world evaluations. The gains from the DCT variant show that frequency-domain decomposition
and low-frequency retention yield more effective guidance by emphasizing
slowly varying interaction dynamics. The additional gains from WING-LAM
indicate that latent actions with reduced observer-induced variation
provide more useful guidance for robot control. This advantage of
WING-LAM becomes even clearer in the ego-pretraining analysis below,
where it leads to larger improvements.

\vspace{-5pt}
\paragraph{Ablation on the number of DCT components.}
We further investigate how much spectral information is required for effective
latent guidance by varying the number of retained DCT coefficients
$K\in\{2,4,8\}$, where $K=8$ corresponds to the full spectrum. As shown in
Figure~\ref{fig:ablation}(a), $K=4$ consistently performs best across all suites. Retaining only two coefficients is overly
restrictive and discards useful temporal dynamics, while the full spectrum introduces additional high-frequency
components without improving downstream control. These results suggest that
effective latent guidance requires a sufficient, but not excessive, spectral
bandwidth, with $K=4$ providing the best balance between preserving
control-relevant temporal structure and filtering variation that is less consistently shared across human and robot executions.

\subsection{\textbf{(Q4):} Analysis of Ego Pre-training Strategies.}
\input{img/ablation}
Having established WING's downstream performance, we further examine
whether its interaction-centric spectral guidance also provides an
effective pre-training strategy for WAMs from egocentric videos.
We compare pre-training methods under the same 200h budget which differ in how video-derived information supervises the WAM.
In Figure~\ref{fig:ablation}(b), \textbf{Vid-P.} performs video-only pre-training, isolating the
effect of exposure to egocentric videos themselves. \textbf{HP-P.}
uses hand-pose labels as supervision. \textbf{LA-P.} follows
LAPA~\citep{lapa} which uses latent actions as direct action-prediction
targets. In contrast, \textbf{Ours} predicts spectral latent guidance
to condition action generation. To isolate the effect of suppressing
observer-induced variation, \textbf{w/o OD.} denotes variants in which
WING-LAM is replaced with DreamDojo LAM.
Implementation details are provided in
Appendix~\ref{sec:app_pretraining_details}. 

As shown in Figure~\ref{fig:ablation}(b), video-only pre-training consistently underperforms our method, with
particularly clear gaps on the real-world standard and generalization
settings, indicating that the gains of egocentric pre-training cannot be explained by video exposure alone.
Replacing WING-LAM with a non-debiased latent-action model further degrades
performance in most settings, supporting the importance of suppressing
observer-induced variation.
Direct latent-action prediction also performs worse, possibly reflecting
the gap between visual transition representations and executable robot
actions.
Together, these results suggest that WING is most effective when
ego-derived latent actions serve as auxiliary guidance, rather than as
direct action targets, while the robot policy retains its original
action-prediction objective. We further analyze the effectiveness and efficiency of pre-training in Appendix~\ref{sec:app_pre_eff} and \ref{sec:app_data_eff} by examining downstream validation loss after different amounts of egocentric pretraining and downstream performance under varying amounts of robot training data. These analyses show that WING not only learns effective representations during pre-training, but also retains stronger downstream performance as robot supervision becomes more limited.

%% file: tab/6_5_controlled_ablation.tex
\begin{wraptable}[10]{l}{0.5\textwidth}

\centering

\caption{
\textbf{Ablation on LIBERO and Real-World.}
We report SR on these tasks.
}
\label{tab:module_ablation}

\vspace{-0.5em}

\small
\setlength{\tabcolsep}{3.5pt}
\renewcommand{\arraystretch}{1}

\resizebox{\linewidth}{!}{%
\begin{tabular}{cc|cccc|cc}
\toprule

\multicolumn{2}{c|}{Method}
& \multicolumn{4}{c|}{LIBERO}
& \multicolumn{2}{c}{Real-World} \\

WING-LAM & DCT
& Spatial & Object & Goal & Long
& Base & Gen. \\

\midrule
\ding{55} & \ding{55}
&  99.2 &  97.2 &  96.2 &  92.6
& 65.0 & 36.0 

\\
\ding{51} & \ding{55}
& 98.0 & 99.2 & 95.6 & 94.2
& 67.5 & 40.3 \\

\ding{55} & \ding{51}
& 99.2 & 98.2 & 98.0 & 96.2
& 71.3 & 43.0 \\

\rowcolor{gray!12}
\ding{51} & \ding{51}
& \textbf{99.6}
& \textbf{100.0}
& \textbf{99.2}
& \textbf{96.2}
& \textbf{72.5} & \textbf{45.0} \\

\bottomrule
\end{tabular}%
}

\end{wraptable}

%% file: img/ablation.tex



\begin{figure}
    \centering
    \includegraphics[width=1\linewidth]{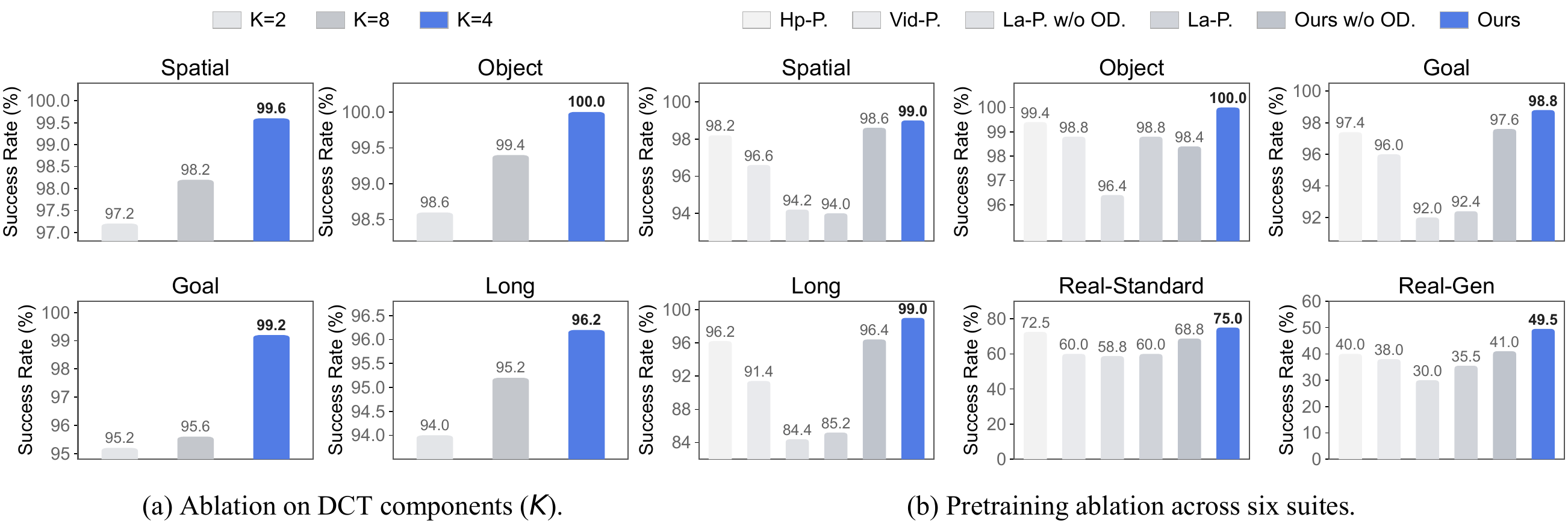}
    \caption{\textbf{Analysis of spectral guidance and egocentric pretraining strategies.}
     \textbf{(a)}  Effect of the number of retained DCT components \(K\) on latent guidance.
     \textbf{(b)} Comparison of different egocentric pretraining strategies including
video-only pretraining, hand-pose supervision, direct latent-action pretraining, and spectral latent guidance, with additional variants to
evaluate the effect of interaction-centric debiasing, across LIBERO and real-world tasks.
     }
    \label{fig:ablation}
\vspace{1em}
\end{figure}

%% file: text/7_conclusion.tex
\vspace{-0.5em}
\section{Limitations and Future Work}
\vspace{-0.5em}
\label{sec:limitation}
Although WING provides an effective way to exploit egocentric videos for robot
learning, several limitations remain. First, constructing the WING-LAM teacher
relies on external motion cues and an image-space camera--interaction
decomposition. The dominant observer motion is approximated by a single global
affine warp, which can become less accurate under strong parallax, large depth
variation, or pronounced 3D camera motion, potentially introducing noise into
the interaction supervision. Second, latent-action pretraining and spectral
guidance add training cost and pipeline complexity. Future work could explore
more self-contained, depth-aware motion modeling and more efficient guidance
mechanisms.

\vspace{-0.5em}
\section{Conclusion}
\vspace{-0.5em}
\label{sec:conclusion}
In this work, we study how to effectively transfer interaction knowledge from egocentric human videos to robot learning.
We show that transferable latent actions require suppressing observer-induced variation and capturing temporal structures that remain consistent across human and robot executions.
Based on these findings, we propose \textbf{WING}, a framework that
learns and transfers interaction-centric latent knowledge from egocentric
videos. WING suppresses observer-induced variations through
interaction-centric latent action learning and leverages low-frequency latent dynamics that are more consistent across embodiments through spectral latent guidance. Extensive experiments across simulation and real-world manipulation
benchmarks demonstrate that WING effectively transfers embodied
interaction knowledge from egocentric videos to robot control, achieving
strong performance and generalization across diverse tasks. Our results
suggest that scaling robot learning with human experience requires not
simply modeling all observed dynamics, but identifying interaction
structures that remain consistent across observers and embodiments.




%% file: text/appendix/theory.tex
\section{Diagnostic Analysis}
\label{sec:app_prop}

\subsection{Observer Variation in a Reconstruction Bottleneck}
\label{app:observer_reconstruction}

We analyze when reconstruction-based latent action learning
retains observer-induced variation. Following the observation
decomposition in \citep{whylatentfail} and the linear LAM
analysis in \citep{whatlamlearn}, we separate interaction
and observer contributions to a visual transition.

\paragraph{Setup.}
Let $\mathbf m$ denote the hand--object interaction state
and $\mathbf c$ the observer state.
The observation map $h$ generates visual representations
$\mathbf o=h(\mathbf m,\mathbf c)$ and
$\mathbf o'=h(\mathbf m',\mathbf c')$,
both in $\mathbb R^{d_o}$.
Using the intermediate observation $h(\mathbf m',\mathbf c)$,
we decompose the observation change as
\begin{equation}
    \begin{aligned}
        \Delta\mathbf o_{\mathrm{int}}
        &=h(\mathbf m',\mathbf c)-h(\mathbf m,\mathbf c),\\
        \Delta\mathbf o_{\mathrm{obs}}
        &=h(\mathbf m',\mathbf c')-h(\mathbf m',\mathbf c),\\
        \Delta\mathbf o
        &=\mathbf o'-\mathbf o
        =\Delta\mathbf o_{\mathrm{int}}
         +\Delta\mathbf o_{\mathrm{obs}}.
    \end{aligned}
    \label{eq:app_obs_source_changes}
\end{equation}
The first component changes the interaction state at a fixed
observer state; the second changes the observer state at the
final interaction state. This identity holds for nonlinear $h$
and does not require the two components to be independent.

For the linear analysis below, we subtract the respective
means from all observation and transition variables and
reuse the same notation. The additive identity above is
preserved. All expectations are taken over the distribution
of observation pairs, with finite second moments assumed.

Consider a linear LAM with a $d_z$-dimensional bottleneck,
where $1\le d_z<d_o$:
\begin{equation}
    \mathbf z=\mathbf C\mathbf o+\mathbf D\mathbf o',
    \qquad
    \widehat{\mathbf o}'=\mathbf A\mathbf o+\mathbf B\mathbf z.
    \label{eq:app_obs_linear_lam}
\end{equation}
Here,
$\mathbf A\in\mathbb R^{d_o\times d_o}$,
$\mathbf B\in\mathbb R^{d_o\times d_z}$, and
$\mathbf C,\mathbf D\in\mathbb R^{d_z\times d_o}$
are unconstrained trainable matrices.
They minimize the expected squared reconstruction error,
\begin{equation}
    \mathcal L
    =
    \mathbb E\!\left[
        \|\widehat{\mathbf o}'-\mathbf o'\|_2^2
    \right].
    \label{eq:app_obs_reconstruction_loss}
\end{equation}
Following \citep{whatlamlearn}, we assume that the current
observation and the total observation change are uncorrelated:
\begin{equation}
    \mathbb E[\mathbf o\,\Delta\mathbf o^\top]=\mathbf0.
    \label{eq:app_obs_decorrelation}
\end{equation}
This is an additional condition of the linear analysis,
not a consequence of centering.

\begin{takeaway}
\begin{proposition}[Observer variation retained by reconstruction]
\label{prop:observer_reconstruction}
For the linear LAM above under
Eq.~\ref{eq:app_obs_decorrelation}, let
\[
    \boldsymbol\Sigma
    =
    \operatorname{Cov}(\Delta\mathbf o)
    =
    \mathbb E[\Delta\mathbf o\,\Delta\mathbf o^\top]
\]
be the covariance matrix of the centered observation change.
Let $\mathbf u_1,\ldots,\mathbf u_{d_o}$ be its orthonormal
eigenvectors, with eigenvalues
$\lambda_1\ge\cdots\ge\lambda_{d_o}\ge0$.
Each eigenvalue is the variance along its corresponding
direction:
$\lambda_i=\mathbb E[(\mathbf u_i^\top\Delta\mathbf o)^2]$.

Define
$\mathbf U=[\mathbf u_1,\ldots,\mathbf u_{d_z}]$
and $\mathbf P=\mathbf U\mathbf U^\top$, the orthogonal
projector onto the $d_z$ leading eigenvectors.
One globally optimal reconstruction and its minimum loss are
\begin{equation}
    \widehat{\mathbf o}'=\mathbf o+\mathbf P\Delta\mathbf o,
    \qquad
    \mathcal L^\star=\sum_{i=d_z+1}^{d_o}\lambda_i.
    \label{eq:app_obs_optimal_projector}
\end{equation}
Let
$\boldsymbol\Sigma_{\mathrm{obs}}
=\operatorname{Cov}(\Delta\mathbf o_{\mathrm{obs}})$
be the covariance of the observer component.
Its contribution to the reconstructed transition satisfies
\begin{equation}
    \mathbb E\!\left[
        \|\mathbf P\Delta\mathbf o_{\mathrm{obs}}\|_2^2
    \right]
    =
    \operatorname{tr}
    (\mathbf P\boldsymbol\Sigma_{\mathrm{obs}}),
    \label{eq:app_obs_retained_energy}
\end{equation}
where $\operatorname{tr}$ denotes the matrix trace.
Thus observer variation is retained whenever it has nonzero
variance in the selected principal subspace.
\end{proposition}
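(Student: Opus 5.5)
I will reduce the problem to a rank-constrained regression on the transition $\Delta\mathbf o$ and then apply the Eckart--Young theorem.

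\textbf{Lower bound for an arbitrary parameter choice.} Fix any $(\mathbf A,\mathbf B,\mathbf C,\mathbf D)$ and write $\mathbf o'=\mathbf o+\Delta\mathbf o$. The residual is then
\[
\widehat{\mathbf o}'-\mathbf o'=(\mathbf A+\mathbf B\mathbf C+\mathbf B\mathbf D-\mathbf I)\mathbf o+(\mathbf B\mathbf D-\mathbf I)\Delta\mathbf o .
\]
Set $\mathbf M=\mathbf A+\mathbf B\mathbf C+\mathbf B\mathbf D-\mathbf I$ and $\mathbf N=\mathbf B\mathbf D$. Expand the expected squared norm. The cross term is $2\,\mathbb E[\mathbf o^\top\mathbf M^\top(\mathbf N-\mathbf I)\Delta\mathbf o]=2\operatorname{tr}\!\big(\mathbf M^\top(\mathbf N-\mathbf I)\mathbb E[\Delta\mathbf o\,\mathbf o^\top]\big)$, which vanishes by Eq.~\ref{eq:app_obs_decorrelation}. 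This leaves
\[
\mathcal L=\mathbb E\|\mathbf M\mathbf o\|_2^2+\operatorname{tr}\!\big((\mathbf N-\mathbf I)\boldsymbol\Sigma(\mathbf N-\mathbf I)^\top\big)\ \ge\ \operatorname{tr}\!\big((\mathbf I-\mathbf N)\boldsymbol\Sigma(\mathbf I-\mathbf N)^\top\big).
\]
Since $\mathbf N=\mathbf B\mathbf D$ has rank at most $d_z$, writing $\boldsymbol\Sigma=\mathbf S\mathbf S^\top$ with $\mathbf S=\boldsymbol\Sigma^{1/2}$ turns the last term into $\|\mathbf S-\mathbf N\mathbf S\|_F^2$. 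The matrix $\mathbf N\mathbf S$ has rank at most $d_z$, so Eckart--Young bounds this by the tail of the squared singular values of $\mathbf S$. Those squared singular values are exactly $\lambda_1,\dots,\lambda_{d_o}$. Hence $\mathcal L\ge\sum_{i>d_z}\lambda_i$ for every parameter choice.

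\textbf{Achievability.} Take $\mathbf D=\mathbf U^\top$, $\mathbf C=-\mathbf U^\top$, $\mathbf B=\mathbf U$ and $\mathbf A=\mathbf I$. Then $\mathbf z=\mathbf U^\top\Delta\mathbf o$ and $\widehat{\mathbf o}'=\mathbf o+\mathbf P\Delta\mathbf o$, which gives $\mathbf M=\mathbf 0$ and $\mathbf N=\mathbf P$. The loss becomes $\mathbb E\|(\mathbf I-\mathbf P)\Delta\mathbf o\|^2=\operatorname{tr}((\mathbf I-\mathbf P)\boldsymbol\Sigma)=\sum_{i>d_z}\lambda_i$, using $\mathbf P\mathbf u_i=\mathbf u_i$ for $i\le d_z$ and $\mathbf P\mathbf u_i=\mathbf 0$ otherwise. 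This meets the lower bound, so the configuration is globally optimal and Eq.~\ref{eq:app_obs_optimal_projector} holds. If eigenvalues tie, the argument goes through unchanged for any choice of orthonormal eigenbasis, which is why the statement says ``one'' optimum.

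\textbf{Retained observer energy.} Eq.~\ref{eq:app_obs_retained_energy} is a direct trace computation using $\mathbf P^\top\mathbf P=\mathbf P$ and cyclicity of the trace:
\[
\mathbb E\|\mathbf P\Delta\mathbf o_{\mathrm{obs}}\|^2=\operatorname{tr}(\mathbf P\,\mathbb E[\Delta\mathbf o_{\mathrm{obs}}\Delta\mathbf o_{\mathrm{obs}}^\top])=\operatorname{tr}(\mathbf P\boldsymbol\Sigma_{\mathrm{obs}}).
\]
Centering makes the second moment equal to the covariance.

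For the final claim, write $\operatorname{tr}(\mathbf P\boldsymbol\Sigma_{\mathrm{obs}})=\sum_{i\le d_z}\mathbf u_i^\top\boldsymbol\Sigma_{\mathrm{obs}}\mathbf u_i$, which is the sum of the observer variances along the selected directions. Because $\boldsymbol\Sigma_{\mathrm{obs}}$ is positive semidefinite, each summand is nonnegative. The sum is therefore positive exactly when the observer component has nonzero variance in the span of $\mathbf u_1,\dots,\mathbf u_{d_z}$.

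\textbf{Main obstacle.} The delicate step is the lower bound. The cross term must vanish, and only Eq.~\ref{eq:app_obs_decorrelation} ensures this. The rank argument must also handle a possibly singular $\boldsymbol\Sigma$, which is why I work with $\boldsymbol\Sigma^{1/2}$ rather than a whitening transform.
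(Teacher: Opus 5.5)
Your proposal is correct and follows essentially the same route as the paper. It uses the same residual decomposition, with the cross term killed by Eq.~\ref{eq:app_obs_decorrelation}, and the same reduction to a rank-$\le d_z$ reconstruction of $\Delta\mathbf o$ through $\mathbf B\mathbf D$. It reaches the same explicit optimizer $\mathbf A=\mathbf I$, $\mathbf B=\mathbf U$, $\mathbf C=-\mathbf U^\top$, $\mathbf D=\mathbf U^\top$, and the same trace computation for the observer term. The only difference is that you replace the paper's citation of PCA optimality with a self-contained lower bound (Eckart--Young applied to $\boldsymbol\Sigma^{1/2}$) plus achievability, which also cleanly covers a singular $\boldsymbol\Sigma$.
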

\end{takeaway}

\begin{proof}
Substituting $\mathbf o'=\mathbf o+\Delta\mathbf o$
into Eq.~\ref{eq:app_obs_linear_lam} gives
\[
    \widehat{\mathbf o}'-\mathbf o'
    =
    [\mathbf A+\mathbf B(\mathbf C+\mathbf D)-\mathbf I]
    \mathbf o
    +
    (\mathbf B\mathbf D-\mathbf I)\Delta\mathbf o,
\]
where $\mathbf I$ is the $d_o\times d_o$ identity matrix.
The expected cross term vanishes by
Eq.~\ref{eq:app_obs_decorrelation}, so
\begin{equation}
    \begin{aligned}
        \mathcal L
        ={}&
        \mathbb E\!\left[
            \|[\mathbf A+\mathbf B(\mathbf C+\mathbf D)
            -\mathbf I]\mathbf o\|_2^2
        \right]\\
        &+
        \mathbb E\!\left[
            \|(\mathbf B\mathbf D-\mathbf I)
            \Delta\mathbf o\|_2^2
        \right].
    \end{aligned}
    \label{eq:app_obs_error_decomposition}
\end{equation}
For any $\mathbf B,\mathbf C,\mathbf D$, choosing
$\mathbf A=\mathbf I-\mathbf B(\mathbf C+\mathbf D)$
eliminates the first term.

The product $\mathbf B\mathbf D$ can represent any matrix
of rank at most $d_z$.
The remaining objective therefore seeks a linear
reconstruction of $\Delta\mathbf o$ through at most
$d_z$ dimensions.
By PCA optimality \citep{whatlamlearn}, it is minimized
by projecting onto the eigenvectors of
$\boldsymbol\Sigma$ with the $d_z$ largest eigenvalues.

To obtain the minimum loss, write
$\mathbf P=\sum_{i=1}^{d_z}\mathbf u_i\mathbf u_i^\top$.
The reconstruction discards the remaining orthogonal
components:
\begin{equation}
    \begin{aligned}
        \Delta\mathbf o-\mathbf P\Delta\mathbf o
        &=
        \sum_{i=d_z+1}^{d_o}
        (\mathbf u_i^\top\Delta\mathbf o)\mathbf u_i,\\
        \mathcal L^\star
        &=
        \mathbb E\!\left[
            \|\Delta\mathbf o-\mathbf P\Delta\mathbf o\|_2^2
        \right]\\
        &=
        \sum_{i=d_z+1}^{d_o}
        \mathbb E[(\mathbf u_i^\top\Delta\mathbf o)^2]
        =
        \sum_{i=d_z+1}^{d_o}\lambda_i.
    \end{aligned}
    \label{eq:app_obs_minimum_loss}
\end{equation}
The squared components add because the eigenvectors are
orthonormal. The minimum loss is therefore the total
variance along the discarded directions.
This optimum is attained in the original LAM by
$\mathbf A=\mathbf I$,
$\mathbf B=\mathbf U$,
$\mathbf C=-\mathbf U^\top$, and
$\mathbf D=\mathbf U^\top$.

Under this reconstruction, the predicted transition is
$\mathbf P\Delta\mathbf o_{\mathrm{int}}
+\mathbf P\Delta\mathbf o_{\mathrm{obs}}$.
Since the observer component is centered,
\begin{equation}
    \begin{aligned}
        \mathbb E\!\left[
            \|\mathbf P\Delta\mathbf o_{\mathrm{obs}}\|_2^2
        \right]
        &=
        \operatorname{tr}
        (\mathbf P\boldsymbol\Sigma_{\mathrm{obs}}\mathbf P^\top)\\
        &=
        \operatorname{tr}
        (\mathbf P\boldsymbol\Sigma_{\mathrm{obs}})\\
        &=
        \sum_{i=1}^{d_z}
        \mathbf u_i^\top
        \boldsymbol\Sigma_{\mathrm{obs}}\mathbf u_i.
    \end{aligned}
    \label{eq:app_obs_energy_derivation}
\end{equation}
The second equality uses
$\mathbf P^\top=\mathbf P$, $\mathbf P^2=\mathbf P$,
and the cyclic property of the trace.
Each term in the final sum is the observer variance
along a selected direction and is nonnegative.
The retained observer component is therefore nonzero
in mean square exactly when at least one selected
direction has positive observer variance.
\end{proof}

\paragraph{Interpretation and scope.}
The reconstruction objective selects directions according
to total transition variance, without distinguishing
interaction from observer motion.
The result concerns a selected optimal reconstruction
subspace; it does not establish statistical recoverability
of observer state from the latent.
When eigenvalues are tied at the cutoff, the optimal
subspace may be nonunique.

\subsection{Occurrence and Sensitivity of Observer Variation}
\label{app:observer_energy}

The preceding analysis explains why reconstruction can
retain observer variation. We now examine how observer
motion affects a latent-action encoder.
Following the occurrence--response decomposition
in \citep{whylatentfail}, we separate how often observer
motion occurs from the latent response when it occurs.

\paragraph{Setup.}
Let
$f:\mathbb R^{d_o}\times\mathbb R^{d_o}\to\mathbb R^{d_z}$
be a fixed latent-action encoder, which may be nonlinear.
Using the observation map $h$ defined above, fix an
interaction transition $(\mathbf m,\mathbf m')$ and
an initial observer state $\mathbf c$.
For a final observer state $\mathbf c'$, define
\begin{equation}
    \begin{aligned}
        \mathbf z(\mathbf c')
        &=f\bigl(h(\mathbf m,\mathbf c),
                 h(\mathbf m',\mathbf c')\bigr),\\
        \mathbf z(\mathbf c)
        &=f\bigl(h(\mathbf m,\mathbf c),
                 h(\mathbf m',\mathbf c)\bigr).
    \end{aligned}
    \label{eq:app_energy_composite_encoder}
\end{equation}
The second expression keeps the observer fixed across
the interaction transition.
Their difference,
\begin{equation}
    \Delta\mathbf z
    =\mathbf z(\mathbf c')-\mathbf z(\mathbf c),
    \label{eq:app_energy_latent_displacement}
\end{equation}
therefore measures the latent response to observer motion
while holding the interaction transition and current
observation fixed.

All probabilities and expectations below are over
$\mathbf c'$ conditional on the fixed
$(\mathbf m,\mathbf m',\mathbf c)$.
Let $p=\Pr(\mathbf c'\ne\mathbf c)$ be the probability
of observer motion, and assume
$\mathbb E[\|\Delta\mathbf z\|_2^2]<\infty$.
No centering or linearity assumption is required.

\begin{takeaway}
    \begin{proposition}[Occurrence and response to observer motion]
\label{prop:observer_energy}
For the fixed interaction transition and initial observer
state above, if $p>0$, the expected squared latent change
satisfies
\begin{equation}
    \mathbb E[\|\Delta\mathbf z\|_2^2]
    =
    p\,\mathbb E\!\left[
        \|\Delta\mathbf z\|_2^2
        \mid \mathbf c'\ne\mathbf c
    \right].
    \label{eq:app_energy_main}
\end{equation}
The two factors describe the occurrence probability of
observer motion and the mean squared latent response
when observer motion occurs.
If $p=0$, the latent change is zero almost surely.
\end{proposition}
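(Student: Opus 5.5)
The plan is to split the expectation of $\|\Delta\mathbf z\|_2^2$ over the event $E=\{\mathbf c'\ne\mathbf c\}$ and its complement, and to show that the complement contributes nothing. The key observation comes from Eq.~\ref{eq:app_energy_composite_encoder}. On $E^c$ we have $\mathbf c'=\mathbf c$, so $\mathbf z(\mathbf c')=\mathbf z(\mathbf c)$ holds pointwise. Hence $\Delta\mathbf z=\mathbf 0$ identically on $E^c$. This uses only that $f$ and $h$ are functions, with no linearity, centering or continuity needed. This matches the remark before Proposition~\ref{prop:observer_energy}.

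Next, I would write
\[
\mathbb E[\|\Delta\mathbf z\|_2^2]
=\mathbb E[\|\Delta\mathbf z\|_2^2\mathbf 1_E]+\mathbb E[\|\Delta\mathbf z\|_2^2\mathbf 1_{E^c}].
\]
Both terms are well defined because the integrand is nonnegative and $\mathbb E[\|\Delta\mathbf z\|_2^2]<\infty$ by assumption. The second term vanishes by the previous step.

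Then, when $p=\Pr(E)>0$, I apply the elementary definition of conditional expectation given an event of positive probability:
\[
\mathbb E[\|\Delta\mathbf z\|_2^2\mid E]=\mathbb E[\|\Delta\mathbf z\|_2^2\mathbf 1_E]/p.
\]
This quantity is finite since it is bounded by $\mathbb E[\|\Delta\mathbf z\|_2^2]/p$. Multiplying through by $p$ gives Eq.~\ref{eq:app_energy_main}.

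For the case $p=0$, the event $E^c$ has probability one, and $\Delta\mathbf z=\mathbf 0$ on $E^c$. So $\Delta\mathbf z=\mathbf 0$ almost surely, and the expected squared latent change is zero as well.

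The only point requiring care is measurability: the event $E$ must be measurable, and $\Delta\mathbf z$ must be a random variable. I would take this as implicit in the setup, since $\mathbf c'$ is a random element of a space with measurable diagonal, for example a Euclidean observer-state space, and $f\circ h$ is measurable. Beyond this, there is no real obstacle; the argument is a one-line application of the law of total expectation.
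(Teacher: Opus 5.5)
Your proposal is correct and takes essentially the same route as the paper. Both note that $\Delta\mathbf z=\mathbf 0$ whenever $\mathbf c'=\mathbf c$, then split the expectation over the observer-motion event and its complement; your indicator formulation also covers $0<p\le 1$ in a single argument, whereas the paper handles $p=1$ as a separate case because conditioning on a null event is undefined.
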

\end{takeaway}

\begin{proof}
When $\mathbf c'=\mathbf c$, the two encoder inputs
in Eq.~\ref{eq:app_energy_composite_encoder} coincide,
so $\Delta\mathbf z=\mathbf0$.
For $0<p<1$, the law of total expectation gives
\[
    \begin{aligned}
        \mathbb E[\|\Delta\mathbf z\|_2^2]
        ={}&
        p\,\mathbb E\!\left[
            \|\Delta\mathbf z\|_2^2
            \mid \mathbf c'\ne\mathbf c
        \right]\\
        &+(1-p)\,\mathbb E\!\left[
            \|\Delta\mathbf z\|_2^2
            \mid \mathbf c'=\mathbf c
        \right].
    \end{aligned}
\]
The second term is zero, proving
Eq.~\ref{eq:app_energy_main}.
For $p=1$, conditioning on observer motion leaves
the expectation unchanged; for $p=0$,
$\Delta\mathbf z=\mathbf0$ almost surely.
\end{proof}

\paragraph{Interpretation.}
For a fixed observer-motion distribution, the conditional
expectation in Eq.~\ref{eq:app_energy_main} measures
the encoder's average sensitivity to observer motion.
It includes the effects of motion magnitude and direction.
Comparisons between encoders therefore require the same
perturbation distribution and a common latent scale.
If observer motion occurs almost surely, $p=1$ and
the expected latent change is determined entirely
by this response term.

WING-LAM aims to reduce the response of its interaction
channel to observer motion while preserving interaction
information.

\subsection{Information Preservation and Frequency-Ordered Coordinates}
\label{app:dct_coordinates}

We use the discrete cosine transform (DCT) to decompose latent-action
trajectories into temporally ordered components. Here we briefly recall
the standard properties needed for our analysis.

Consider $H\geq 2$ latent actions
$\mathbf z_0,\ldots,\mathbf z_{H-1}\in\mathbb R^{D_i}$,
sampled at a fixed temporal interval, and stack them as
\[
    \mathbf Z
    =
    [\mathbf z_0,\ldots,\mathbf z_{H-1}]^\top
    \in\mathbb R^{H\times D_i}.
\]
Let $\mathbf C\in\mathbb R^{H\times H}$ denote the orthonormal DCT-II
matrix,
\[
    C_{kh}
    =
    \alpha_k
    \cos\left[
        \frac{\pi k}{H}
        \left(h+\frac{1}{2}\right)
    \right],
    \qquad
    \alpha_k
    =
    \begin{cases}
        1/\sqrt{H}, & k=0,\\
        \sqrt{2/H}, & k>0,
    \end{cases}
\]
where $h=0,\ldots,H-1$ indexes time and
$k=0,\ldots,H-1$ indexes temporal frequency.
The DCT coefficients are
\[
    \mathbf Q=\mathbf C\mathbf Z,
\]
with $\mathbf q_k^\top$ denoting the $k$-th row of $\mathbf Q$.

Because $\mathbf C$ is orthonormal, the full transform is invertible
and preserves the trajectory energy:
\begin{equation}
    \mathbf Z=\mathbf C^\top\mathbf Q,
    \qquad
    \|\mathbf Z\|_F^2
    =
    \|\mathbf Q\|_F^2
    =
    \sum_{k=0}^{H-1}\|\mathbf q_k\|_2^2.
    \label{eq:dct_energy}
\end{equation}

The mode index also provides a natural ordering of temporal variation.
Let $\mathbf D\in\mathbb R^{(H-1)\times H}$ be the first-difference
operator,
\[
    (\mathbf D\mathbf v)_h=v_{h+1}-v_h.
\]
For the DCT-II basis, $\mathbf C$ diagonalizes
$\mathbf D^\top\mathbf D$ \citep{dct2}, giving
\[
    \mathbf C\mathbf D^\top\mathbf D\mathbf C^\top
    =
    \operatorname{diag}(\lambda_0,\ldots,\lambda_{H-1}),
    \qquad
    \lambda_k
    =
    4\sin^2\left(\frac{\pi k}{2H}\right).
\]
Therefore,
\begin{equation}
    \sum_{h=0}^{H-2}
    \|\mathbf z_{h+1}-\mathbf z_h\|_2^2
    =
    \sum_{k=0}^{H-1}
    \lambda_k\|\mathbf q_k\|_2^2.
    \label{eq:dct_temporal_variation}
\end{equation}
Since
$0=\lambda_0<\lambda_1<\cdots<\lambda_{H-1}$,
higher-index modes correspond to progressively faster temporal
variation.

These properties only establish that the DCT provides an
information-preserving, frequency-ordered representation of the full
trajectory. We further evaluate this
property empirically through the cross-embodiment analyses in
Appendix~\ref{sec:app_spectral_similarity_analysis}.

%% file: text/appendix/related_work.tex
\section{Related Work}
\label{sec:app_rw}
\subsection{Vision-Language-Action and World-Action Models}
Vision-language-action (VLA) models have emerged as a dominant paradigm for
generalist robot manipulation, leveraging large-scale vision-language
pretraining and heterogeneous robot demonstrations to map visual observations
and language instructions directly to actions~\citep{rt2,openvla,octo,pi_0,pi_0.5,gr00t,behaviorvla}.
Recent works further scale model capacity, data diversity, and
cross-embodiment training, leading to increasingly general robot
policies across tasks and platforms~\citep{openx, rdt, cogact}.
Despite their strong semantic and visual priors, these policies primarily
optimize action prediction and do not explicitly model how the observed world
evolves under interaction.

World-action models (WAMs) provide a complementary direction by jointly
modeling robot actions and future visual dynamics.
Early approaches explored video prediction as a planning interface or as
auxiliary supervision for control, while recent WAMs increasingly build on
large-scale pretrained video generation models and learn actions together
with future visual representations~\citep{
dreamzero, fastwam, lbva, flexpi, giga}.
Such joint prediction allows action learning to benefit from the
spatiotemporal priors encoded by video models and has shown strong
data efficiency and generalization in robot manipulation.
Recent studies have further explored more efficient future prediction,
action-centered world modeling, and richer geometric or semantic prediction
targets~\citep{fastwam, flexpi, giga}.

Our work builds on this world-action modeling paradigm but focuses on a
different source of supervision.
Rather than relying only on robot trajectories to learn interaction dynamics,
we study how large-scale egocentric human videos can provide
interaction priors for robot action generation.
WING extracts these priors into a compact latent guidance signal, allowing
human interaction dynamics to complement the visual world-modeling capability
of a pretrained WAM without requiring human actions to be directly mapped to
robot controls.

\subsection{Learning from Egocentric Human Videos for Robot Manipulation}

Egocentric human videos offer a scalable source of manipulation experience,
covering diverse objects, environments, and interaction behaviors that are
costly to collect through robot teleoperation.
Large-scale datasets such as Ego4D, EPIC-KITCHENS, EgoDex, and EgoVerse have
made such experience increasingly accessible~\citep{
ego4d, epic-kitchens, egodex, egoverse}.
Early efforts primarily used human videos to learn transferable visual
representations or reward functions for downstream robot learning
~\citep{r3m, vip, liv}.
More recent approaches extract explicit motion cues, such as hand, wrist, or
body trajectories, and convert them into robot-compatible supervision through
retargeting, inverse kinematics, or embodiment-aware action representations
~\citep{ace-ego, egomimic, egovla, zeromimic, humanplus}.
These methods provide stronger behavioral supervision than visual pretraining,
but generally require recovering human motion in a form that can be explicitly
aligned with robot actions.

Latent-action learning provides a way to exploit human videos without
recovering executable actions. These methods infer action-related
representations from visual transitions and use them for policy or VLA
pretraining~\citep{lapa,univla,conla,como}. Because visual change is not
equivalent to interaction dynamics, however, the resulting representations
may also encode appearance, static scene content, and motion unrelated to
manipulation. Recent approaches reduce such variation through
task-conditioned representations, contrastive objectives, temporal cues, or
foreground--background disentanglement
~\citep{univla,conla,como,motion}. CD-LAM~\citep{cdlam} further studies
action-irrelevant bias in latent actions and improves their controllability
through embodiment-centric reconstruction, action-centric contrastive
learning, and latent-space calibration, including an evaluation of sensitivity
to synthetic camera shifts. Together, these methods establish the importance
of making latent actions more closely reflect embodied behavior, but primarily
focus on action relevance or world-model controllability rather than the
transferable temporal structure of egocentric interactions.

Observer motion makes this transfer problem particularly challenging.
Head- and body-induced camera motion can dominate the visual transition even
when the underlying hand--object interaction changes little. Prior work has
recovered or compensated for camera motion when reconstructing human
trajectories~\citep{egoaero}, while other methods treat observer motion as a
useful component of active perception~\citep{activemimic}. These approaches
clarify the role of observer motion in egocentric behavior, but pursue
different objectives from learning latent interaction dynamics for robot
policy guidance. Meanwhile, existing latent-action debiasing methods do not
explicitly use observer and interaction motion as separate supervision
channels. It therefore remains unclear how to reduce observer-induced
variation in egocentric latent actions while preserving the interaction
structure that can transfer to robot control.

WING addresses this problem in two stages. WING-LAM first uses geometric
motion routing to separate dominant observer-induced image motion from local
interaction residuals, and distills the resulting interaction representation
into an RGB-only latent-action encoder. Reducing observer sensitivity alone,
however, does not resolve differences in how humans and robots execute the
same interaction, particularly in execution speed and fine-scale motion.
WING therefore analyzes latent-action trajectories across temporal frequency
bands, identifies low-frequency dynamics with stronger correspondence between
semantically matched human and robot interactions, and uses these dynamics as
guidance for robot action generation. In this way, WING connects
observer-aware latent-action learning with the temporal structure used for
ego-to-robot transfer.

%% file: text/appendix/a_implementation_training_details.tex
\section{Implementation and Training Details}
\label{sec:app_implementation_training}

\input{text/appendix/a_1_oilam_architecture_training_details}
\input{text/appendix/a_2_policy_training_details}

%% file: text/appendix/a_1_oilam_architecture_training_details.tex
\subsection{WING-LAM Architecture and Training Details}
\label{sec:app_oilam_training}

This section provides the implementation details of WING-LAM introduced in
Section~\ref{sec:oilam}. WING-LAM is trained in two stages. We first construct
structured motion supervision from egocentric frame pairs and train a
motion-routed teacher that separates dominant observer-induced image motion
from local interaction residuals. We then freeze the teacher and distill its
interaction representation into an RGB-only student. The resulting student
is frozen and used offline to construct latent-action targets for subsequent
WING training. The optimization hyperparameters are summarized in
Table~\ref{tab:oilam_configuration}.

\subsubsection{Offline Motion Supervision Construction}
\label{sec:oilam_motion_supervision}

\paragraph{Point tracks and reliability.}
Given a frame pair
$(\mathbf{o}_{t},\mathbf{o}_{t+\delta})$,
we use AllTracker~\citep{alltracker} to extract point correspondences:
\[
(\mathbf p_{t,i},\mathbf p_{t+\delta,i}),
\qquad i=1,\ldots,N_t,
\]
where $\mathbf p_{t,i}\in\mathbb{R}^2$ denotes the image-plane
location of the $i$-th tracked point in frame $\mathbf{o}_{t}$,
$i$ indexes tracked correspondences across the frame pair, and
$N_t$ is the number of correspondences returned for that pair. AllTracker additionally provides visibility and tracking-confidence
estimates. We retain reliable correspondences and associate each
retained track with a scalar reliability weight $\rho_i$ derived
from these estimates.
EgoHOS~\citep{egohos} provides an initial hand-region seed.

\paragraph{Global observer-motion estimation.}
We exclude tracks inside the initial hand region and fit an affine
transformation to the remaining reliable background correspondences using
deterministic RANSAC~\citep{RANSAC}. The resulting warp
$W_{t\rightarrow t+\delta}$ provides an image-space approximation
of the dominant coherent motion induced by the moving observer. We also
record a fitting confidence $\rho_{\rm cam}$ from the RANSAC fit and use it
to weight the camera-motion supervision.

\paragraph{Interaction-region construction.}
For every reliable track, we compute its camera-compensated residual
following Eq.~\ref{eq:oilam_residual}:
\begin{equation}
    \mathbf r_{t,i}
    =
     W_{t\rightarrow t+\delta}^{-1}
    (\mathbf p_{t+\delta,i})
    -
    \mathbf p_{t,i}.
\end{equation}
Starting from the hand-region seed, we expand the interaction region to
spatially nearby tracks with sufficiently large compensated residual
motion. The resulting binary indicator $m_i\in\{0,1\}$ distinguishes
interaction tracks from background tracks. We denote
\begin{equation}
    \mathcal I_t=\{i:m_i=1\},
    \qquad
    \mathcal B_t=\{i:m_i=0\},
\end{equation}
and collect the privileged teacher inputs as
\begin{equation}
    \mathcal G_t
    =
    \{
      \mathbf p_{t,i},
      \mathbf p_{t+\delta,i},
      m_i,
      \rho_i
    \}_{i=1}^{N_t}.
\end{equation}
All quantities in $\mathcal G_t$ are used only during WING-LAM training.

\subsubsection{Motion-Routed Teacher Architecture and Objectives}
\label{sec:oilam_teacher}

\paragraph{Teacher architecture.}
The teacher follows the two-frame spatiotemporal Transformer architecture
of DreamDojo~\citep{dreamdojo}. The visual backbone encodes
$(\mathbf{o}_{t},\mathbf{o}_{t+\delta})$ into spatiotemporal visual features.
For each reliable correspondence from $\mathbf {p}_{t,i}$ to $\mathbf {p}_{t+\delta,i}$, we sample the visual features at the corresponding locations in the two frames and concatenate them with the normalized point coordinates, displacement, and tracking confidence to obtain the track descriptor $\mathbf d_{t,i}$. Two gated set-pooling modules aggregate background and interaction
descriptors separately:
\begin{equation}
\begin{aligned}
    \mathbf z_{t, \rm cam}^{T}
    &=
    P_{\rm cam}
    \left(
        \{\mathbf d_{t,i}\}_{i\in\mathcal B_t}
    \right),
\\
    \mathbf z_{t, \rm int}^{T}
    &=
    P_{\rm int}
    \left(
        \{\mathbf d_{t,i}\}_{i\in\mathcal I_t}
    \right),
\end{aligned}
\end{equation}
where
$\mathbf z_{\rm cam}^{T}\in\mathbb R^{D_c}$
and
$\mathbf z_{\rm int}^{T}\in\mathbb R^{D_i}$. $T$ denotes the teacher.
The two latent codes are decoded independently as: 
\begin{equation}
    \widehat{ W}_{t\rightarrow t+\delta}
    =
    D_{\rm cam}(\mathbf z_{t, \rm cam}^{T}),
    \qquad
    \widehat{\mathbf r}_{t,i}
    =
    D_{\rm int}
    (\mathbf p_{t,i},\mathbf z_{t, \rm int}^{T}).
\end{equation}

For every reliable track, the decoded global and interaction motions are
composed as:
\begin{equation}
    \widehat{\mathbf p}_{t+\delta,i}
    =
    \widehat{ W}_{t\rightarrow t+\delta}
    \left(
        \mathbf p_{t,i}
        +
        m_i\widehat{\mathbf r}_{t,i}
    \right).
    \label{eq:app_oilam_composition}
\end{equation}
Thus, background tracks are explained only by the global warp, whereas
interaction tracks additionally receive the decoded residual motion.

\paragraph{Motion supervision.}
Let $h_\beta(\cdot,\cdot)$ denote the coordinate-wise Smooth-$L_1$
distance with $\beta=0.01$, and let
$\langle f_i\rangle_{w_i}$ denote the normalized weighted average of
$f_i$ with weights $w_i$. We supervise the global motion, canonical
interaction residual, and their composition by:
\begin{equation}
\begin{aligned}
\mathcal L_{\rm cam}
&=
\rho_{cam}\left\langle
h_\beta
\left(
\widehat{\mathbf W}_{t\rightarrow t+\delta}(\mathbf p_{t,i}),
\mathbf W_{t\rightarrow t+\delta}(\mathbf p_{t,i})
\right)
\right\rangle_{\rho_i,\,i\in\mathcal B_t},
\\
\mathcal L_{\rm int}
&=
\left\langle
h_\beta
\left(
\widehat{\mathbf r}_{t,i},
\mathbf r_{t,i}
\right)
\right\rangle_{\rho_i,\,i\in\mathcal I_t},
\\
\mathcal L_{\rm trk}
&=
\left\langle
h_\beta
\left(
\widehat{\mathbf p}_{t+\delta,i},
\mathbf p_{t+\delta,i}
\right)
\right\rangle_{\rho_i}.
\end{aligned}
\label{eq:app_oilam_motion_losses}
\end{equation}
Here, $\mathcal L_{\rm cam}$ constrains the decoded observer-motion warp
on background tracks, $\mathcal L_{\rm int}$ supervises the
camera-compensated interaction residuals, and $\mathcal L_{\rm trk}$
ensures that the two components jointly explain the observed track
endpoints.

\subsubsection{Camera Intervention and Representation Regularization}
\label{sec:oilam_camera_intervention}

\paragraph{Camera intervention.}
Motion routing reduces observer-induced motion in the interaction
branch, but does not guarantee that the resulting representation is
insensitive to observer-specific cues. We therefore construct a
camera-intervened counterpart of each transition by perturbing only
the global camera motion while preserving the underlying interaction
motion. Encoding the natural and camera-intervened transitions yields
teacher interaction representations
$\mathbf z_{t,\mathrm{int}}^{T,\mathrm{nat}}$
and
$\mathbf z_{t,\mathrm{int}}^{T,\mathrm{cam}}
\in\mathbb R^{D_i}$, respectively.
We encourage them to agree through:
\begin{equation}
    \mathcal L_{\mathrm{cons}}
    =
    \frac{1}{D_i}
    \sum_{d=1}^{D_i}
    h_\beta
    \left(
        [\mathbf z_{t,\mathrm{int}}^{T,\mathrm{nat}}]_d,
        [\mathbf z_{t,\mathrm{int}}^{T,\mathrm{cam}}]_d
    \right),
    \label{eq:oilam_teacher_consistency}
\end{equation}
where $[\mathbf z]_d$ denotes the $d$-th coordinate of
$\mathbf z\in\mathbb R^{D_i}$. 
This consistency objective encourages the interaction representation
to retain interaction information while reducing its sensitivity to
camera motion.

\paragraph{Representation regularization.}
To prevent collapse and reduce redundancy, we apply variance and
covariance regularization to the interaction representations from the
natural and intervened views jointly. Let $\sigma_d$ denote the batchwise
standard deviation of latent dimension $d$. The variance regularizer is
\begin{equation}
    \mathcal L_{\rm var}
    =
    \frac{1}{D_i}
    \sum_{d=1}^{D_i}
    \max(0,\gamma-\sigma_d)^2,
    \qquad
    \gamma=0.1.
\end{equation}
Let $\mathbf C\in\mathbb R^{D_i\times D_i}$ denote the covariance
matrix of the standardized latent dimensions, where $C_{de}$ is the
covariance between latent dimensions $d$ and $e$. We further use
\begin{equation}
    \mathcal L_{\rm cov}
    =
    \frac{1}{D_i(D_i-1)}
    \sum_{d\neq e}
    C_{de}^{2}.
\end{equation}

The complete teacher objective is
\begin{equation}
\begin{aligned}
    \mathcal L_{\rm teacher}
    ={}&
    \lambda_{\rm trk}\mathcal L_{\rm trk}
    +
    \lambda_{\rm cam}\mathcal L_{\rm cam}
    +
    \lambda_{\rm int}\mathcal L_{\rm int}
\\
    &+
    \lambda_{\rm cons}\mathcal L_{\rm cons}
    +
    \lambda_{\rm var}\mathcal L_{\rm var}
    +
    \lambda_{\rm cov}\mathcal L_{\rm cov}.
    \label{eq:app_oilam_teacher}
\end{aligned}
\end{equation}
The motion-supervision terms establish the camera--interaction routing,
while the consistency and representation regularizers further suppress
observer-motion leakage and stabilize the interaction latent space.

\subsubsection{Interaction Distillation}
\label{sec:oilam_distillation}

The motion-routed teacher depends on privileged point tracks,
interaction masks, and global-motion estimates, making it unsuitable for
large-scale latent extraction. We therefore freeze the teacher and
distill only its interaction representation into an RGB-only student
$S_\phi$.

The student is initialized from the teacher visual encoder but receives
only the RGB frame pair. Its spatiotemporal visual tokens are mean-pooled
and linearly projected into a $D_i$-dimensional interaction latent; no
point tracks, masks, or motion parameters are provided. For each training transition, we use the frozen teacher's
interaction representation from the natural view as the
common distillation target:
\begin{equation}
    \mathbf z_{t,\mathrm{int}}^T
    :=
    \mathbf z_{t,\mathrm{int}}^{T,\mathrm{nat}}
    =
    E_{T,\mathrm{int}}
    (\mathbf{o}_{t},\mathbf{o}_{t+\delta},\mathcal G_t),
    \label{eq:oilam_distillation_target}
\end{equation}
where $E_{T,\mathrm{int}}$ denotes the interaction output
of the teacher $E_T$. The student encodes the natural and camera-intervened frame
pairs into $\mathbf z_t^{\mathrm{nat}}$ and
$\mathbf z_t^{\mathrm{aug}}$, respectively.
Both representations are trained to match
$(\mathbf z_{t,\mathrm{int}}^T)$
and to agree with each other through
Eq.~\ref{eq:oilam_distill}.
The teacher remains frozen, and gradients update only
the student $S_\phi$. After distillation, all privileged motion-processing components and the
teacher are discarded. The frozen RGB-only student defines the WING-LAM
latent action
\begin{equation}
    \mathbf z_t
    =
    S_\phi
    \left(
        \mathbf{o}_{t},
        \mathbf{o}_{t+\delta}
    \right),
\end{equation}
which is subsequently used offline to construct the latent-action
trajectories for WING.

\input{tab/a_1_oilam_configuration}

%% file: tab/a_1_oilam_configuration.tex
\begin{table}[t]
    \centering
    \caption{\textbf{WING-LAM training
    configuration.}}
    \label{tab:oilam_configuration}
    \footnotesize
    \setlength{\tabcolsep}{4pt}
    \renewcommand{\arraystretch}{1.05}
    \begin{tabularx}{\linewidth}{p{0.28\linewidth}X}
        \toprule
        Setting & Value \\
        \midrule
        \multicolumn{2}{l}{\textit{Motion-Supervised Teacher Learning}} \\
        Latent dimensions & $D_{\mathrm{c}}=16$; $D_{\mathrm{i}}=32$ \\
        Loss weights & $\lambda_{\mathrm{trk}}=\lambda_{\mathrm{cam}}
        =\lambda_{\mathrm{int}}=\lambda_{\mathrm{var}}=1$;
        $\lambda_{\mathrm{cons}}=0.1$; $\lambda_{\mathrm{cov}}=0.01$ \\
        Optimizer & AdamW \\
        Training steps & $60{,}000$ \\
        Learning rate & 
        $5\!\times\!10^{-6}$ \\
        Weight decay & $10^{-4}$ \\
        \midrule
        \multicolumn{2}{l}{\textit{Interaction Representation Distillation}} \\
        Latent dimensions & $D_{\mathrm{i}}=32$  \\
        Loss weights & $\lambda_{\mathrm{view}}=1$ \\
        Optimizer & AdamW \\
        Training steps & $60{,}000$ \\
        Learning rate & $10^{-5}$ \\
        Weight decay & $0.05$ \\
        \bottomrule
    \end{tabularx}
\end{table}

%% file: text/appendix/a_2_policy_training_details.tex
\subsection{Egocentric-Video Pretraining Details}
\label{sec:app_pretraining_details}

\subsubsection{Pre-training Data and Preprocessing.}
We pre-train our model on three large-scale egocentric video datasets: Ego4D, EgoDex, and EgoVerse, totaling approximately 200 hours of ego-videos. Specifically, Ego4D contributes approximately 20 hours, EgoDex contributes about 85 hours, and EgoVerse contributes 95 hours. Language instructions are obtained from the original task descriptions of EgoDex and EgoVerse, while those for Ego4D are constructed from the verb–noun action labels in its official long-term anticipation annotations. All egocentric videos are standardized to 30 FPS. The sampled RGB frames are resized to $384\times640$ and encoded by the frozen Wan2.2 VAE.

\paragraph{Latent Action Preprocessing.}
To construct latent-action supervision, we sample egocentric videos at
a fixed physical interval of $0.2\,\mathrm{s}$, corresponding to every
6 frames at 30 FPS. The sampled frames are center-cropped to a 4:3
aspect ratio, resized to $240\times320$, and consecutive sampled
endpoints are encoded by the frozen WING-LAM into 32-dimensional
latent transitions. Nine sampled endpoints yield $H=8$ transitions,
spanning a $1.6\,\mathrm{s}$ temporal window.

\paragraph{Hand Pose Preprocessing}
For hand-pose-based pretraining, we construct a 200-hour corpus by supplementing it with 10 hours of hand-annotated data from each of EgoDex and EgoVerse, replacing the Ego4D portion that lacks the required annotations.
We then follow the Ego2Robot~\citep{ego2robot} pipeline in RobotManip~\citep{robotmanip} to extract
hand end-effector (EEF) trajectories from these annotations.
The resulting trajectories serve as action supervision during
model pretraining.

\subsubsection{Guidance Predictor Training}
During egocentric-video pretraining, we train the guidance predictor to
estimate interaction-centric spectral guidance from the current visual
observation and language instruction. For each training sample, the frozen
WING-LAM encoder extracts an \(8\times32\) latent-action trajectory from the
sampled video frames. We apply an orthonormal temporal DCT and retain the
lowest four frequency components, producing a \(4\times32\) guidance target.

The predictor is a lightweight, current-observation-conditioned query
Transformer. It uses four learned queries that cross-attend to a context
formed by concatenating linearly projected current-frame tokens and
language-instruction tokens. A single Transformer block with a hidden
dimension of 1,024, eight attention heads, and a fourfold-expanded feed-forward
network produces four query features, which are projected to 32 dimensions to
predict the \(4\times32\) normalized guidance target. The prediction loss is
weighted by 0.1. Future frames are used only to
construct the training target and are never provided to the predictor.

\subsubsection{Optimization Setup}
We conduct egocentric-video pre-training for 40 epochs on 32 NVIDIA H100 GPUs. We use a per-GPU batch size of 16, resulting in an effective global batch size of 512. The learning rate is set to $1\times10^{-4}$. To reduce training overhead, both the Wan2.2 VAE latents and WING-LAM latent transitions are pre-computed and cached before training. The video backbone is initialized from Wan2.2-TI2V-5B~\citep{wan}.

\subsection{Robot Policy Training Details}
\label{sec:app_robot_policy_training}

\subsubsection{Training Framework}
For our model, the guidance predictor and shared visual backbone are
initialized from the egocentric-video pretrained checkpoint. During robot policy training, the guidance predictor is jointly optimized with
the robot policy. Given the current visual observation, language instruction,
and robot proprioceptive state, the predictor estimates the low-frequency
spectral representation of the anticipated interaction dynamics. The
predicted coefficients are projected into guidance tokens and injected into the action model through gated cross-attention, providing an additional conditioning
signal for action prediction. The spectral prediction targets are constructed
following the same procedure as in egocentric-video pretraining. The overall training objective is
\begin{equation}
    \mathcal{L}_{\mathrm{total}}
    =
    \lambda_{\mathrm{video}}\mathcal{L}_{\mathrm{video}}
    +
    \lambda_{\mathrm{action}}\mathcal{L}_{\mathrm{action}}
    +
    \lambda_{\mathrm{pred}}\mathcal{L}_{\mathrm{pred}},
\end{equation}
where
\(\lambda_{\mathrm{video}}=\lambda_\mathrm{action}=1\), and
\(\lambda_{\mathrm{pred}}=0.1\).

\subsubsection{Optimization Setup}
Robot policy post-training is conducted on 8 NVIDIA H100 GPUs with a global batch size of 128. We use the AdamW optimizer with a peak learning rate of $2\times10^{-4}$. The learning rate is linearly warmed up and then decayed according to a cosine schedule.

%% file: text/appendix/b_evaluation_details.tex
\section{Evaluation Details}
\label{sec:app_evaluation_details}
In this section, we present the details of our evaluation experiments.
\input{text/appendix/b_1_observer_motion_evaluation_details}
\input{text/appendix/b_3_matched}
\input{text/appendix/b_4_simulation_evaluation_details}
\input{text/appendix/b_5_real_world_evaluation_protocols}

%% file: text/appendix/b_1_observer_motion_evaluation_details.tex
\subsection{Latent-Action Representation Evaluation Details}
\label{sec:app_observer_motion_diagnosis}

We evaluate frozen latent-action representations along three
complementary dimensions:
\textit{(i) what camera- and interaction-motion information can be decoded
from the latent representation,}
\textit{(ii) how sensitive the representation is to controlled changes in
camera motion,} and
\textit{(iii) how well it preserves high-level action semantics.}
This section describes the datasets, targets, and evaluation
protocols used for these analyses.

\subsubsection{Datasets and Frozen Representations}

We evaluate WING-LAM together with DreamDojo~\citep{dreamdojo}, CD-LAM~\citep{cdlam}, LAPA~\citep{lapa},
VILLA-X~\citep{villax}, and UniVLA~\citep{univla} using their frozen latent-action encoders.
The motion decodability and camera-sensitivity analyses use EgoDex~\citep{egodex}, EgoVerse~\citep{egoverse}, and Ego4D~\citep{ego4d}.
EgoDex provides synchronized head-pose annotations, while EgoVerse
and Ego4D provide additional diversity in natural observer and
interaction motion. We split videos by source
trajectory to avoid overlap between probe training and evaluation. The action-semantic evaluation follows the LARYBench protocol and is described separately in Appendix~\ref{sec:app_action_semantics}.

\subsubsection{Motion Decodability}

We evaluate how much camera- and interaction-motion information can
be decoded from each frozen latent representation.
For camera motion, we use the relative head motion when ground-truth
head pose is available, and otherwise use the estimated global image
motion as a proxy.
For interaction motion, we remove the estimated global camera motion
from the tracked point motion and aggregate the remaining local motion
into a fixed-dimensional target.
We evaluate each frozen latent representation using three probe
families: a ridge linear probe, a two-layer MLP with GELU activations,
and a random Fourier feature (RFF) probe followed by regularized linear
regression. The MLP is trained with AdamW, while the ridge and RFF
probes use regularized linear regression. In all cases, the latent
encoder remains frozen and only the probe is optimized.

For each target, we train a probe on the frozen latent representation
and report the coefficient of determination ($R^2$) on the held-out
test set:
\begin{equation}
    R^2
    =
    1-
    \frac{
        \sum_{n=1}^{N}
        \left\|
            \mathbf y_n-\widehat{\mathbf y}_n
        \right\|_2^2
    }{
        \sum_{n=1}^{N}
        \left\|
            \mathbf y_n-\bar{\mathbf y}
        \right\|_2^2
    },
    \qquad
    \bar{\mathbf y}
    =
    \frac{1}{N}\sum_{n=1}^{N}\mathbf y_n,
    \label{eq:motion_probe_r2}
\end{equation}
where $\mathbf y_n$ and $\widehat{\mathbf y}_n$ denote the ground-truth
and predicted motion targets for sample $n$, respectively.
Higher $R^2$ indicates that the corresponding motion information is
more readily decoded from the latent representation.
An $R^2$ of zero corresponds to the constant-mean baseline, while
negative values indicate lower predictive accuracy than this baseline.

\subsubsection{Sensitivity to Camera Perturbations}
\label{sec:app_camera_sensitivity}

We complement motion decodability with a controlled intervention
that directly measures how much a frozen latent representation changes
under additional camera motion.
For each frame pair $(\mathbf x_t,\mathbf x_{t+\delta})$, we construct
matched \emph{dynamic} and \emph{static} perturbations.
Dynamic perturbations apply different global transformations to the two
frames, introducing additional inter-frame camera motion, whereas static
perturbations apply the same transformation to both frames.
The transformation magnitude and image-processing pipeline are matched
between the two conditions, so that their difference primarily reflects
sensitivity to relative camera motion rather than interpolation or boundary
artifacts. For the multi-model comparison in Fig.~\ref{fig:lam}, we instantiate this
protocol using two transformations $A$ and $B$ of equal magnitude and
opposite direction.
The pairs $AB$ and $BA$ form the dynamic conditions, while $AA$ and $BB$
serve as their matched static controls.
Let $\mathbf z^{PQ}$ denote the latent representation obtained after
applying transformation $P$ to the first frame and $Q$ to the second,
and let $\mathbf z^{0}$ denote the representation of the unperturbed pair.

Because latent scales differ across models, each latent dimension is
standardized using statistics computed from the corresponding training
split. We measure the standardized deviation of each perturbed representation
from the unperturbed pair as
\begin{equation}
    d^{PQ}
    =
    \sqrt{
        \frac{1}{D_i}
        \sum_{d=1}^{D_i}
        \left(
            \frac{
                [\mathbf z^{PQ}]_d-[\mathbf z^{0}]_d
            }{\sigma_d}
        \right)^2
    },
    \qquad
    PQ\in\{AB,BA,AA,BB\},
    \label{eq:camera_latent_deviation}
\end{equation}
where $\sigma_d$ is the training-set standard deviation of latent
dimension $d$ and $D_i$ denotes the latent action's dimension.
We report the dynamic-minus-static deviation,
\begin{equation}
    \frac{1}{2}
    \left[
        d^{AB}
        +
        d^{BA}
    \right]
    -
    \frac{1}{2}
    \left[
        d^{AA}
        +
        d^{BB}
    \right].
    \label{eq:camera_sensitivity}
\end{equation}
A smaller value indicates that the representation changes less in response
to additional inter-frame camera motion after accounting for matched
image perturbations.

We evaluate five increasing perturbation levels,
$(0.25\,\mathrm{px},0.05^\circ)$,
$(0.5\,\mathrm{px},0.1^\circ)$,
$(1\,\mathrm{px},0.2^\circ)$,
$(2\,\mathrm{px},0.4^\circ)$, and
$(4\,\mathrm{px},0.8^\circ)$,
where each pair specifies the translation and rotation magnitude.

\subsubsection{Action-Semantic Preservation}
\label{sec:app_action_semantics}

We further evaluate whether reducing camera-motion sensitivity
preserves interaction-relevant semantic information.
Following the LARYBench~\citep{larybench} evaluation protocol, we
evaluate frozen latent representations on action-category
classification across Ego4D~\citep{ego4d}, EgoDex~\citep{egodex}, EPIC-KITCHENS~\citep{epic-kitchens}, and HoloAssist~\citep{holoassist}.
For each method, the latent encoder remains frozen and the same
classification protocol is applied to the representations.

%% file: text/appendix/b_3_matched.tex
\subsection{Pair Construction for Spectral Similarity Analysis}
\label{sec:app_pair_construction}
\input{img/matched}
To evaluate cross-embodiment correspondence, we construct matched and mismatched human--robot clip pairs through a hierarchical annotation protocol. We first retrieve candidate human and robot episodes from EgoDex and EgoDex-Eval according to their task annotations. These task labels are used only for coarse-grained episode-level filtering, since episodes assigned to the same task may still contain substantially different local motion patterns, interaction phases, or even opposite motion directions.

Within each candidate episode pool, human annotators further inspect the temporal content and select short clips according to their dominant action semantics. Specifically, a matched pair is formed by selecting a human clip and a robot clip that exhibit similar dominant interaction dynamics, rather than requiring frame-wise or trajectory-level alignment. Consequently, matched clips may differ substantially in object instance, scene appearance, viewpoint, operator, and precise end-effector trajectory, while still sharing the same underlying interaction pattern. This procedure therefore performs coarse task-level retrieval first, followed by fine-grained clip-level semantic matching within the retrieved episodes.

Mismatched pairs are constructed from the same task-conditioned candidate pools, but the selected human and robot clips exhibit inconsistent dominant motion semantics. This preserves a comparable task distribution while removing action-level correspondence. To reduce dependence among evaluation samples, each source episode is used at most once in the matched set and at most once in the mismatched set. The final evaluation set contains 100 matched and 100 mismatched pairs, covering 25 distinct tasks. The pair set and all match/mismatch labels were finalized before computing any latent trajectories or spectral similarities. To assess annotation reliability, two additional annotators independently reviewed the constructed pairs and showed high agreement with the original labels.

We use our frozen WING-LAM, the same latent-action encoder used to construct spectral guidance in WING, to encode both human and robot clips. WING-LAM is trained only on egocentric videos and is not adapted using robot data for this analysis. Therefore, this evaluation directly tests whether the interaction-centric latent space learned from egocentric data generalizes across embodiments, and whether semantically matched human and robot interactions exhibit consistent temporal structure in this shared representation.

For both egocentric and robot clips, we use the same physical sampling interval of $0.2\mathrm{s}$, corresponding to an effective sampling rate of 5 Hz. Each $H=8$ latent-action trajectory is constructed from 9 sampled observation endpoints and spans $1.6\mathrm{s}$. We then apply DCT-II along the temporal dimension and compute cosine similarity between the corresponding low-, high-, and full-frequency representations. We report the average similarity over the 100 matched and 100 mismatched pairs in Figure~\ref{fig:lam}.

%% file: img/matched.tex
\begin{figure}
    \centering
    \includegraphics[width=1\linewidth]{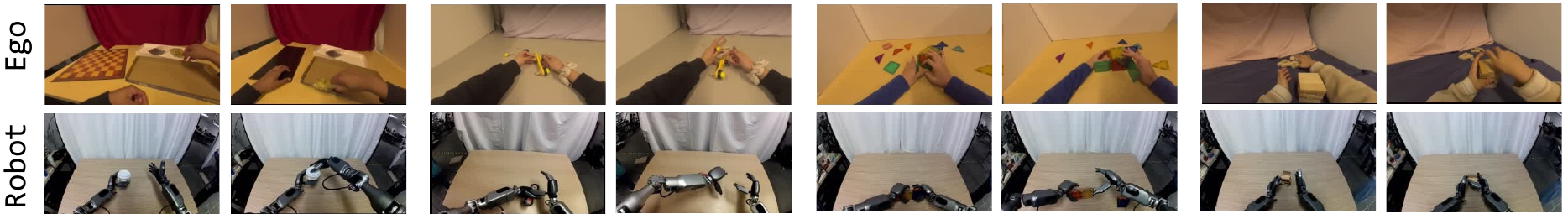}
    \caption{\textbf{Examples of matched ego--robot interaction pairs.}
Each group shows two frames from an egocentric clip and two frames
from its semantically matched robot clip.}
    \label{fig:matched_pair_examples}
\end{figure}

%% file: text/appendix/b_4_simulation_evaluation_details.tex
\subsection{Simulation Evaluation Details}
\label{sec:app_simulation_results}

\subsubsection{Benchmark Protocols and Metrics}
\label{sec:app_simulation_protocols}

\paragraph{LIBERO.}
LIBERO contains four suites---Spatial, Object, Goal, and Long---with 10 tasks
and 500 demonstrations per suite. We train one checkpoint across the four
suites and evaluate each task for 50 rollouts, giving 2,000 rollouts in total.
We use the benchmark episode budgets for each suite and report the mean task
success rate for every suite together with their four-suite average.

\paragraph{RoboTwin~2.0.}
RoboTwin evaluates 50 bimanual tasks in clean and domain-randomized scenes.
Following the standard multi-task protocol, training uses 2,500 clean
demonstrations and 25,000 randomized demonstrations. Each task is evaluated
for 100 rollouts under each scene condition. We report the 50-task clean mean,
the randomized mean, and their arithmetic average.

\paragraph{RoboCasa--GR1.}
RoboCasa--GR1 contains 24 tabletop tasks executed by a Fourier GR1 humanoid
with two 7-DoF arms, two 6-DoF hands, and a 3-DoF waist. We use the Unified 1K
dataset with 1,000 teleoperated demonstrations per task and egocentric RGB
observations. Each task is evaluated over 50 rollouts, and the main metric is the average of the 24
task success rates.

\paragraph{Evaluation.}
All simulation tables report success rate in percent. Notably, the FastWAM result on RoboCasa--GR1 is our
reproduction trained to convergence. Complete RoboTwin and RoboCasa task-level tables are reported
below; method-specific optimization details are given in Appendix~\ref{sec:app_robot_policy_training}.

\subsubsection{Baselines}
\label{sec:app_baselines}

We compare our method with recent robot manipulation policies spanning three
representative paradigms: vision-language-action models, latent-action
based policies, and world-action modeling approaches. Together, these baselines
cover methods that directly map multimodal observations to robot actions,
methods that introduce an intermediate latent action representation, and
methods that explicitly exploit predicted future dynamics or world
representations for policy learning.

\paragraph{Vision-language-action models.}
We first consider representative vision-language-action (VLA) policies that
directly predict robot actions from visual observations and language
instructions. These include $\pi_0$~\citep{pi_0} and
$\pi_{0.5}$~\citep{pi_0.5}, X-VLA~\citep{xvla}, LingBot-VLA~\citep{lbvla}, GR00T
(N1.6 and N1.7)~\citep{gr00t}, StarVLA-OFT~\citep{starvla} and 
LDA-1B~\citep{lda1b}. This group represents strong recent VLA systems that
learn robot control directly in the action space without explicitly introducing
a latent-action or future-dynamics interface.

\paragraph{Latent-action based methods.}
We additionally compare against approaches that introduce latent actions as an
intermediate representation for learning manipulation
behaviors, including LAPA~\citep{lapa}, UniVLA~\citep{univla},
LaWAM~\citep{lawam}, Being-H0.7~\citep{beingh07}, and VLA-JEPA~\citep{vlajepa}. These methods are
particularly relevant to our setting because they investigate whether
interaction or action information can be represented in a latent space rather
than being directly tied to embodiment-specific robot commands. In contrast, our method uses interaction-centric latent dynamics with reduced sensitivity to observer motion and emphasizes their low-frequency temporal structure as guidance for robot action learning.

\paragraph{World-action modeling methods.}
The third group contains methods that explicitly leverage predictive world
representations, future observations, or future latent dynamics to facilitate
action learning. We include Motus~\citep{motus},
LingBot-VA~\citep{lbva}, FastWAM~\citep{fastwam},
GigaWorld-Policy~\citep{giga}, LDA-1B~\citep{lda1b} and DiT4DiT~\citep{dit4dit}.
These baselines are the closest to our method in their use of anticipated future dynamics for control. Our method differs by constructing guidance from interaction-centric latent dynamics extracted from egocentric videos and using their low-frequency components, which exhibit stronger correspondence across human and robot executions, rather than conditioning the policy on the full latent trajectory.

\paragraph{Real-world baselines.}
For the real-world experiments, we focus on representative strong methods from
the above families and compare against GR00T-N1.7~\citep{gr00t},
$\pi_{0.5}$~\citep{pi_0.5}, and FastWAM~\citep{fastwam}. These baselines
provide comparisons with a strong general-purpose VLA model, a recent
large-scale robot policy, and a predictive world-action modeling approach,
respectively, under the same real-world evaluation protocol.

%% file: text/appendix/b_5_real_world_evaluation_protocols.tex
\subsection{Real-World Evaluation Protocols}
\label{sec:app_real_world_protocols}

\input{img/6_real_world_benchmark}

\subsubsection{Tasks and Progress Metrics}
\label{sec:app_real_world_tasks}

The real-world suite contains four tasks.  \emph{Pack Objects} requires placing a two-brick LEGO assembly, a tape roll, and an inverted plastic cup in a container.  \emph{Stack Cups} requires arranging three cups into a stable tower.  \emph{Battery Insertion} requires grasping a battery, aligning it with a mouse compartment, and seating it fully.  \emph{Long-Horizon Battery Assembly} additionally requires relocating the mouse and battery to the workspace before insertion and moving the assembled mouse to a target region. We collect 100 demonstrations each for Pack Objects and Stack Cups, and 150 demonstrations each for Battery Insertion and Long-Horizon Battery Assembly. For each task, we evaluate the policy over 20 trials. We report binary success and a stage-wise score.  Success requires the final task condition to remain satisfied for at least three seconds.  The score is
the furthest valid milestone reached while all preceding milestones remain
satisfied; Pack Objects is order-independent and scores the number of objects
placed. Table~\ref{tab:task_rubric} gives the complete rubric. The deployment code will be released.

\input{tab/6_real_world_results}

Each rollout is autonomous: no human correction, object reset, or policy
restart is permitted before termination.  Hardware- or communication-only
interruptions are discarded and repeated, whereas policy-induced safety
terminations count as failures.  The recorded protocol uses 20 trials per
reported setting and the same predefined initial configurations, observation
inputs, language instructions, action limits, termination rules, and low-level
controller for all methods.

\subsubsection{Generalization Conditions}
\label{sec:app_real_world_generalization}

The generalization study changes one factor at a time: object appearance, task-irrelevant background, object geometry, or initial
spatial layout.  Camera placement, robot base pose, and all other factors are held fixed.  Changed objects, colors, backgrounds, and poses are excluded from task-specific robot training demonstrations. Geometry is not evaluated for the battery tasks because changing the insertion dimensions would change the task mechanics. The task-specific generalization settings are defined as follows:

\input{tab/6_real_world_generalization}

All shifted conditions use the same milestone rubric as the base setting.
Multi-factor shifts, if evaluated, must be reported separately rather than
mixed into the single-factor averages.

\subsubsection{Platform and Deployment}
\label{sec:app_real_world_deployment}

We conduct real-world experiments on a dual-arm Agilex Cobot Magic robot
platform. The robot is equipped with three Orbbec DaBai DC1 cameras, including one
front-facing camera and two wrist-mounted cameras, which provide visual
observations for policy inference. The robot action space consists of 14 dimensions, including the 6-DoF pose
commands for each arm and additional gripper control signals. All real-world experiments use the same inference pipeline without additional task-specific or model-specific modifications. The deployment framework and corresponding code will be released to facilitate reproducibility.


%% file: img/6_real_world_benchmark.tex
\begin{figure}[htbp]
    \centering
    \includegraphics[width=0.9\linewidth]{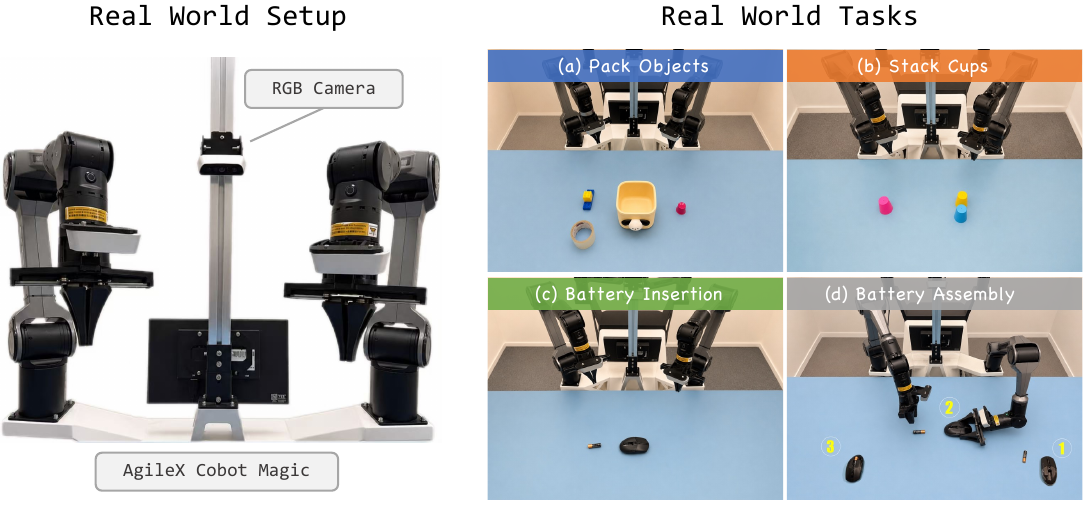}
    \caption{\textbf{Real-world tasks setup.} The left panel shows the robot platform and
    sensing setup; the remaining panels show base-setting executions of Pack
    Objects, Stack Cups, Battery Insertion, and Long-Horizon Battery Assembly.}
    \label{fig:real_world_suite}
\end{figure}

%% file: tab/6_real_world_results.tex
\begin{table*}[t]
    \centering
    \caption{\textbf{Stage-wise scores distinguish partial progress from task
    completion.} Each task receives a maximum score of 10.}
    \label{tab:task_rubric}
    \vspace{0.5em}

    \small
    \renewcommand{\arraystretch}{1.2}
    \setlength{\tabcolsep}{8pt}

    \begin{tabular}{
        p{0.15\textwidth}
        p{0.42\textwidth}
        c
    }
    \toprule
    \textbf{Task} & \textbf{Sub-goals} & \textbf{Score} \\
    \midrule

    \multirow{3}{*}{Pack Objects}
    & Pack one object    & 3.0 \\
    & Pack two objects   & 6.0 \\
    & Pack three objects & 10.0 \\
    \midrule

    \multirow{4}{*}{Stack Cups}
    & Place the left base cup    & 3.0 \\
    & Place the right base cup     & 5.0 \\
    & Place the top cup       & 7.5 \\
    & Achieve a stable stack & 10.0 \\
    \midrule

    \multirow{4}{*}{Battery Insertion}
    & Grasp the battery  & 3.0 \\
    & Align the battery with the slot and release it           & 5.0 \\
    & Press the battery into place & 7.5 \\
    & Release the assembled mouse         & 10.0 \\
    \midrule

    \multirow{4}{*}{Battery Assembly}
    & Move the parts to the workspace & 3.0 \\
    & Insert the battery               & 5.0 \\
    & Grasp the assembled mouse    & 7.5 \\
    & Move the mouse to the target place                 & 10.0 \\
    \bottomrule
    \end{tabular}
\end{table*}

%% file: tab/6_real_world_generalization.tex
\begin{itemize}
    \setlength{\itemsep}{8pt}
    \setlength{\topsep}{2pt}
    \setlength{\parsep}{0pt}

    \item \textit{Pack Objects:}
    \begin{itemize}
        \setlength{\itemsep}{2pt}
        \setlength{\topsep}{2pt}

        \item \textbf{Appearance:}
        Replace the original cup and LEGO bricks with instances of
        different colors while preserving their shapes and sizes.

        \item \textbf{Background:}
        Introduce task-irrelevant tabletop objects or vary the
        illumination without obstructing the target objects.

        \item \textbf{Geometry:}
        Change the way the LEGO pieces are connected, while keeping its function in the task, and change the size of the cup.

        \item \textbf{Spatial layout:}
        Relocate the tape roll and cup within predefined reachable
        regions while keeping the container fixed.
    \end{itemize}

    \item \textit{Stack Cups:}
    \begin{itemize}
        \setlength{\itemsep}{2pt}
        \setlength{\topsep}{2pt}

        \item \textbf{Appearance:}
        Replace the original cups with cups of different colors but
        identical geometry.

        \item \textbf{Background:}
        Add task-irrelevant clutter to the tabletop while keeping the
        manipulation area reachable.

        \item \textbf{Geometry:}
        Replace the original cups with a larger affordance-equivalent set
        that can still be arranged into the same tower structure.

        \item \textbf{Spatial layout:}
        Randomize the initial positions of all three cups within
        predefined regions while preserving task feasibility.
    \end{itemize}

    \item \textit{Battery Insertion:}
    \begin{itemize}
        \setlength{\itemsep}{2pt}
        \setlength{\topsep}{2pt}

        \item \textbf{Appearance:}
        Replace the original battery with a color-different but
        dimension-matched instance.

        \item \textbf{Background:}
        Add task-irrelevant tabletop clutter without occluding the mouse
        or battery.

        \item \textbf{Spatial layout:}
        Randomize the initial positions and orientations of the mouse and
        battery within predefined reachable regions.

        \item \textbf{Geometry:}
        Not evaluated, because changing the battery dimensions or
        compartment geometry would alter the insertion mechanics and
        therefore change the task itself.
    \end{itemize}

    \item \textit{Battery Assembly:}
    \begin{itemize}
        \setlength{\itemsep}{2pt}
        \setlength{\topsep}{2pt}

        \item \textbf{Appearance:}
        Replace the original battery with a color-different but
        dimension-matched instance.

        \item \textbf{Background:}
        Introduce task-irrelevant tabletop clutter while preserving the
        visibility of all task objects.

        \item \textbf{Spatial layout:}
        Vary the initial positions and orientations of the mouse and
        battery while retaining the required relocation, insertion, and
        final-placement stages.

        \item \textbf{Geometry:}
        Not evaluated, because changing the insertion geometry would
        alter the underlying task requirements.
    \end{itemize}
\end{itemize}

%% file: text/appendix/c_additional_results.tex
\section{Additional Results}
\label{sec:app_additional_results}

\input{text/appendix/c_1_additional_oilam_results}

\input{text/appendix/c_1_additional_dct_results}
\input{text/appendix/c_2_additional_policy_analyses}

\input{text/appendix/c_3_full_simulation_results}

\input{text/appendix/c_4_full_real_gen}
\clearpage
\input{text/appendix/case_study}

%% file: text/appendix/c_1_additional_oilam_results.tex
\subsection{Additional Latent Action Results}

\input{text/appendix/c_1_1_camera_interaction_readout_results}
\input{text/appendix/c_1_3_lary}
\input{text/appendix/c_1_2_qualitative_interaction_motion_visualization}

%% file: text/appendix/c_1_1_camera_interaction_readout_results.tex
\subsubsection{Camera--Interaction Readout Results}
\label{sec:app_factorization_results}

\paragraph{Factor-selective routing across datasets.}
As shown in Table~\ref{tab:factor_readout_all}, across EgoDex, EgoVerse, and Ego4D, WING-LAM consistently exhibits complementary factor routing: $\mathbf z_{\mathrm{int}}$ preserves interaction-related information while suppressing observer-motion information, whereas $\mathbf z_{\mathrm{cam}}$ captures observer-motion variation with minimal interaction information. Notably, for EgoDex, observer-motion readout is evaluated against the recorded 6D head pose rather than the affine motion proxy used to supervise WING-LAM during training. The reduced head-motion decodability from $\mathbf z_{\mathrm{int}}$ therefore indicates that suppression learned from the affine proxy transfers, at least partially, to genuine egocentric head motion. The student model further inherits this routing behavior from the trajectory-assisted teacher, demonstrating that the interaction-centric representation can be obtained without privileged inputs at inference time.

\paragraph{Probe robustness.}
We further evaluate the readout results using probes with different capacities,
including Ridge, MLP, and RFF. Although stronger probes improve absolute
readout scores for some baselines, the qualitative routing behavior of
WING-LAM remains unchanged, with complementary interaction and camera-motion
selectivity preserved across probe settings.

\input{tab/6_factor_selectivity_results}

%% file: tab/6_factor_selectivity_results.tex
\begin{table*}[t]
\centering
\small
\setlength{\tabcolsep}{10.0pt}
\renewcommand{\arraystretch}{1.05}
\caption{\textbf{WING-LAM provides complementary camera- and interaction-factor
readout across datasets.}
We report held-out $R^2$ for camera and interaction targets
using Ridge, MLP, and RFF probes.
WING-LAM's representations exhibit weak camera readout
while retaining strong interaction readout, whereas its camera
representation exhibits the opposite pattern.
Camera targets are ground-truth 6D head pose for EgoDex
and image-space camera-motion proxies for EgoVerse and Ego4D.
Each $\pm$ value denotes the half-width of a 95\%
grouped-bootstrap confidence interval.
Among interaction representations, \textbf{bold} and
\underline{underlined} values mark the lowest and second-lowest
$R^2_{cam}$ and the highest and second-highest $R^2_{int}$,
respectively, within each dataset and probe.}
\label{tab:factor_readout_all}
\resizebox{\textwidth}{!}{
\begin{tabular}{lcccccc}
\toprule
& \multicolumn{2}{c}{Ridge}
& \multicolumn{2}{c}{MLP}
& \multicolumn{2}{c}{RFF} \\
\cmidrule(lr){2-3}
\cmidrule(lr){4-5}
\cmidrule(lr){6-7}
Representation
& $R^2_{cam}$  & $R^2_{int}$ 
& $R^2_{cam}$  & $R^2_{int}$ 
& $R^2_{cam}$  & $R^2_{int}$  \\
\midrule
\multicolumn{7}{l}{
\textit{\textcolor{orange}{EgoDex}}
} \\
DreamDojo
& 0.32 \scalebox{0.7}{$\pm$ 0.04} & 0.12 \scalebox{0.7}{$\pm$ 0.08}
& 0.70 \scalebox{0.7}{$\pm$ 0.05} & 0.25 \scalebox{0.7}{$\pm$ 0.04}
& 0.54 \scalebox{0.7}{$\pm$ 0.08} & 0.17 \scalebox{0.7}{$\pm$ 0.02} \\
CD-LAM
& 0.54 \scalebox{0.7}{$\pm$ 0.03} & 0.12 \scalebox{0.7}{$\pm$ 0.00}
& 0.71 \scalebox{0.7}{$\pm$ 0.05} & 0.21 \scalebox{0.7}{$\pm$ 0.02}
& 0.56 \scalebox{0.7}{$\pm$ 0.08} & 0.17 \scalebox{0.7}{$\pm$ 0.03} \\
LAPA
& \underline{-0.23} \scalebox{0.7}{$\pm$ 0.05} & 0.02 \scalebox{0.7}{$\pm$ 0.07}
& \textbf{-0.27} \scalebox{0.7}{$\pm$ 0.04} & -0.03 \scalebox{0.7}{$\pm$ 0.01}
& \underline{-0.23} \scalebox{0.7}{$\pm$ 0.00} & 0.02 \scalebox{0.7}{$\pm$ 0.00} \\
VILLA-X
& -0.03 \scalebox{0.7}{$\pm$ 0.06} & -0.41 \scalebox{0.7}{$\pm$ 0.05}
& 0.07 \scalebox{0.7}{$\pm$ 0.02} & -0.32 \scalebox{0.7}{$\pm$ 0.06}
& -0.03 \scalebox{0.7}{$\pm$ 0.01} & -0.59 \scalebox{0.7}{$\pm$ 0.01} \\
UniVLA
& \textbf{-0.56} \scalebox{0.7}{$\pm$ 0.11} & -0.05 \scalebox{0.7}{$\pm$ 0.01}
& \underline{-0.03} \scalebox{0.7}{$\pm$ 0.05} & 0.01 \scalebox{0.7}{$\pm$ 0.06}
& \textbf{-0.56} \scalebox{0.7}{$\pm$ 0.21} & -0.05 \scalebox{0.7}{$\pm$ 0.00} \\
\rowcolor{gray!10}
teacher $\mathbf{z}_{\mathrm{int}}$
& -0.06 \scalebox{0.7}{$\pm$ 0.06} & \textbf{0.82} \scalebox{0.7}{$\pm$ 0.04}
& \underline{-0.03} \scalebox{0.7}{$\pm$ 0.04} & \textbf{0.74} \scalebox{0.7}{$\pm$ 0.09}
& -0.06 \scalebox{0.7}{$\pm$ 0.04} & \textbf{0.82} \scalebox{0.7}{$\pm$ 0.11} \\
\rowcolor{gray!10}
student ${\mathbf{z}}_{\mathrm{int}}$
& -0.02 \scalebox{0.7}{$\pm$ 0.08} & \underline{0.57} \scalebox{0.7}{$\pm$ 0.02}
& 0.00 \scalebox{0.7}{$\pm$ 0.03} & \underline{0.48} \scalebox{0.7}{$\pm$ 0.03}
& -0.02 \scalebox{0.7}{$\pm$ 0.03} & \underline{0.54} \scalebox{0.7}{$\pm$ 0.07} \\
\midrule
\rowcolor{orange!13}
teacher $\mathbf{z}_{\mathrm{cam}}$
& 0.93 \scalebox{0.7}{$\pm$ 0.03} & -0.04 \scalebox{0.7}{$\pm$ 0.05}
& 0.93 \scalebox{0.7}{$\pm$ 0.06} & -0.10 \scalebox{0.7}{$\pm$ 0.06}
& 0.92 \scalebox{0.7}{$\pm$ 0.08} & -0.04 \scalebox{0.7}{$\pm$ 0.00} \\
\midrule
\multicolumn{7}{l}{
\textit{\textcolor{green}{EgoVerse}}
} \\
DreamDojo
& 0.26 \scalebox{0.7}{$\pm$ 0.01} & -0.22 \scalebox{0.7}{$\pm$ 0.02}
& 0.74 \scalebox{0.7}{$\pm$ 0.08} & 0.45 \scalebox{0.7}{$\pm$ 0.04}
& 0.65 \scalebox{0.7}{$\pm$ 0.05} & -0.16 \scalebox{0.7}{$\pm$ 0.03} \\
CD-LAM
& 0.34 \scalebox{0.7}{$\pm$ 0.08} & -0.14 \scalebox{0.7}{$\pm$ 0.02}
& 0.75 \scalebox{0.7}{$\pm$ 0.03} & 0.46 \scalebox{0.7}{$\pm$ 0.02}
& 0.63 \scalebox{0.7}{$\pm$ 0.08} & -0.06 \scalebox{0.7}{$\pm$ 0.03} \\
LAPA
& -0.08 \scalebox{0.7}{$\pm$ 0.06} & -0.33 \scalebox{0.7}{$\pm$ 0.00}
& -0.25 \scalebox{0.7}{$\pm$ 0.01} & -0.27 \scalebox{0.7}{$\pm$ 0.07}
& -0.08 \scalebox{0.7}{$\pm$ 0.00} & -0.33 \scalebox{0.7}{$\pm$ 0.00} \\
VILLA-X
& 0.26 \scalebox{0.7}{$\pm$ 0.04} & -0.19 \scalebox{0.7}{$\pm$ 0.07}
& 0.07 \scalebox{0.7}{$\pm$ 0.05} & 0.02 \scalebox{0.7}{$\pm$ 0.02}
& 0.26 \scalebox{0.7}{$\pm$ 0.02} & 0.04 \scalebox{0.7}{$\pm$ 0.03} \\
UniVLA
& \textbf{-0.52} \scalebox{0.7}{$\pm$ 0.09} & -0.14 \scalebox{0.7}{$\pm$ 0.05}
& \textbf{-0.47} \scalebox{0.7}{$\pm$ 0.15} & -0.03 \scalebox{0.7}{$\pm$ 0.01}
& \underline{-0.52} \scalebox{0.7}{$\pm$ 0.01} & -0.22 \scalebox{0.7}{$\pm$ 0.08} \\
\rowcolor{gray!10}
teacher $\mathbf{z}_{\mathrm{int}}$
& \underline{-0.28} \scalebox{0.7}{$\pm$ 0.06} & \textbf{0.79} \scalebox{0.7}{$\pm$ 0.06}
& \underline{-0.27} \scalebox{0.7}{$\pm$ 0.03} & \textbf{0.79} \scalebox{0.7}{$\pm$ 0.04}
& -0.28 \scalebox{0.7}{$\pm$ 0.00} & \textbf{0.83} \scalebox{0.7}{$\pm$ 0.06} \\
\rowcolor{gray!10}
student ${\mathbf{z}}_{\mathrm{int}}$
& -0.23 \scalebox{0.7}{$\pm$ 0.07} & \underline{0.74} \scalebox{0.7}{$\pm$ 0.03}
& \underline{-0.27} \scalebox{0.7}{$\pm$ 0.02} & \underline{0.76} \scalebox{0.7}{$\pm$ 0.07}
& \textbf{-0.61} \scalebox{0.7}{$\pm$ 0.04} & \underline{0.69} \scalebox{0.7}{$\pm$ 0.04} \\
\midrule
\rowcolor{green!20}
teacher $\mathbf{z}_{\mathrm{cam}}$
& 0.97 \scalebox{0.7}{$\pm$ 0.02} & -0.35 \scalebox{0.7}{$\pm$ 0.05}
& 0.96 \scalebox{0.7}{$\pm$ 0.01} & -0.09 \scalebox{0.7}{$\pm$ 0.00}
& 0.97 \scalebox{0.7}{$\pm$ 0.01} & -0.64 \scalebox{0.7}{$\pm$ 0.16} \\
\midrule
\multicolumn{7}{l}{
\textit{\textcolor{blue}{Ego4D}}
} \\
DreamDojo
& 0.47 \scalebox{0.7}{$\pm$ 0.00} & 0.15 \scalebox{0.7}{$\pm$ 0.04}
& 0.72 \scalebox{0.7}{$\pm$ 0.03} & 0.08 \scalebox{0.7}{$\pm$ 0.07}
& 0.59 \scalebox{0.7}{$\pm$ 0.05} & 0.15 \scalebox{0.7}{$\pm$ 0.00} \\
CD-LAM
& 0.28 \scalebox{0.7}{$\pm$ 0.00} & 0.15 \scalebox{0.7}{$\pm$ 0.04}
& 0.77 \scalebox{0.7}{$\pm$ 0.04} & 0.09 \scalebox{0.7}{$\pm$ 0.07}
& 0.54 \scalebox{0.7}{$\pm$ 0.06} & 0.15 \scalebox{0.7}{$\pm$ 0.00} \\
LAPA
& \textbf{-0.03} \scalebox{0.7}{$\pm$ 0.06} & -0.11 \scalebox{0.7}{$\pm$ 0.06}
& \underline{0.02} \scalebox{0.7}{$\pm$ 0.05} & -0.10 \scalebox{0.7}{$\pm$ 0.03}
& \textbf{-0.02} \scalebox{0.7}{$\pm$ 0.00} & -0.12 \scalebox{0.7}{$\pm$ 0.02} \\
VILLA-X
& 0.34 \scalebox{0.7}{$\pm$ 0.08} & -0.06 \scalebox{0.7}{$\pm$ 0.07}
& 0.28 \scalebox{0.7}{$\pm$ 0.06} & -0.35 \scalebox{0.7}{$\pm$ 0.05}
& 0.34 \scalebox{0.7}{$\pm$ 0.00} & -0.08 \scalebox{0.7}{$\pm$ 0.00} \\
UniVLA
& 0.15 \scalebox{0.7}{$\pm$ 0.07} & -0.11 \scalebox{0.7}{$\pm$ 0.07}
& 0.16 \scalebox{0.7}{$\pm$ 0.05} & -0.06 \scalebox{0.7}{$\pm$ 0.01}
& 0.15 \scalebox{0.7}{$\pm$ 0.00} & -0.11 \scalebox{0.7}{$\pm$ 0.00} \\
\rowcolor{gray!10}
teacher $\mathbf{z}_{\mathrm{int}}$
& \underline{-0.02} \scalebox{0.7}{$\pm$ 0.07} & \textbf{0.66} \scalebox{0.7}{$\pm$ 0.00}
& \textbf{-0.03} \scalebox{0.7}{$\pm$ 0.01} & \textbf{0.64} \scalebox{0.7}{$\pm$ 0.06}
& \textbf{-0.02} \scalebox{0.7}{$\pm$ 0.02} & \textbf{0.71} \scalebox{0.7}{$\pm$ 0.05} \\
\rowcolor{gray!10}
student ${\mathbf{z}}_{\mathrm{int}}$
& 0.07 \scalebox{0.7}{$\pm$ 0.09} & \underline{0.42} \scalebox{0.7}{$\pm$ 0.08}
& 0.04 \scalebox{0.7}{$\pm$ 0.04} & \underline{0.38} \scalebox{0.7}{$\pm$ 0.05}
& \underline{0.07} \scalebox{0.7}{$\pm$ 0.00} & \underline{0.42} \scalebox{0.7}{$\pm$ 0.02} \\
\midrule
\rowcolor{blue!15}
teacher $\mathbf{z}_{\mathrm{cam}}$
& 0.93 \scalebox{0.7}{$\pm$ 0.03} & -0.11 \scalebox{0.7}{$\pm$ 0.01}
& 0.95 \scalebox{0.7}{$\pm$ 0.02} & -0.09 \scalebox{0.7}{$\pm$ 0.06}
& 0.93 \scalebox{0.7}{$\pm$ 0.02} & -0.09 \scalebox{0.7}{$\pm$ 0.07} \\
\bottomrule
\end{tabular}
}
\end{table*}

%% file: text/appendix/c_1_3_lary.tex
\subsubsection{Detailed LARYBench Results}
Table~\ref{tab:latent_action_comparison} reports the detailed LARYBench~\citep{larybench} action-classification results across Ego4D, EPIC-KITCHENS, HoloAssist, and EgoDex. All latent-action encoders are frozen and evaluated under the same classification protocol. WING-LAM achieves the highest average accuracy across the four datasets, indicating that observer debiasing preserves action-related semantic information in the learned latent representation. These per-dataset results provide the details corresponding to the aggregate LARYBench result reported in the main text.

\input{tab/lary_detail}

%% file: tab/lary_detail.tex
\begin{table}[t]
\centering
\caption{Comparison of different latent action models across four datasets.}
\label{tab:latent_action_comparison}
\setlength{\tabcolsep}{2pt}
\small
\renewcommand{\arraystretch}{1.1}
\begin{tabular}{lcccccc}
\toprule
\textbf{Model} 
& \textbf{WING-LAM} 
& \textbf{DreamDojo} 
& \textbf{CD-LAM} 
& \textbf{LAPA} 
& \textbf{UniVLA} 
& \textbf{VILLA-X} \\
\midrule
Ego4D 
& 64.13 & 60.92 & 60.67 & 21.02 & 15.22 & 14.80 \\
EPIC-KITCHENS 
& 40.57 & 40.65 & 41.19 & 17.58 & 9.95 & 10.35 \\
HoloAssist 
& 48.41 & 46.23 & 48.95 & 22.40 & 15.84 & 20.69 \\
EgoDex 
& 68.77 & 65.92 & 62.95 & 22.56 & 17.43 & 17.20 \\
\midrule
\textbf{Avg.} 
& \textbf{55.47} 
& 53.43 
& 53.44 
& 20.89 
& 14.61 
& 15.76 \\
\bottomrule
\end{tabular}
\end{table}

%% file: text/appendix/c_1_2_qualitative_interaction_motion_visualization.tex
\subsubsection{Qualitative Visualization of Interaction Motion}
\label{sec:app_qualitative_interaction_motion}

To further examine what motion patterns are captured by WING-LAM, we conduct a qualitative analysis by feeding its latent actions into the teacher decoder to reconstruct motion trajectories. Specifically, we extract trajectories from three consecutive clips and concatenate them sequentially to visualize longer-term motion patterns. The decoded trajectories are then visualized across diverse egocentric interactions in Figure~\ref{fig:oilam_interaction_motion}. Across all examples, the decoded trajectories
consistently concentrate on moving hands, forearms, manipulated objects, and their surrounding interaction regions. Their dominant directions broadly follow the local changes between the two frames across diverse activities,
including object transfer, tool use, surface wiping, and food preparation.
Comparatively few interaction responses appear in background regions, even
when the egocentric viewpoint changes noticeably. These examples provide
qualitative evidence that the WING-LAM latent preferentially
preserves interaction-relevant motion rather than camera-induced global scene motion.

\FloatBarrier
\renewcommand{\floatpagefraction}{0.80}
\input{img/d_6_qualitative_interaction_motion}

%% file: img/d_6_qualitative_interaction_motion.tex
\begin{figure}[p]
    \centering
    \includegraphics[height=0.60\textheight,keepaspectratio]{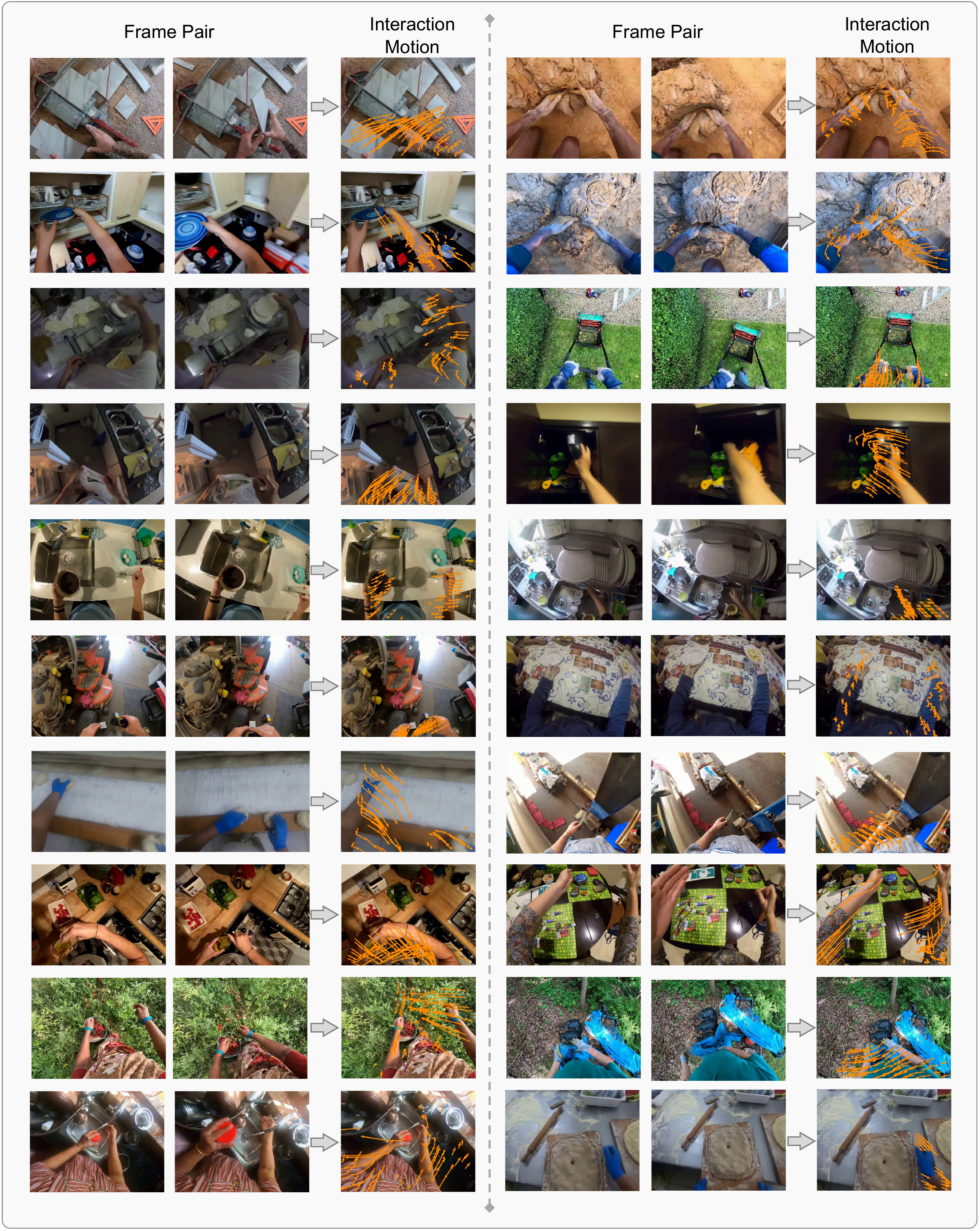}
    \caption{\textbf{Qualitative visualization of interaction motion decoded
    from WING-LAM.}}
    \label{fig:oilam_interaction_motion}
\end{figure}

%% file: text/appendix/c_1_additional_dct_results.tex
\subsection{Additional Spectral Results}

\subsubsection{Cross-Embodiment Spectral Similarity Analysis}
\label{sec:app_spectral_similarity_analysis}
Experimental settings are detailed in
Appendix~\ref{sec:app_pair_construction}.
As shown in Figure~\ref{fig:lam}(d), low-frequency
latents achieve the largest separation between semantically matched
and mismatched human--robot interaction pairs, with a matched--mismatched
gap of $0.51$ (95\% bootstrap CI: $[0.48,0.53]$).
In contrast, full-spectrum and high-frequency representations obtain
substantially smaller gaps of $0.26$ ($[0.22,0.27]$) and $0.05$
($[0.03,0.10]$), respectively. Importantly, repeating the same analysis with the pretrained DreamDojo LAM yields the same qualitative ordering, with low-frequency components exhibiting the largest matched--mismatched separation, suggesting that this frequency-selective correspondence is not specific to WING-LAM. To verify that this advantage is not merely driven by the dominant
energy of the zeroth component, we further exclude $q_0$ and evaluate
the remaining low-frequency dynamics. The $q_1:q_3$ components
retain a matched-pair similarity of $0.42$, compared with only $0.08$
for mismatched pairs. In contrast, extending
the representation to $q_1:q_7$ decreases the matched similarity
to $0.31$ while increasing the mismatched similarity to $0.14$. This shows that the low-frequency
correspondence persists even after removing the zeroth component, and
that incorporating higher-frequency components weakens the separation
between semantically matched and mismatched interactions.

Overall, these results suggest that low-frequency latent components preserve interaction semantics with stronger cross-embodiment correspondence, whereas higher-frequency components contain variation that is less consistently shared across human and robot executions.

\subsubsection{Spectral Bandwidth Gap Between Egocentric and Robot Data}
\input{img/bandwidth}
We additionally examine the spectral distributions of latent-action trajectories in egocentric and robot data. 
For this dataset-level bandwidth analysis, we use the pretrained DreamDojo LAM rather than WING-LAM. 
DreamDojo has been trained on both egocentric and robot data, making it better suited for characterizing differences in the temporal frequency distributions of the two domains without strongly confounding the analysis with domain-specific encoder generalization. 
In contrast, WING-LAM is trained specifically on egocentric videos; applying it directly to robot observations could make the estimated spectral distribution partly reflect its out-of-domain generalization behavior rather than the intrinsic temporal characteristics of robot trajectories. 
Therefore, this analysis is intended to provide a more domain-balanced characterization of the temporal bandwidth gap between egocentric and robot interaction data, rather than to evaluate the cross-embodiment correspondence of the WING-LAM representation itself.

Using the pretrained DreamDojo LAM, we encode ego and robot trajectories with the same physical temporal sampling interval of \(0.2\,\mathrm{s}\), forming \(H=8\) latent-action sequences that span \(1.6\,\mathrm{s}\). We then apply DCT-II along the temporal dimension.
We measure the fraction of spectral energy retained by the lowest $K=4$ frequency components. 
The spectral analysis reveals a noticeable difference in temporal bandwidth between egocentric and robot trajectories, with egocentric trajectories exhibiting broader frequency distributions than robot trajectories. 
Robot trajectories are substantially more concentrated in low-frequency components, whereas egocentric trajectories contain relatively more high-frequency variation. We observe the same qualitative trend when repeating the analysis with WING-LAM, indicating that this bandwidth difference is not specific to the DreamDojo representation.
This observation provides additional motivation for examining cross-embodiment correspondence separately across frequency bands.

\input{img/cluster}
\FloatBarrier
\renewcommand{\floatpagefraction}{0.50}
\clearpage
\subsubsection{Action Semantics Across Frequency Bands}
\label{sec:app_cluster}
We examine how action-semantic information is organized across temporal
frequency bands on the official Ego4D validation split of
LARYBench~\citep{larybench}. For each video, we apply temporal DCT to its
8-step WING-LAM latent trajectory and construct low-frequency ($q_{0:3}$),
high-frequency ($q_{4:7}$), and full-spectrum ($q_{0:7}$) representations.
Each representation is reduced to 32 dimensions using PCA and evaluated
with the same clustering and visualization procedures. We measure the
agreement between the resulting clusters and the official action labels
using normalized mutual information (NMI). The low-frequency representation
achieves substantially higher NMI ($0.55\pm0.02$) than the full-spectrum
($0.31\pm0.01$) and high-frequency ($0.17\pm0.05$) representations,
indicating that action semantics are more clearly organized in the
low-frequency latent subspace.

%% file: img/bandwidth.tex
\begin{wrapfigure}{r}{0.4\textwidth}
    \centering
    \vspace{-8pt}
    \includegraphics[width=0.4\textwidth]{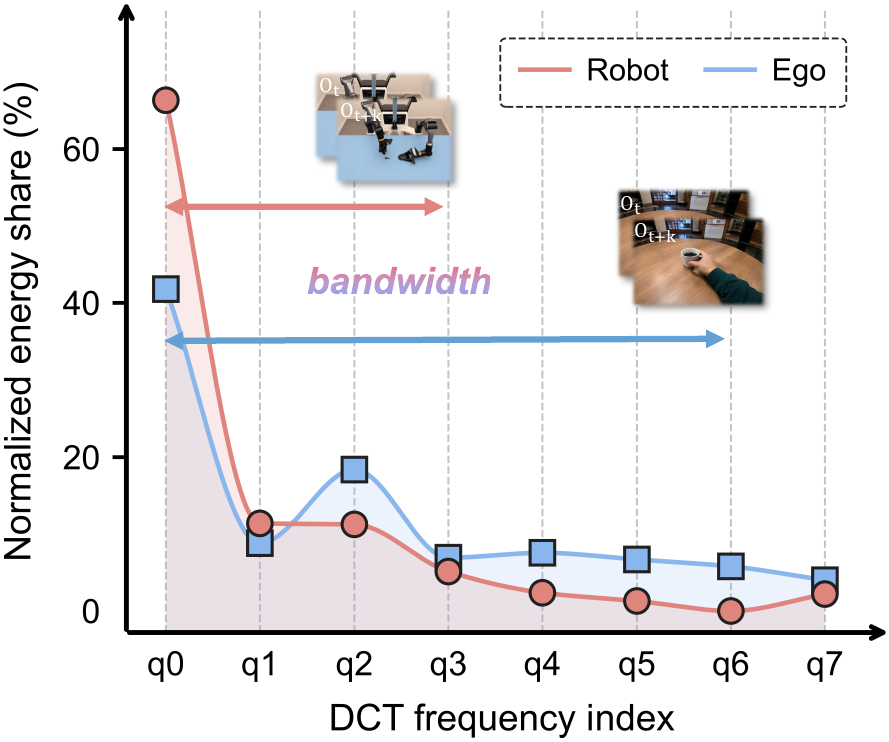}
    \vspace{-10pt}
    \caption{
    \textbf{Bandwidth gap between egocentric and robot data.}
    }
    \label{fig:bandwidth_gap}
    \vspace{-10pt}
\end{wrapfigure}

%% file: img/cluster.tex
\begin{figure}[p]
    \centering
    \includegraphics[width=0.85\linewidth]{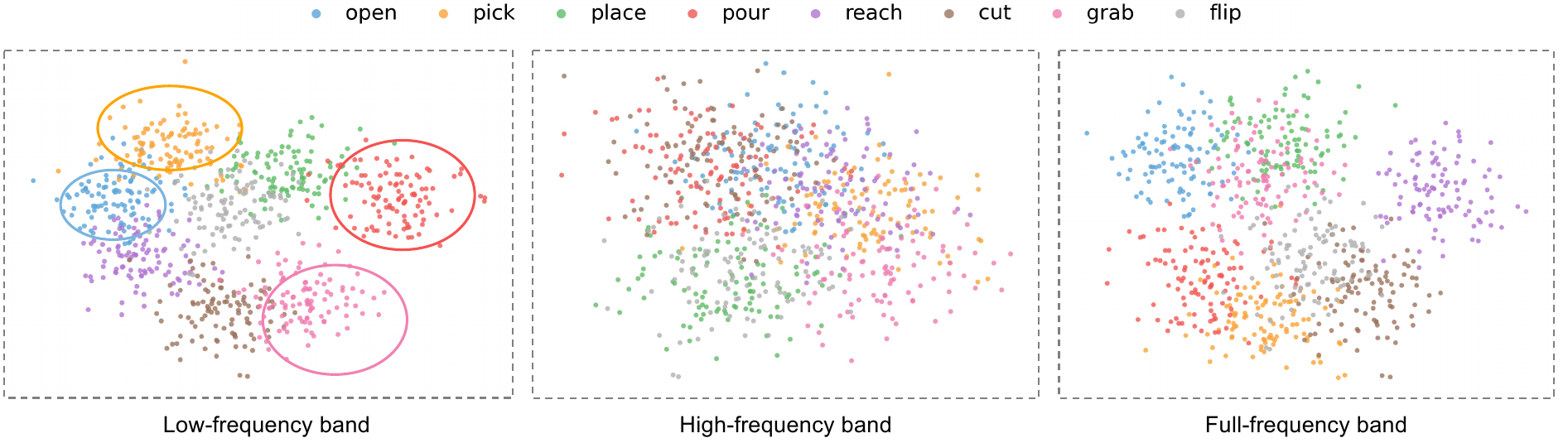}
    \caption{\textbf{Action-semantic organization across temporal frequency bands.}}
    \label{fig:cluster}
\end{figure}

%% file: text/appendix/c_2_additional_policy_analyses.tex
\vspace{-1em}

\subsection{Additional Policy Analyses}

\subsubsection{Latency Measurement}
\input{tab/latency}
We measure policy inference latency on a single NVIDIA RTX 4090 GPU
and report both the median and 95th-percentile latency. For evaluation, we follow the real-world deployment setting,
using an action chunk size of 32 and 10 denoising steps.
As shown in Table~\ref{tab:latency}, our model incurs higher
latency than lightweight policy baselines such as GR00T and $\pi_{0.5}$,
due to the additional computation introduced by the dense world-modeling
architecture. Cosmos-Policy~\citep{cosmos} and
Motus~\citep{motus} require 845.6\,ms and 1532.8\,ms per policy inference,
respectively.
Thus, although inference latency remains a limitation of the current
implementation, our model remains computationally competitive with
generative policies of comparable complexity.

\subsubsection{Ablation on Denoising Steps}
\input{tab/denoising_step}

The number of denoising steps \(N\) controls the amount of iterative
refinement at inference time. We conduct this ablation on RoboTwin~2.0
and fix the action chunk size to 32 for all settings. As shown in
Table~\ref{tab:denoising_steps}, single-step inference achieves an
average success rate of only \(80.46\%\), whereas increasing \(N\) to
2 substantially improves it to \(93.20\%\), with performance remaining
relatively stable for larger \(N\). Among the evaluated settings,
\(N=12\) achieves the highest average success rate, with \(94.56\%\)
and \(93.04\%\) on the Clean and Randomized settings, respectively,
and is therefore used for the RoboTwin~2.0 result reported in the main
table. Notably, nearby settings perform similarly (e.g.,
\(93.55\%\) success rate at \(N=10\)), indicating that the overall
performance is not sensitive to the precise choice of \(N\) once
sufficient iterative refinement is used.

\WFclear
\clearpage
\subsubsection{Egocentric Pretraining Efficiency}
\label{sec:app_pre_eff}
\input{img/pretrain_eff}
We investigate how the extent of egocentric pretraining affects subsequent robot policy optimization. All pre-training checkpoints are trained on the same 200h egocentric video corpus, and we select checkpoints obtained after 15k, 30k, 45k, and 60k pretraining steps. Each checkpoint is then used to initialize the robot policy, which is trained on the \emph{Pack Objects} task under the same optimization setting. We use a held-out split of the \emph{Pack Objects} data as the validation set and report the total validation loss of the full model throughout robot training. As shown in Fig.~\ref{fig:ego_pretraining_efficiency}, longer egocentric pretraining consistently leads to lower validation loss and faster convergence, indicating that Stage-I pretraining provides a stronger initialization and improves the optimization efficiency of downstream robot policy learning.

\subsubsection{Downstream Data Efficiency}
\label{sec:app_data_eff}
To further evaluate the effectiveness of egocentric pretraining, we study how performance changes as the amount of downstream robot training data is reduced. We compare models initialized with and without our pretraining under 100\%, 50\%, and 25\% of the downstream training data on LIBERO and the real-world standard setting.
\begin{figure}[htbp]
\centering
\includegraphics[width=0.75\linewidth]{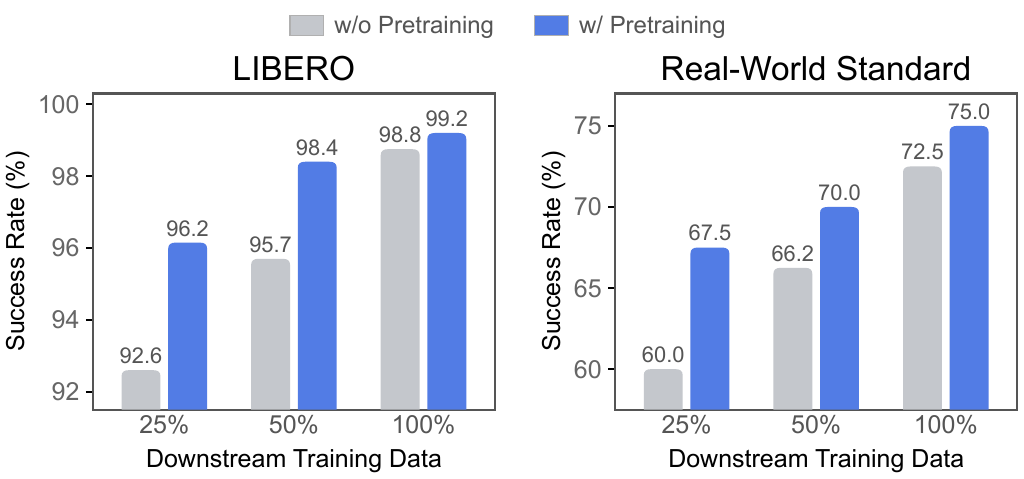}
\caption{\textbf{Downstream data efficiency.} We compare downstream performance with and without egocentric pretraining using different fractions of robot training data. Pretraining consistently improves success rates on both LIBERO and the real-world standard setting, with larger gains as the amount of downstream data decreases.}
\label{fig:data_eff}
\end{figure}

As shown in Figure~\ref{fig:data_eff}, egocentric pretraining consistently improves downstream performance on both LIBERO and real-world manipulation. More importantly, the advantage becomes increasingly pronounced as the amount of downstream robot data decreases, with the largest improvement observed in the 25\% real-world data setting. These results demonstrate that our pretraining provides transferable knowledge that improves robot-data efficiency, particularly in low-data regimes.

\newpage

%% file: tab/latency.tex
\begin{wraptable}{r}{0.4\columnwidth}
\vspace{-0.5\baselineskip}
\vspace{-1em}
\centering
\caption{
\textbf{Policy inference latency on a single NVIDIA RTX 4090 GPU.}
We report the median and P95 latency over repeated runs.
All values are in milliseconds.
}
\label{tab:latency}

\small
\setlength{\tabcolsep}{6pt}
\renewcommand{\arraystretch}{1.0}

\begin{tabular}{lcc}
\toprule
Method & Median & P95 \\
\midrule

$\pi_{0.5}$
& 90.6 & 93.8 \\

GR00T
& 63.4 & 67.3 \\

FastWAM
& 229.0 & 234.2 \\

FastWAM-Joint
& 510.1 & 515.4 \\

Cosmos-Policy
& 845.6 & 852.3 \\

Motus
& 1532.8 & 1551.6 \\

\textbf{Ours}
& \textbf{500.6} & \textbf{517.1} \\
\bottomrule
\end{tabular}

\vspace{-0.5\baselineskip}
\end{wraptable}

%% file: tab/denoising_step.tex
\begin{wraptable}{r}{0.4\textwidth}
\vspace{-1em}
\centering
\caption{\textbf{Denoising steps.} RoboTwin success rate (\%).}
\vspace{-8pt}
\label{tab:denoising_steps}
\setlength{\tabcolsep}{8pt}
\renewcommand{\arraystretch}{1}
\begin{tabular}{c|cc}
\toprule
$N_{\mathrm{denoise}}$ & \textbf{Clean} & \textbf{Rand.} \\
\midrule
1  & 80.40 & 80.52 \\
2  & 93.58 & 92.82 \\
4  & 93.78 & 92.30 \\
6  & 93.78 & 92.82 \\
8  & 94.06 & 92.56 \\
10 & 94.14 & 92.96 \\
12 & \textbf{94.56} & \textbf{93.04} \\
14 & 94.28 & 92.92 \\
16 & 94.34 & 92.92 \\
\bottomrule
\end{tabular}
\vspace{-6pt}
\end{wraptable}

%% file: img/pretrain_eff.tex
\begin{wrapfigure}{r}{0.45\textwidth}
    \centering
    \vspace{-2em}
    \includegraphics[width=0.3\textwidth]{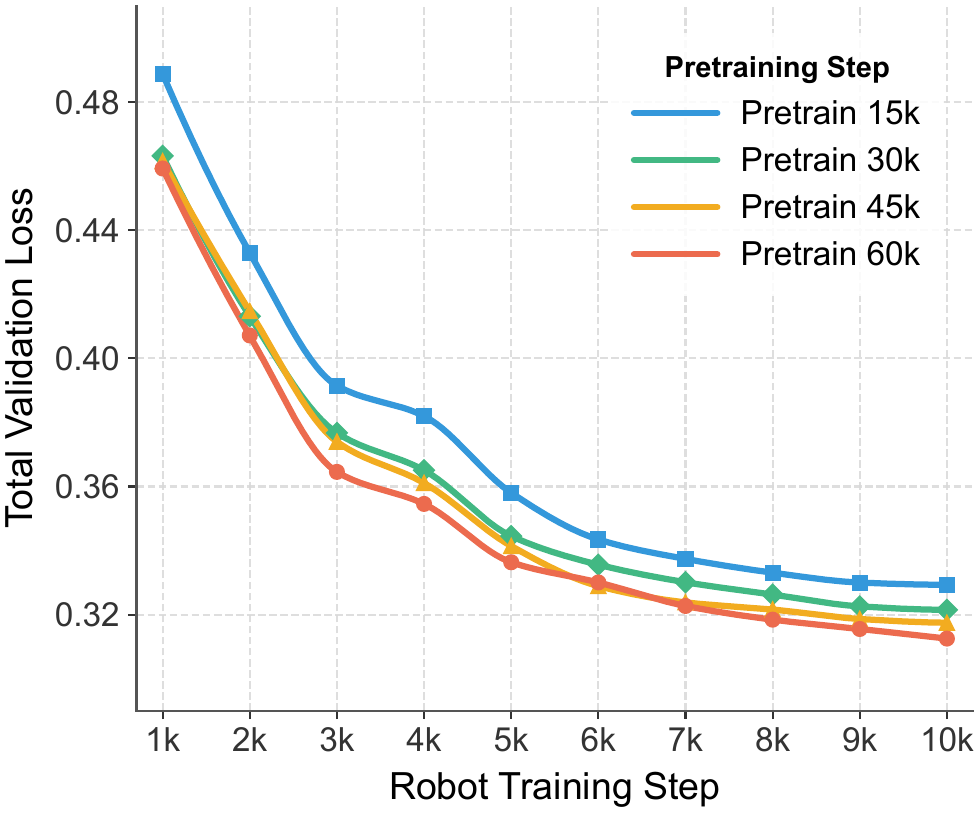}
    \caption{
    \textbf{Egocentric pretraining improves downstream optimization efficiency.}
    }
    \label{fig:ego_pretraining_efficiency}
    \vspace{-10pt}
\end{wrapfigure}

%% file: text/appendix/c_3_full_simulation_results.tex
\subsection{Full Simulation Results}
\input{text/appendix/c_3_2_robocasa_per_task_results}
\input{text/appendix/c_3_1_robotwin_per_task_results}

%% file: text/appendix/c_3_2_robocasa_per_task_results.tex
\subsubsection{RoboCasa--GR1 Per-Task Results}
\label{sec:app_robocasa_full_results}

Table~\ref{tab:robocasa_per_task} reports all 24 tabletop tasks. The FastWAM aggregate is our own reproduction.

\input{tab/i_2_robocasa_per_task}

%% file: tab/i_2_robocasa_per_task.tex
\begin{table}[htbp]
\centering
\caption{\textbf{Full RoboCasa--GR1 tabletop results.} Per-task success rate
(\%). Published per-task baseline entries are transcribed from the
corresponding method papers. $\dagger$ denotes our FastWAM reproduction.}
\label{tab:robocasa_per_task}
\scriptsize
\setlength{\tabcolsep}{2.6pt}
\renewcommand{\arraystretch}{1.02}
\resizebox{\textwidth}{!}{%
\begin{tabular}{lcccccc}
\toprule
Task & GR00T-N1.6 & StarVLA-OFT & DiT4DiT & LDA-1B & FastWAM$^\dagger$ & Ours \\
\midrule
BottleToCabinetClose
    & 51.5 & 30.0 & 48.0 & 76.0 & 76.0 & 66.0 \\
CanToDrawerClose
    & 13.0 & 76.0 & 74.0 & 71.0 & 78.0 & 90.0 \\
CupToDrawerClose
    & 8.5 & 44.0 & 52.0 & 41.0 & 22.0 & 44.0 \\
MilkToMicrowaveClose
    & 14.0 & 44.0 & 50.0 & 52.0 & 68.0 & 58.0 \\
PotatoToMicrowaveClose
    & 41.5 & 32.0 & 36.0 & 41.0 & 46.0 & 38.0 \\
WineToCabinetClose
    & 16.5 & 36.0 & 42.0 & 57.0 & 70.0 & 62.0 \\
FromCuttingboardToBasket
    & 58.0 & 50.0 & 52.0 & 65.0 & 38.0 & 52.0 \\
FromCuttingboardToCardboardbox
    & 46.5 & 40.0 & 48.0 & 69.0 & 48.0 & 48.0 \\
FromCuttingboardToPan
    & 68.5 & 70.0 & 76.0 & 75.0 & 82.0 & 74.0 \\
FromCuttingboardToPot
    & 65.0 & 54.0 & 62.0 & 61.0 & 74.0 & 68.0 \\
FromCuttingboardToTieredbasket
    & 46.5 & 38.0 & 50.0 & 51.0 & 30.0 & 44.0 \\
FromPlacematToBasket
    & 58.5 & 32.0 & 50.0 & 53.0 & 44.0 & 68.0 \\
FromPlacematToBowl
    & 57.5 & 58.0 & 56.0 & 55.0 & 44.0 & 60.0 \\
FromPlacematToPlate
    & 63.0 & 52.0 & 32.0 & 59.0 & 52.0 & 52.0 \\
FromPlacematToTieredshelf
    & 28.5 & 24.0 & 18.0 & 24.0 & 36.0 & 36.0 \\
FromPlateToBowl
    & 57.0 & 60.0 & 56.0 & 53.0 & 44.0 & 68.0 \\
FromPlateToCardboardbox
    & 43.5 & 50.0 & 58.0 & 43.0 & 40.0 & 42.0 \\
FromPlateToPan
    & 51.0 & 66.0 & 68.0 & 55.0 & 42.0 & 50.0 \\
FromPlateToPlate
    & 78.7 & 68.0 & 58.0 & 61.0 & 72.0 & 78.0 \\
FromTrayToCardboardbox
    & 51.5 & 44.0 & 38.0 & 65.0 & 40.0 & 70.0 \\
FromTrayToPlate
    & 71.0 & 56.0 & 56.0 & 63.0 & 68.0 & 62.0 \\
FromTrayToPot
    & 64.5 & 62.0 & 54.0 & 55.0 & 50.0 & 64.0 \\
FromTrayToTieredbasket
    & 57.0 & 54.0 & 46.0 & 51.0 & 56.0 & 44.0 \\
FromTrayToTieredshelf
    & 31.5 & 30.0 & 38.0 & 33.0 & 26.0 & 46.0 \\
\midrule
Average
    & 47.6 & 48.8 & 50.8 & 55.4 & 51.9 & 57.7 \\
\bottomrule
\end{tabular}}
\end{table}

%% file: text/appendix/c_3_1_robotwin_per_task_results.tex
\subsubsection{RoboTwin~2.0 Per-Task Results}
\label{sec:app_robotwin_full_results}

Table~\ref{tab:robotwin_per_task} provides the complete 50-task
clean/randomized breakdown. 

\input{tab/i_1_robotwin_per_task_a}

%% file: tab/i_1_robotwin_per_task_a.tex
\begin{table*}[htbp]
\centering
\caption{\textbf{RoboTwin~2.0 per-task success rates.}
Clean and randomized success rates (\%) across 50 tasks, evaluated
over 100 trials per task under each setting.
}
\label{tab:robotwin_per_task}

\scriptsize
\setlength{\tabcolsep}{0.8pt}
\renewcommand{\arraystretch}{1.12}

\begin{tabular*}{0.99\textwidth}{@{\extracolsep{\fill}}lcccccccccccccc@{}}
\toprule

& \multicolumn{2}{c}{$\pi_{0.5}$}
& \multicolumn{2}{c}{FastWAM}
& \multicolumn{2}{c}{GigaWorld-Policy}
& \multicolumn{2}{c}{LingBot-VA}
& \multicolumn{2}{c}{Motus}
& \multicolumn{2}{c}{LaWAM}
& \multicolumn{2}{c}{Ours} \\

\cmidrule(lr){2-3}
\cmidrule(lr){4-5}
\cmidrule(lr){6-7}
\cmidrule(lr){8-9}
\cmidrule(lr){10-11}
\cmidrule(lr){12-13}
\cmidrule(lr){14-15}

Task
& Clean & Rand.
& Clean & Rand.
& Clean & Rand.
& Clean & Rand.
& Clean & Rand.
& Clean & Rand.
& Clean & Rand. \\

\midrule

Adjust Bottle
& 100 & 99 & 100 & 100 & 100 & 100 & 90 & 94
& 89 & 93 & 100 & 100 & 100 & 100 \\

Beat Block Hammer
& 96 & 93 & 99 & 97 & 86 & 86 & 96 & 98
& 95 & 88 & 90 & 93 & 99 & 98 \\

Blocks Ranking RGB
& 92 & 85 & 100 & 100 & 92 & 96 & 99 & 98
& 99 & 97 & 97 & 100 & 99 & 99 \\

Blocks Ranking Size
& 49 & 26 & 94 & 98 & 44 & 48 & 94 & 96
& 75 & 63 & 93 & 89 & 87 & 82 \\

Click Alarmclock
& 98 & 89 & 100 & 100 & 100 & 100 & 99 & 100
& 100 & 100 & 100 & 100 & 100 & 100 \\

Click Bell
& 99 & 66 & 100 & 100 & 100 & 100 & 100 & 100
& 100 & 100 & 100 & 100 & 100 & 100 \\

Dump Bin Bigbin
& 92 & 97 & 97 & 96 & 92 & 100 & 89 & 96
& 95 & 91 & 97 & 95 & 95 & 98 \\

Grab Roller
& 100 & 100 & 100 & 100 & 100 & 100 & 100 & 100
& 100 & 100 & 100 & 100 & 100 & 99 \\

Handover Block
& 66 & 57 & 95 & 81 & 80 & 80 & 99 & 78
& 86 & 73 & 96 & 87 & 92 & 87 \\

Handover Mic
& 98 & 97 & 99 & 100 & 72 & 72 & 94 & 96
& 78 & 63 & 93 & 98 & 100 & 99 \\

Hanging Mug
& 18 & 17 & 58 & 62 & 16 & 12 & 40 & 28
& 38 & 38 & 51 & 43 & 68 & 60 \\

Lift Pot
& 96 & 85 & 100 & 100 & 98 & 98 & 100 & 99
& 96 & 99 & 100 & 99 & 100 & 100 \\

Move Can Pot
& 51 & 55 & 90 & 88 & 76 & 78 & 94 & 97
& 34 & 74 & 98 & 93 & 98 & 98 \\

Move Pillbottle Pad
& 84 & 61 & 100 & 99 & 90 & 90 & 99 & 99
& 93 & 96 & 97 & 90 & 100 & 100 \\

Move Playingcard Away
& 96 & 84 & 100 & 100 & 78 & 72 & 100 & 99
& 100 & 96 & 100 & 100 & 100 & 100 \\

Move Stapler Pad
& 56 & 42 & 77 & 64 & 92 & 82 & 91 & 79
& 83 & 85 & 94 & 87 & 91 & 76 \\

Open Laptop
& 90 & 96 & 98 & 100 & 96 & 98 & 92 & 94
& 95 & 91 & 100 & 100 & 100 & 100 \\

Open Microwave
& 34 & 77 & 62 & 45 & 74 & 66 & 82 & 86
& 95 & 91 & 41 & 43 & 91 & 76 \\

Pick Diverse Bottles
& 81 & 71 & 80 & 85 & 82 & 70 & 89 & 82
& 90 & 91 & 91 & 88 & 86 & 80 \\

Pick Dual Bottles
& 93 & 63 & 100 & 96 & 86 & 86 & 100 & 99
& 96 & 90 & 100 & 95 & 100 & 99 \\

Place A2B Left
& 87 & 82 & 95 & 93 & 94 & 88 & 97 & 93
& 88 & 79 & 98 & 91 & 98 & 95 \\

Place A2B Right
& 87 & 84 & 93 & 99 & 90 & 92 & 97 & 95
& 91 & 87 & 89 & 94 & 97 & 93 \\

Place Bread Basket
& 77 & 64 & 91 & 93 & 82 & 82 & 97 & 95
& 91 & 94 & 92 & 85 & 100 & 95 \\

Place Bread Skillet
& 85 & 66 & 90 & 93 & 94 & 90 & 95 & 90
& 86 & 83 & 90 & 83 & 92 & 89 \\

Place Burger Fries
& 94 & 87 & 96 & 99 & 98 & 96 & 97 & 95
& 98 & 98 & 93 & 96 & 97 & 97 \\

Place Can Basket
& 62 & 62 & 71 & 69 & 78 & 74 & 81 & 84
& 81 & 76 & 92 & 65 & 79 & 67 \\

Place Cans Plasticbox
& 94 & 84 & 99 & 96 & 100 & 100 & 100 & 99
& 98 & 94 & 100 & 95 & 96 & 99 \\

Place Container Plate
& 99 & 95 & 96 & 100 & 98 & 96 & 99 & 97
& 98 & 99 & 100 & 100 & 98 & 99 \\

Place Dual Shoes
& 75 & 75 & 94 & 88 & 96 & 84 & 94 & 89
& 93 & 87 & 98 & 94 & 91 & 94 \\

Place Empty Cup
& 100 & 99 & 100 & 100 & 90 & 90 & 100 & 100
& 99 & 98 & 99 & 100 & 99 & 99 \\

Place Fan
& 87 & 85 & 96 & 96 & 92 & 94 & 99 & 93
& 91 & 87 & 92 & 93 & 94 & 98 \\

Place Mouse Pad
& 60 & 39 & 83 & 89 & 88 & 90 & 93 & 96
& 66 & 68 & 91 & 84 & 96 & 93 \\

Place Object Basket
& 80 & 76 & 89 & 88 & 90 & 92 & 91 & 88
& 81 & 87 & 92 & 90 & 87 & 84 \\

Place Object Scale
& 86 & 80 & 90 & 97 & 88 & 80 & 96 & 95
& 88 & 85 & 95 & 88 & 96 & 96 \\

Place Object Stand
& 91 & 85 & 90 & 94 & 100 & 98 & 99 & 96
& 98 & 97 & 92 & 93 & 99 & 96 \\

Place Phone Stand
& 81 & 81 & 97 & 99 & 82 & 72 & 97 & 97
& 87 & 86 & 93 & 94 & 96 & 97 \\

Place Shoe
& 92 & 93 & 96 & 99 & 98 & 96 & 98 & 98
& 99 & 97 & 100 & 100 & 94 & 97 \\

Press Stapler
& 87 & 83 & 90 & 97 & 96 & 96 & 85 & 82
& 93 & 98 & 98 & 97 & 87 & 93 \\

Put Bottles Dustbin
& 84 & 79 & 95 & 90 & 72 & 70 & 87 & 91
& 81 & 79 & 94 & 92 & 96 & 89 \\

Put Object Cabinet
& 80 & 79 & 94 & 89 & 74 & 74 & 85 & 87
& 88 & 71 & 90 & 82 & 96 & 93 \\

Rotate QRcode
& 89 & 87 & 93 & 89 & 90 & 84 & 96 & 91
& 89 & 73 & 94 & 89 & 94 & 90 \\

Scan Object
& 72 & 65 & 89 & 92 & 60 & 64 & 96 & 91
& 67 & 66 & 96 & 90 & 95 & 92 \\

Shake Bottle
& 99 & 97 & 100 & 100 & 100 & 100 & 100 & 97
& 100 & 97 & 100 & 100 & 100 & 99 \\

Shake Bottle Horizontally
& 99 & 99 & 100 & 100 & 100 & 98 & 100 & 99
& 100 & 98 & 100 & 100 & 100 & 99 \\

Stack Blocks Three
& 91 & 76 & 95 & 97 & 70 & 78 & 99 & 98
& 91 & 95 & 90 & 75 & 95 & 98 \\

Stack Blocks Two
& 97 & 100 & 100 & 100 & 100 & 94 & 100 & 98
& 100 & 98 & 100 & 97 & 100 & 100 \\

Stack Bowls Three
& 77 & 71 & 80 & 81 & 70 & 72 & 86 & 83
& 79 & 87 & 90 & 80 & 87 & 87 \\

Stack Bowls Two
& 95 & 96 & 92 & 98 & 96 & 92 & 94 & 98
& 98 & 98 & 100 & 99 & 93 & 97 \\

Stamp Seal
& 79 & 55 & 90 & 94 & 96 & 98 & 96 & 97
& 93 & 92 & 89 & 88 & 95 & 97 \\

Turn Switch
& 62 & 54 & 61 & 59 & 82 & 84 & 44 & 45
& 84 & 78 & 47 & 56 & 75 & 79 \\

\midrule

\textbf{Average}
& \textbf{82.74} & \textbf{76.76}
& \textbf{91.88} & \textbf{91.78}
& \textbf{86.36} & \textbf{85.04}
& \textbf{92.90} & \textbf{91.50}
& \textbf{88.66} & \textbf{87.02}
& \textbf{92.64} & \textbf{89.80}
& \textbf{94.56} & \textbf{93.04} \\

\bottomrule
\end{tabular*}

\end{table*}

%% file: text/appendix/c_4_full_real_gen.tex
\subsection{Full Real-World Generalization Results}
Here in this section, we report the full real-world generalization results. All aggregate metrics are computed from unrounded underlying values; rounding is applied only to the final reported number.

\begin{table*}[htbp]
\centering
\caption{
Detailed real-world generalization results.
We report success rate (SR, \%) and average task score under different
visual and spatial generalization settings.
}
\label{tab:real_world_generalization}

\resizebox{\textwidth}{!}{
\begin{tabular}{ll|cc|cc|cc|cc}
\toprule

\multirow{2}{*}{Task}
& \multirow{2}{*}{Generalization Setting}
& \multicolumn{2}{c|}{FastWAM}
& \multicolumn{2}{c|}{GR00T-N1.7}
& \multicolumn{2}{c|}{$\pi_{0.5}$}
& \multicolumn{2}{c}{Ours}
\\

&
& SR (\%)
& Score
& SR (\%)
& Score
& SR (\%)
& Score
& SR (\%)
& Score
\\

\midrule

\multirow{4}{*}{Battery Insertion}
& Background
& 50.0 & 7.30
& 40.0 & 6.00
& 70.0 & 8.75
& 60.0 & 8.50
\\

& Appearance
& 10.0 & 3.60
& 5.0 & 3.00
& 35.0 & 7.00
& 40.0 & 8.00
\\

& Spatial Layout
& 20.0 & 4.50
& 15.0 & 4.00
& 60.0 & 8.50
& 50.0 & 8.00
\\

& Avg.
& 26.7 & 5.13
& 20.0 & 4.33
& 55.0 & 8.08
& 50.0 & 8.17
\\

\midrule

\multirow{4}{*}{Battery Assembly}
& Background
& 10.0 & 3.00
& 5.0 & 2.50
& 10.0 & 4.60
& 10.0 & 5.00
\\

& Appearance
& 30.0 & 4.00
& 15.0 & 3.00
& 40.0 & 5.25
& 40.0 & 5.00
\\

& Spatial Layout
& 20.0 & 4.50
& 20.0 & 4.25
& 50.0 & 6.00
& 45.0 & 5.25
\\

& Avg.
& 20.0 & 3.83
& 13.3 & 3.25
& 33.3 & 5.28
& 31.7 & 5.08
\\

\midrule

\multirow{5}{*}{Pack Objects}
& Background
& 80.0 & 8.00
& 20.0 & 5.00
& 85.0 & 8.95
& 80.0 & 8.45
\\

& Appearance
& 30.0 & 7.20
& 20.0 & 3.50
& 80.0 & 9.05
& 80.0 & 9.05
\\

& Geometry
& 60.0 & 6.60
& 25.0 & 5.05
& 75.0 & 8.55
& 85.0 & 8.95
\\

& Spatial Layout
& 50.0 & 5.15
& 5.0 & 2.90
& 80.0 & 8.75
& 70.0 & 8.05
\\

& Avg.
& 55.0 & 6.74
& 17.5 & 4.11
& 80.0 & 8.83
& 78.8 & 8.63
\\

\midrule

\multirow{5}{*}{Stack Cups}
& Background
& 10.0 & 6.00
& 5.0 & 3.50
& 0.0 & 5.00
& 10.0 & 7.00
\\

& Appearance
& 25.0 & 8.00
& 15.0 & 7.50
& 30.0 & 8.13
& 40.0 & 8.50
\\

& Geometry
& 30.0 & 7.00
& 0.0 & 4.80
& 40.0 & 8.15
& 50.0 & 8.63
\\

& Spatial Layout
& 20.0 & 5.50
& 40.0 & 8.50
& 60.0 & 8.50
& 50.0 & 8.00
\\

& Avg.
& 21.3 & 6.63
& 15.0 & 6.08
& 32.5 & 7.44
& 37.5 & 8.03
\\

\bottomrule
\end{tabular}
}

\end{table*}

%% file: text/appendix/case_study.tex
\subsection{Case Study}
In this section, we present qualitative real-world examples of WING
under both the base and generalization settings.

\input{img/case_study}

%% file: img/case_study.tex
\begin{figure}[htbp]
    \centering
    \includegraphics[width=\linewidth]{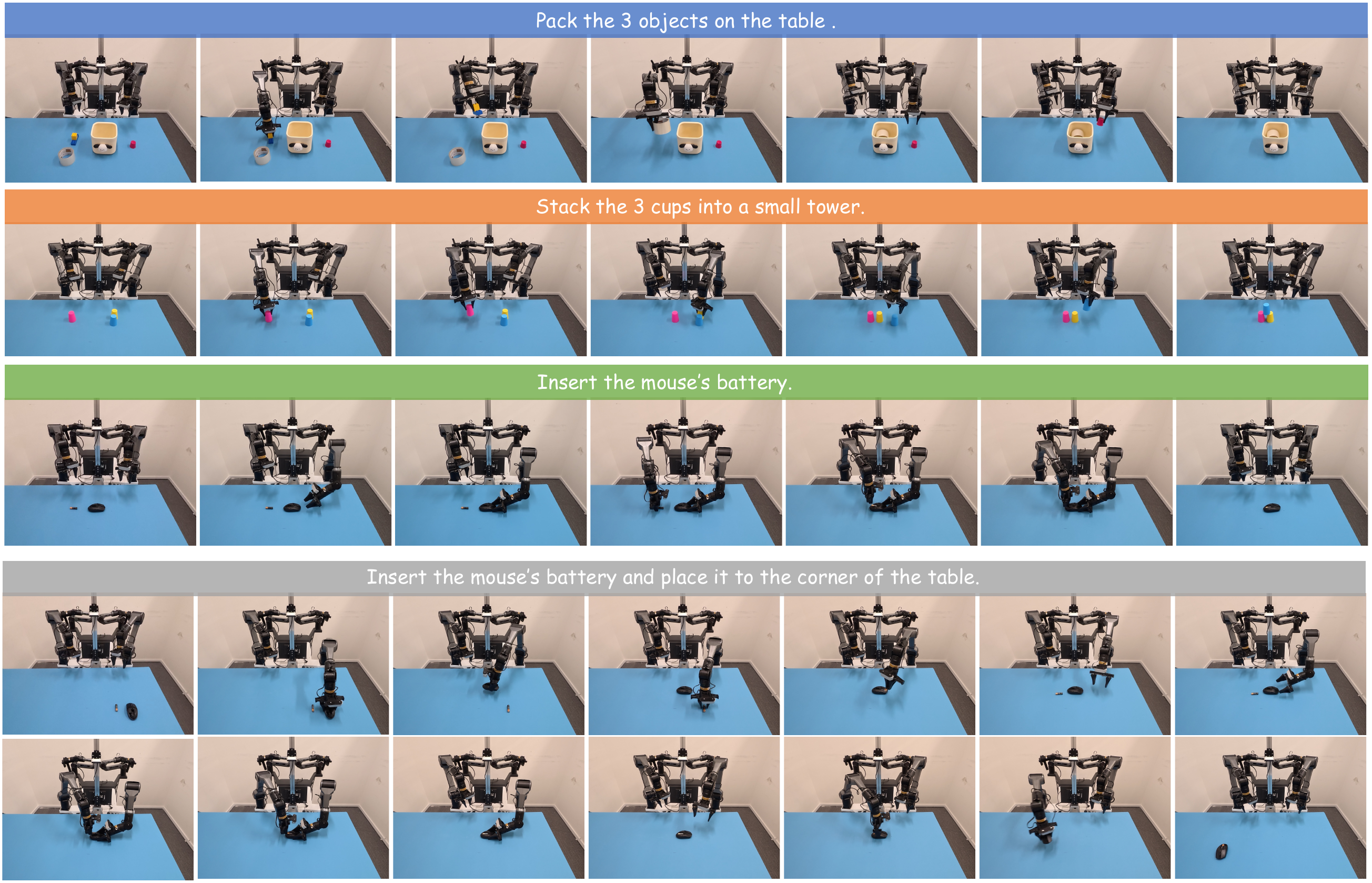}
    \caption{
    Qualitative results under the standard setting.
    From top to bottom, we illustrate four base tasks:
    Pack Objects, Stack Cups, Battery Insertion, and Battery Assembly.
    }
    \label{fig:case_standard}
\end{figure}

\begin{figure}[htbp]
    \centering
    \includegraphics[width=\linewidth]{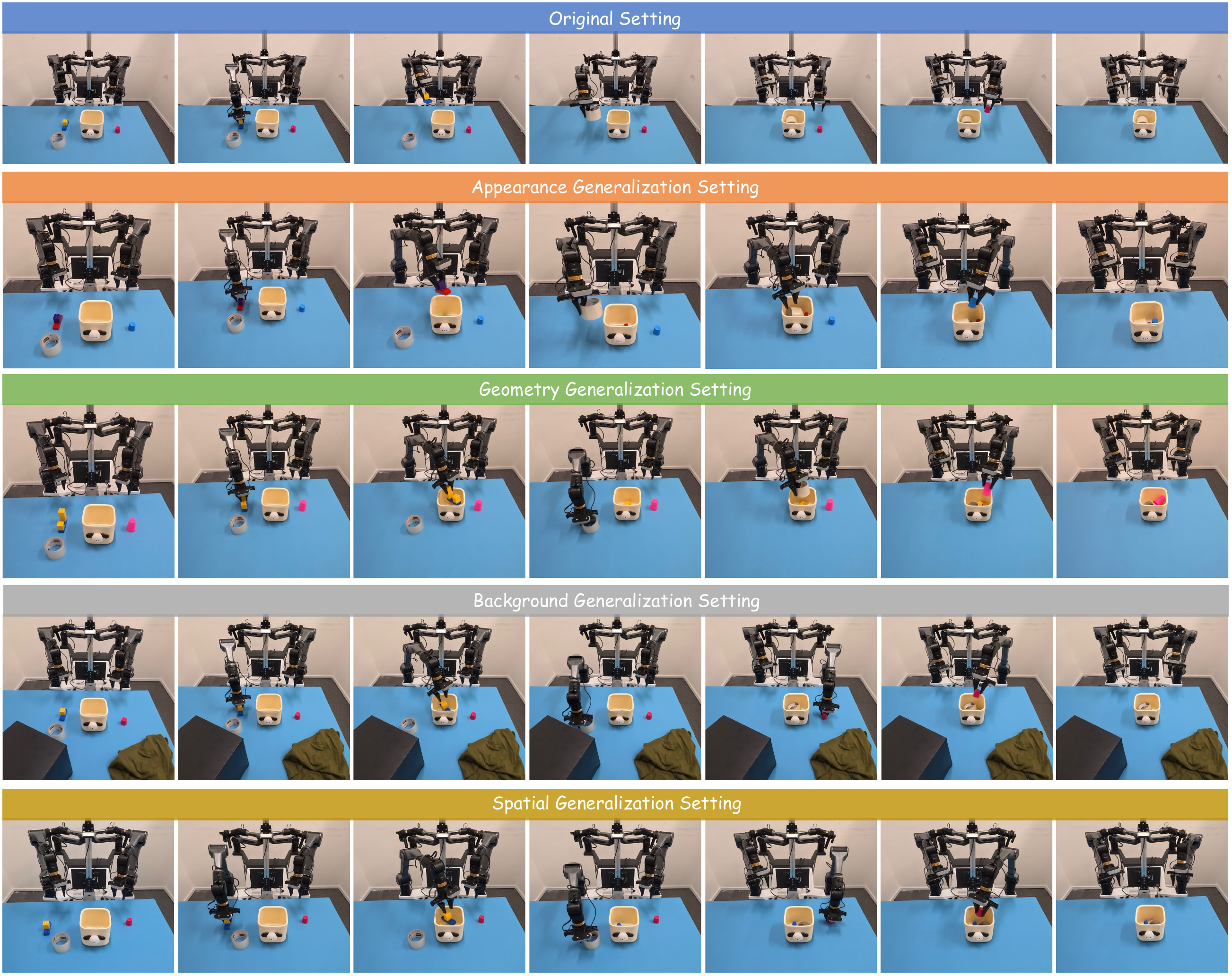}
    \caption{
    Qualitative results for Pack Objects under generalization settings.
    }
    \label{fig:case_pack_generalization}
\end{figure}

\begin{figure}[htbp]
    \centering
    \includegraphics[width=\linewidth]{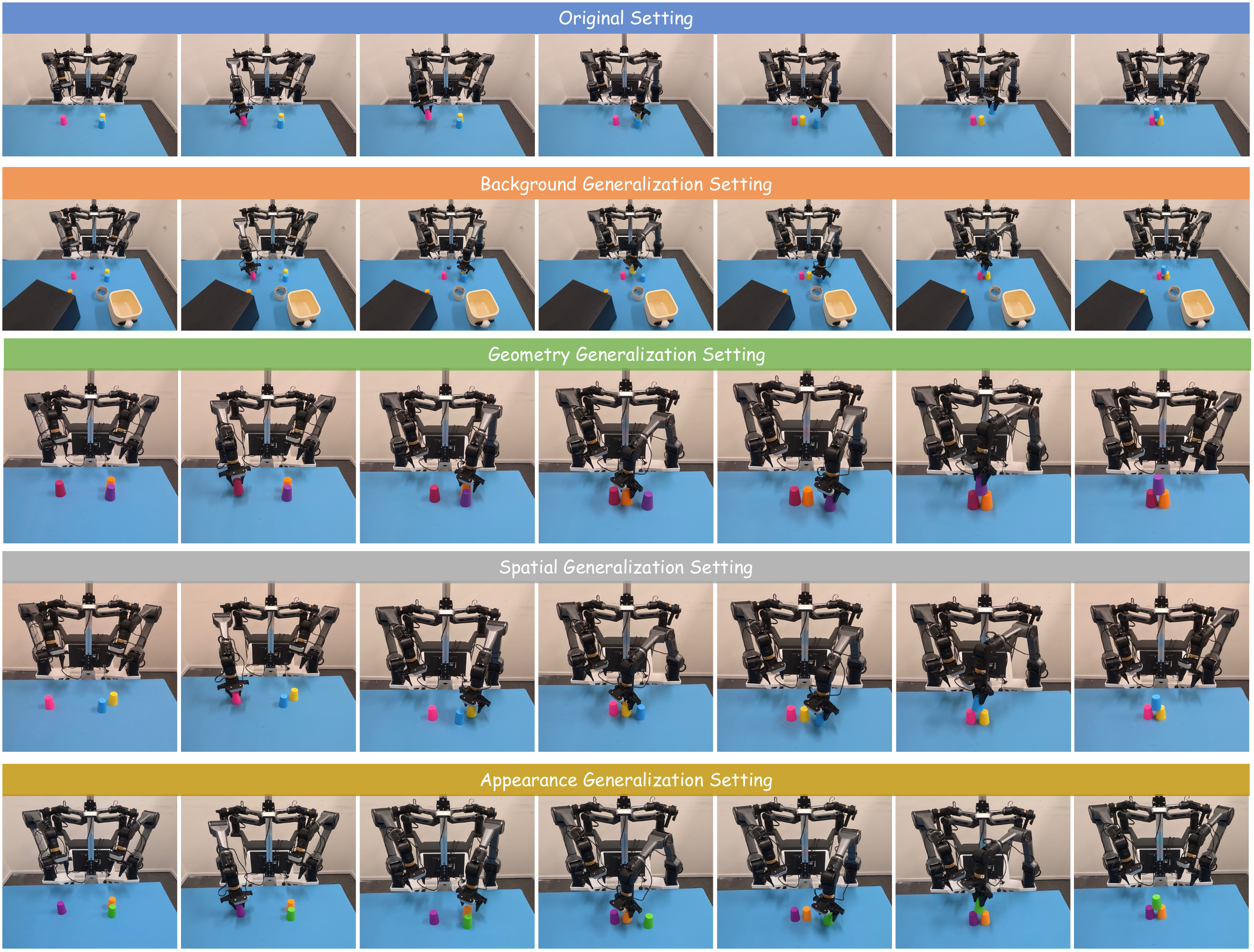}
    \caption{
    Qualitative results for Stack Cups under generalization settings.
    }
    \label{fig:case_stack_generalization}
\end{figure}

\begin{figure}[htbp]
    \centering
    \includegraphics[width=\linewidth]{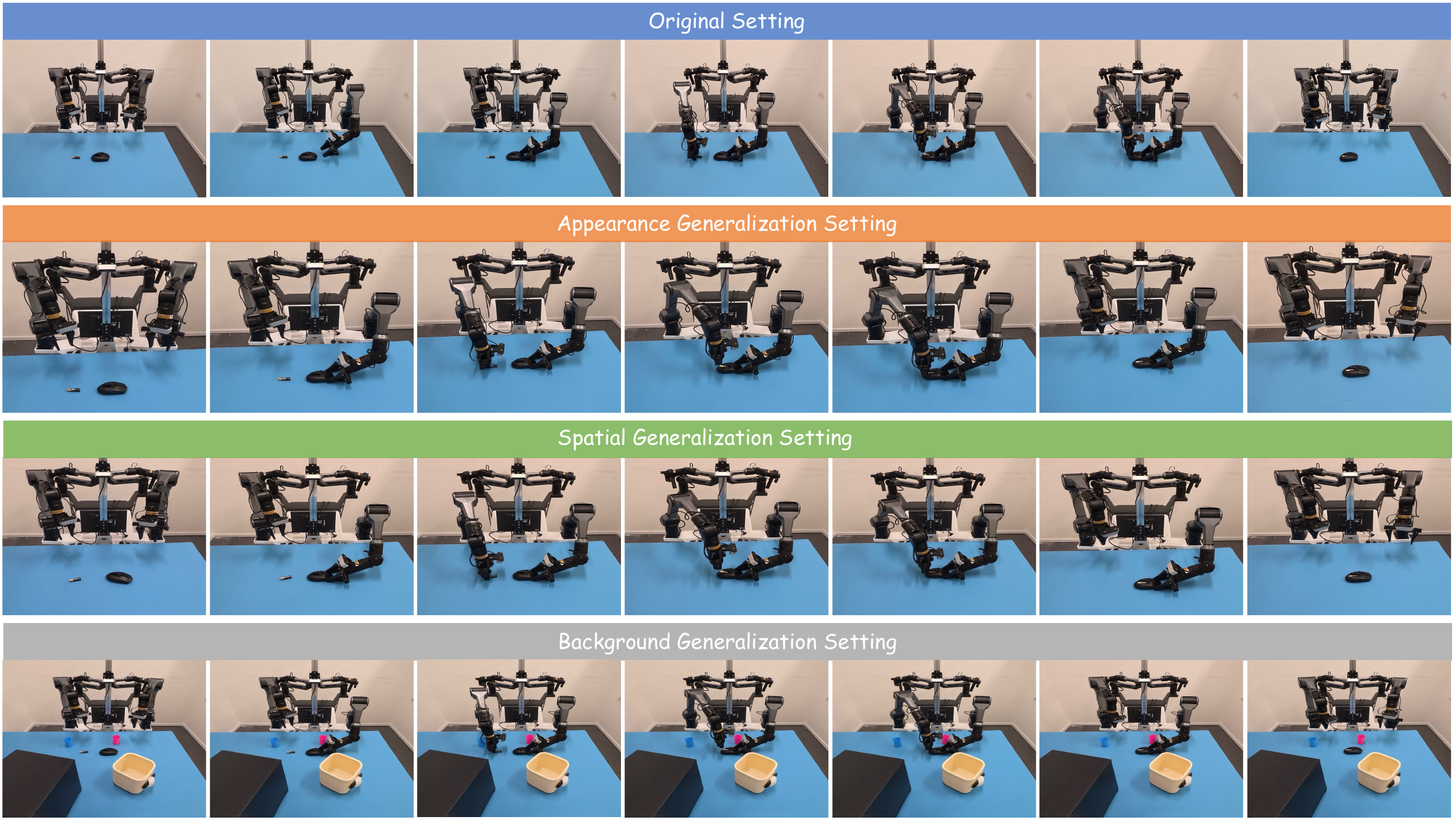}
    \caption{
    Qualitative results for Battery Insertion under generalization settings.
    }
    \label{fig:case_insertion_generalization}
\end{figure}

\begin{figure}[htbp]
    \centering
    \includegraphics[width=\linewidth]{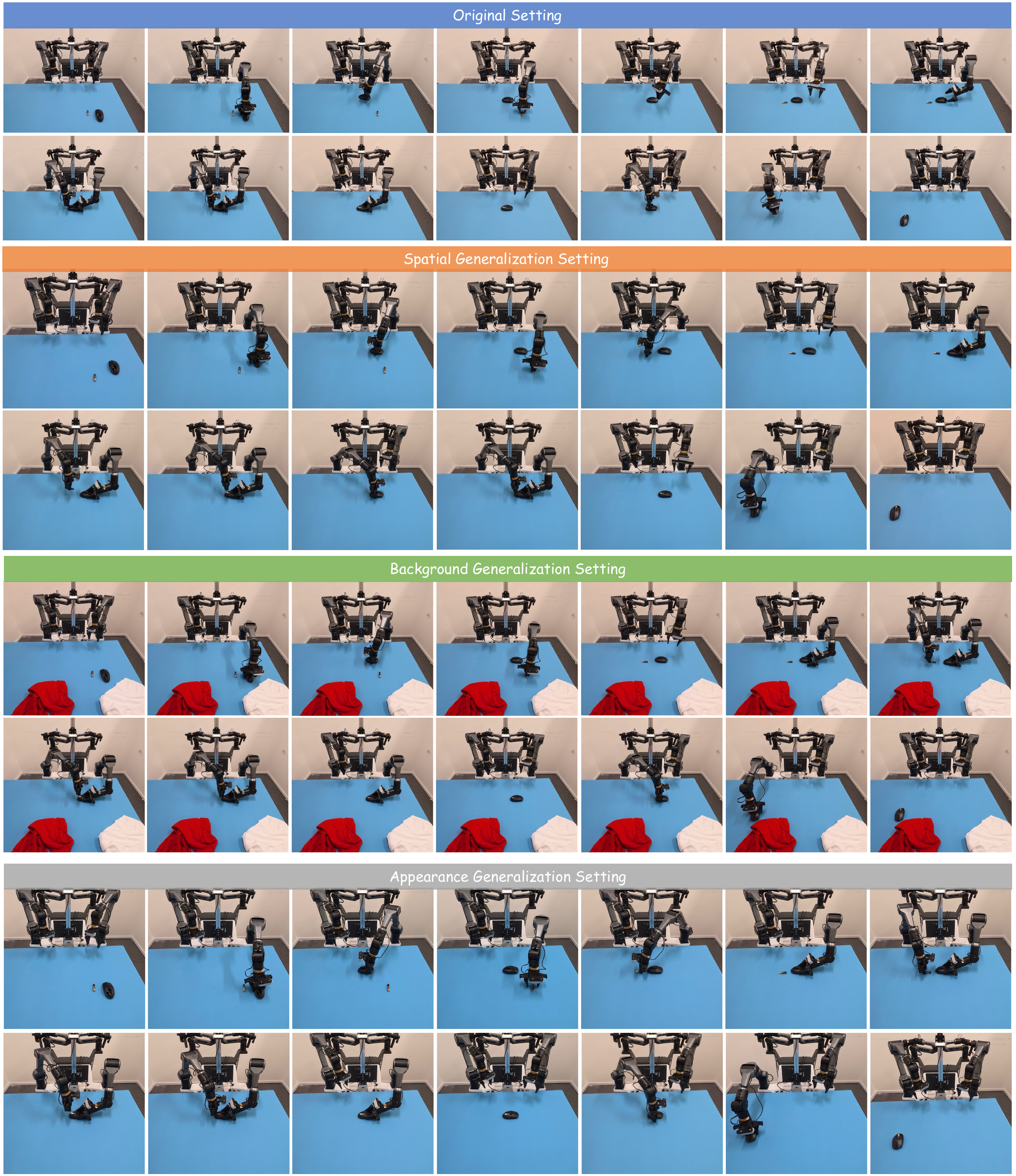}
    \caption{
    Qualitative results for Battery Assembly under generalization settings.
    }
    \label{fig:case_assembly_generalization}
\end{figure}